\documentclass[journal]{IEEEtran}
\usepackage{amsmath,amsfonts}
\usepackage{algorithmic}
\usepackage{algorithm}
\usepackage{array}
\usepackage{textcomp}
\usepackage{stfloats}
\usepackage{url}
\usepackage[colorlinks=false,]{hyperref}
\usepackage{verbatim}
\usepackage{graphicx}
\usepackage{cite}

\usepackage{amssymb}
\usepackage{amsthm}
\usepackage{enumitem}
\usepackage{comment}
\usepackage{thmtools}
\usepackage{import}   %
\usepackage{caption}
\usepackage{subcaption}
\usepackage{siunitx}
\usepackage{color,soul}
\usepackage[capitalize,nameinlink]{cleveref}
\usepackage{svg}
\declaretheorem[name=Theorem,numberwithin=section]{theorem}
\declaretheorem[name=Proposition,numberwithin=section,sibling=theorem]{proposition}
\declaretheorem[name=Lemma,numberwithin=section,sibling=theorem]{lemma}
\declaretheorem[name=Corollary,numberwithin=section,sibling=theorem]{corollary}

\declaretheorem[name=Remark,numberwithin=section,sibling=theorem,qed=\qedsymbol]{remark}
\declaretheorem[name=Definition,numberwithin=section,sibling=theorem]{definition}
\declaretheorem[name=Features,numbered=no]{featuresenv}
\declaretheorem[name=Problem,numberwithin=section]{problemenv}

\newlist{propertylist}{enumerate}{1}
\newlist{feature}{enumerate}{1}
\setlist[propertylist]{label=\roman{propertylisti}, ref=\thedefinition.\roman{propertylisti},noitemsep}
\setlist[feature]{label=\textbf{F\arabic*}, ref=\textbf{F\arabic*},noitemsep}
\crefname{featurei}{Feature}{Features}

\renewcommand{\thedefinition}{\arabic{section}.\arabic{definition}}

\DeclareMathOperator{\Log}{Log}
\DeclareMathOperator{\tr}{tr}

\DeclareMathOperator{\Lop}{L}

\DeclareRobustCommand{\SL}[1][]{%
  \!\;\mathcal{S}%
  \if\relax\detokenize{#1}\relax%
  \else%
    \!\left(#1\right)%
  \fi%
}
\DeclareRobustCommand{\invSL}[1][]{%
  \!\;\mathcal{S}^{-1}%
  \if\relax\detokenize{#1}\relax%
  \else%
    \!\left(#1\right)%
  \fi%
}
\DeclareRobustCommand{\SR}[1][]{%
  \!\;\mathcal{S}_R%
  \if\relax\detokenize{#1}\relax%
  \else%
    \!\left(#1\right)%
  \fi%
}

\begin{document}

\title{Vector Fields for Path Following on Lie Groups with Application in Robot Control}

\author{Felipe Bartelt, Luciano C. A. Pimenta, Weijia Yao, Vinicius M. Gon\c{c}alves
\thanks{}%
}

\markboth{Journal of \LaTeX\ Class Files,~Vol.~14, No.~8, August~2021}%
{Shell \MakeLowercase{\textit{et al.}}: A Sample Article Using IEEEtran.cls for IEEE Journals}

\maketitle

\begin{abstract}
Many robotic systems allow independent control of position and orientation (pose), including omnidirectional aerial vehicles, underwater robots, and manipulator end-effectors. In many applications, these systems must follow a continuous sequence of poses, leading to either trajectory-tracking or path following formulations. Compared to trajectory-tracking, path following offers important practical advantages. In particular, we focus on the problem of path following on Lie groups. Considering the robots as rigid bodies moving in the 3D space, this path-following problem can be posed as a problem of designing guiding vector fields on the matrix Lie group $\mathrm{SE}(3)$. In this paper, we develop a general vector-field framework for path following on connected matrix Lie groups, of which $\mathrm{SE}(3)$ is a prominent special case. The proposed vector field guarantees convergence to a desired parametric curve from almost all initial conditions while ensuring continuous motion along the path. Furthermore, another interesting feature is that, as opposed to previous works, the control input is ``minimal'' in terms of representation and closer to the engineering application (e.g., the body twist in the case $\mathrm{SE}(3)$). After establishing the general case, the framework is then specialized to $\mathrm{SE}(3)$, of special interest in robotics, yielding an efficient algorithm suitable for real-time robotic control. Experiments with a robotic manipulator tracking complex pose paths demonstrate the effectiveness of the approach. An open-source implementation is also provided.
\end{abstract}

\begin{IEEEkeywords}
 	Guiding Vector Fields, Motion Control, Motion Control of Manipulators, Autonomous Vehicle Navigation
\end{IEEEkeywords}

\section{Introduction}

\IEEEPARstart{R}{obotic} platforms such as those shown in \Cref{fig:omnidirectionaldrone} increasingly need to execute complex, coordinated motions that demand the control of position and orientation (pose) independently. For example, omnidirectional drones, tilt-rotor hexacopters \cite{kamel2018voliro,HamandiOmni,aboudorra2023modelling}, and omnidirectional underwater vehicles \cite{Sha2025,Wang2026,Jin2015} are fully actuated platforms with six degrees of freedom that may need to follow arbitrary paths in the pose space for a variety of task, like performing inspection in cluttered spaces. Similarly,  the end-effector of a manipulator may need to execute a complex motion in the pose space. In all these scenarios, the robot (in the manipulators' case,  the end-effector) can be seen as a rigid body in the 3D space whose configuration evolves not in Euclidean space but on a curved configuration manifold: the group of rigid-body
motions, $\text{SE}(3)$.
\begin{figure}[t]
    \centering
    \includegraphics[width=.8\columnwidth]{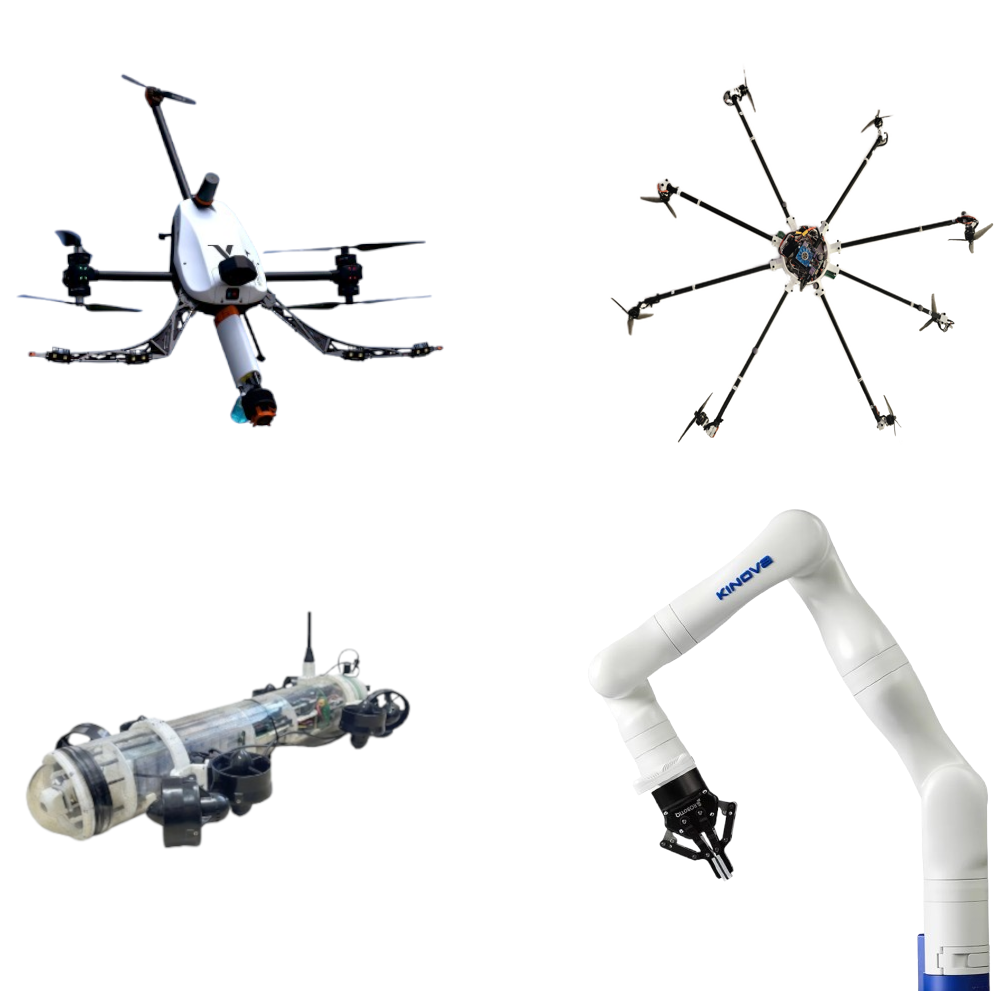}
    \caption{Platforms that benefit from guiding vector fields on $\text{SE}(3)$.
        \textbf{Top left:} a Voliro omnidirectional drone (image from \protect\url{https://voliro.com}).
        \textbf{Top right:} the octorotor design from \cite{HamandiOmni}.
        \textbf{Bottom left:} an omnidirectional underwater vehicle \cite{Sha2025}.
        \textbf{Bottom right:} a Kinova Gen3 manipulator (image from \protect\url{https://www.kinovarobotics.com}).
        All these systems accept six-dimensional velocity commands (twists).}
    \label{fig:omnidirectionaldrone}
\end{figure}

The problem of tracking this continuous sequence of poses can be seen as either a \emph{path following} or \emph{trajectory tracking} problem on the Lie group $\text{SE}(3)$. Path following---guiding a system to a geometric curve without a pre-assigned time parametrization---offers some advantages over trajectory tracking  in many robotic applications (see e.g., \cite{aguiar2008performance,yao2021singularity}). Decoupling the spatial task from temporal specification offers several practical benefits: improved robustness to disturbances, smoother convergence to the path, reduced control effort, and less risk of actuator saturation \cite{Rubi2020}.

In the Euclidean case (i.e., position only), which is the Lie group $\mathbb{R}^n$, guiding vector fields have emerged as an elegant answer to the problem of path following, offering a unified framework that integrates path planning, path following, and control \cite{goncalves2010vector,Rezende2022,yao2021singularity,Gao2022,nunes2023quadcopter,yao2022topological,Chen2025,Yao2023}. In \cite{goncalves2010vector}, a vector field strategy is proposed for robot navigation along curves in the $n$-dimensional Euclidean space. However, this approach depends on implicitly defining target curves as intersections of zero-level surfaces. This method was later improved in \cite{Rezende2022} by using a parametric curve representation on $\mathbb{R}^n$, more convenient for practical implementation.

We build upon that parametric methodology of \cite{Rezende2022} and extend it from the Euclidean space $\mathbb{R}^n$ to \emph{connected matrix Lie groups}. This class includes $\mathbb{R}^n$ and additionally covers configuration spaces that are of great practical interest such as $\text{SO}(3)$ and $\text{SE}(3)$, as well as less-known but also relevant in robotics like the special Galilean group $\text{SGal}(3)$ \cite{kelly2024making,mahony2025galilean}. Our formulation preserves the core convergence-and-traversal structure of \cite{Rezende2022} while now operating intrinsically on the group structure.

The generality of our derivation places it in the landscape of vector-field methods on manifolds and Lie groups. Vector fields on Lie groups have been exploited for learning \cite{Lin2009,urain2022learning}, multi-agent synchronization \cite{Mccarthy2020}, robust attraction to a point \cite{Akhtar2022}, and attitude tracking \cite{Wang2022}. Closest to our work, \cite{yao2022topological} proposes a vector field for curve tracking on a Riemannian manifold embedded in $\mathbb{R}^n$.  Although Riemannian manifolds are more general than Lie groups, our specialization yields a more practical framework in this setting. In our case,  we exploit the Lie group structure to generate control inputs that lie in the Lie algebra, whose dimension $m$ is the intrinsic degrees of freedom of the group (e.g., $3$ for $\text{SO}(3)$). This contrasts with methods that embed the manifold in a higher-dimensional Euclidean space and then project the control back onto the tangent space, (e.g., \cite{yao2022topological}) yielding control inputs in the embedded dimension (e.g., $9$ for $\text{SO}(3)$). Thus, our approach gives control inputs that not only have a smaller dimension, but are also more pratical. For example, in the case of $\text{SE}(3)$, the control input of our controller is (with the appropriate choice of basis for the Lie Algebra), the robots' \emph{twist} (linear and angular velocity vectors). Unlike \cite{yao2022topological}, which defines the target curve via level-set intersections, we adopt the parametric representation used in \cite{Rezende2022}, as it is typically more convenient to specify complex curves in practice.

To extend the vector field construction to general matrix Lie groups, we distilled the essential geometric ingredients that were implicit in \cite{Rezende2022} and become explicit only when the problem is viewed from a broader, group-theoretic perspective. Specifically, the core requirement is a distance function between two group elements---Euclidean distance in \cite{Rezende2022}---that satisfies three properties: \emph{left-invariance} (\Cref{def:distance-left-invariant}), \emph{chainability} (\Cref{def:chainable-distance}), and \emph{local linearity} (\Cref{def:locallylinear}). Once identified, these properties also extend the reach of the method within Euclidean space itself: for instance, any $\ell_p$ norm with $p \geq 2$ now qualifies as a valid distance metric.

We also introduce two elementary operators: the $\Lop$ operator (\Cref{def:Loperator}), analogous to the gradient operator $\nabla$ on a Lie group, and the $\Xi$ operator (\Cref{def:Xioperator}), which extracts the Lie algebra velocity vector and plays a role analogous to the time derivative $\frac{d}{dt}$ on a Lie group. These operators allow us to formulate the controller in a more mathematically concise manner while also making its structure closely resembles that of \cite{Rezende2022}, thereby inheriting some of the intuition behind that work.

The main contributions of this paper are: (1) we extend the vector field strategy from Euclidean space to connected matrix Lie groups, resulting in a non-redundant control input; (2) while providing formal proofs, we derive properties of the distance function that ensure the desired features of the vector field; and (3) we provide an efficient algorithm for implementing this strategy specifically on $\text{SE}(3)$. Although our vector field derivation applies to any connected matrix Lie group, it relies on a distance metric (satisfying the three aforementioned key properties) that we provide explicitly only for what we call \emph{special exponential Lie groups}. This type of group is prevalent in engineering applications (e.g., $\mathbb{R}^n$, $\text{SO}(n)$, $\text{SE}(n)$, $\text{SGal}(3)$).

We also present real-world experiments with a manipulator. A Python implementation of the $\text{SE}(3)$ code, accompanied by a detailed explanation, is available at \url{https://github.com/fbartelt/SE3-example}. 

The remainder of this paper is organized as follows. \cref{sec:preliminaries} introduces the necessary Lie group background and defines the operators used throughout. \cref{sec:vf-lie-group} formulates the guiding vector field problem on matrix Lie groups and establishes convergence guarantees. \cref{sec:simulation} presents experimental results with a robotic manipulator. \cref{sec:conclusion} concludes the paper and discusses future directions.

\section{Preliminaries}\label{sec:preliminaries}
\subsection{Mathematical notation}
We use $\mathbf{I}$ to denote the identity matrix of appropriate dimension, and $\mathbf{e}_k$ to represent the $k^{th}$ column of this identity matrix. The symbol $\mathbf{0}$ is used to denote a zero matrix of appropriate size. We occasionally suppress functional dependencies when the context makes them evident. The symbol $\mathbb{R}_+$ is used to denote the non-negative real numbers. The set of $s$ dimensional vectors $\mathbb{R}^s$ should be considered as a set of $s$ dimensional \emph{column} vectors, i.e., $\mathbb{R}^{s} = \mathbb{R}^{s \times 1}$. The symbol $\exp$ is used for the matrix exponential.

\subsection{Vector field review in Euclidean space}\label{sec:adriano-review}
For clarity, we revisit the vector field strategy presented in \cite{Rezende2022}. Since our work extends this previous approach, this review will help establish direct connections between both works and highlight the core aspects of our contributions. The primary goal of the authors in \cite{Rezende2022} is to develop an artificial $n$-dimensional vector field that guides system trajectories toward a predefined curve and ensures traversal along it. By traversal, we mean that once the system state reaches the curve, it stays there with non-zero velocity. A key element of this formulation is the definition of a distance function with essential properties. As their work focuses solely on Euclidean space, they adopt the Euclidean distance for the vector field computation, which is derived using a parametric representation of the curve.

We summarize the main steps for constructing this vector field, emphasizing the most critical properties. Although the original paper addresses time-varying curves, we limit our discussion to the static case of the methodology. The authors consider a system modeled as a single integrator $\dot{\mathbf{h}} = \boldsymbol{\xi}$, where $\mathbf{h}\in\mathbb{R}^m$ represents the system state, and $\boldsymbol{\xi}\in\mathbb{R}^m$ denotes the system input. The objective is to compute a vector field $\Psi:\mathbb{R}^m\to\mathbb{R}^m$ such that, if the system input is equal to the vector field, the system trajectories converge to and follow the target curve $\mathcal{C}$, for which a parametrization is given by $\mathbf{h}_d(s)$. Despite relying on a parametric representation of the curve, it is important to note that the resulting computations are independent of the specific parametrization chosen.

The authors in \cite{Rezende2022} define their distance function $D$ as the Euclidean distance between the system's current state and the nearest point on the curve, i.e., $D(\mathbf{h}) \triangleq \min_{s}\|\mathbf{h}- \mathbf{h}_d(s)\|$. In this context, we denote by $s^*$ the optimal parameter such that $\mathbf{h}_d(s^*(\mathbf{h}))$ is the closest point on the curve to the current state.
\begin{figure}
    \centering
    \includeinkscape[pretex=\tiny,width=.8\columnwidth]{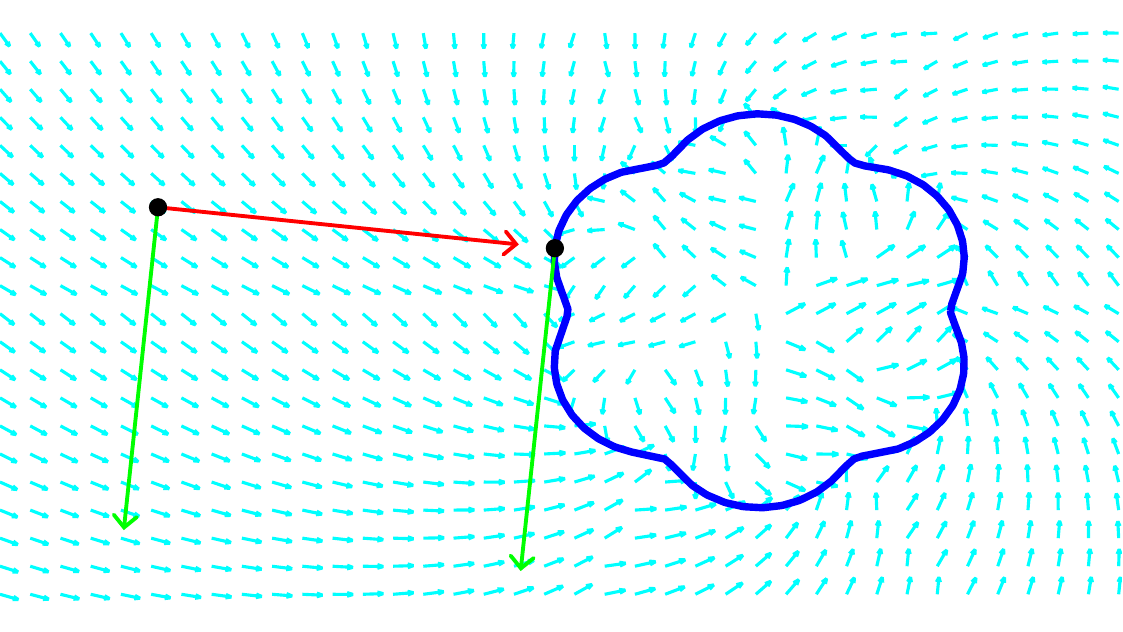}
    \caption{Example showing the vector field and the components for a point $\mathbf{h}\in \mathbb{R}^m$ and curve $\mathcal{C}$.}
    \label{fig:vector-field-adriano}
\end{figure}

Next, the authors in \cite{Rezende2022} introduce two components of their vector field: the \emph{normal} component $\boldsymbol{\xi}_{N}$ (named because it is normal to the curve \(\mathcal{C}\)), responsible for convergence, and the \emph{tangent} component $\boldsymbol{\xi}_{T}$ (named because it is tangent to the curve \(\mathcal{C}\)), which ensures traversal along the curve. The resulting expression for the vector field is $\Psi(\mathbf{h}) = k_N(\mathbf{h})\boldsymbol{\xi}_{N}(\mathbf{h}) + k_T(\mathbf{h})\boldsymbol{\xi}_{T}(\mathbf{h})$, where $k_N$ and $k_T$ are state-dependent gains that weight the contributions of the normal and tangent components. This vector field strategy is illustrated in \Cref{fig:vector-field-adriano}. The normal component, $\boldsymbol{\xi}_{N}$, is naturally taken as the negative transpose\footnote{For consistency, we adopt the gradient of a scalar function $\nabla f$ as a row vector.} of the gradient of the distance function, $\boldsymbol{\xi}_{N} = -(\nabla D)^\top$, due to the use of Euclidean distance.

There is a key aspect of the normal component that is crucial for the convergence proof using Lyapunov stability theory: the fact that the time derivative of the distance function can be expressed as $\dot{D}=-\boldsymbol{\xi}_{N}^{\top}{\boldsymbol{\xi}}$. We emphasize the significance of this feature, as it will play an important role in our extension. In our approach, the normal component is similarly constructed by identifying the term that arises when differentiating the distance function.

Next, we address the tangent component. This component is solely related to the target curve and is defined as the tangent vector at the nearest point on the curve, i.e., $\boldsymbol{\xi}_{T}(\mathbf{h}) = \frac{d}{ds}\mathbf{h}_d(s)|_{s=s^*(\mathbf{h})}$. A noteworthy property of both components is that they are orthogonal to each other, i.e., $\boldsymbol{\xi}_{N}^{\top}{\boldsymbol{\xi}_{T}}=0$, which is essential in the proof of convergence for this algorithm.

In the Lyapunov stability proof, the final result shows that the time derivative of $D$ is negative semidefinite. The proof is then completed using two other essential properties: the fact that the distance function has no local minima outside the curve, and the fact that the gradient of this function never vanishes. With these features, the authors demonstrate that if the system trajectories follow the vector field, the system will converge to and traverse along a predefined curve. A summary of these key features is as follows:
\begin{featuresenv}
\hfill
\begin{feature}
    \item The time derivative of the distance function is the negative of the dot product between the so-called \emph{normal} component and the system control input; \label{feat:adriano-time-derivative-lyapunov-normal-comp}
    \item The \emph{normal} and \emph{tangent} components are orthogonal to each other; \label{feat:adriano-orthogonality}
    \item The distance function has no local minima outside the target curve. Furthermore, whenever the distance function is differentiable, its gradient never vanishes. \label{feat:adriano-no-local-minima}
\end{feature}
\end{featuresenv}
In our generalization, we will incorporate and build upon these features.
\subsection{Lie groups and Lie algebras}
In this section, we recall several properties of Lie groups and Lie algebras that will be essential for developing our extension. First, a Lie group $G$ is a smooth manifold equipped with a group structure. The Lie algebra $\mathfrak{g}$ associated with $G$ has two equivalent definitions, either of which will be used as appropriate. One definition describes the Lie algebra as the tangent space at the group identity $G_e$ \cite[p. 16]{Gallier2020}. Alternatively, a Lie algebra can be viewed as the space of all smooth vector fields on the Lie group, under the Lie bracket operation on vector fields \cite[p. 190]{Lee2012}. Furthermore, a path on a Lie group $G$ is a continuous map $\pi:[0,1]\to G$ that connects two Lie group elements $\pi(0)=\mathbf{H}_0\in G$ and $\pi(1)=\mathbf{H}_1\in G$. We will only consider connected---meaning that for all $\mathbf{H}_0,\mathbf{H}_1\in G$, there exists a path $\pi$ from $\mathbf{H}_0$ to $\mathbf{H}_1$---matrix Lie groups, which are groups where each element allows a matrix representation with a respective inverse. Henceforth we will denote by $n$ the dimension of any (square) matrix in a given group, and the dimension of the group by $m$, which is the same as the dimension of the associated Lie algebra.

Next, we present a fact that will be important for the forthcoming analysis. 
\begin{lemma} \label{lemma:derivative-lie-element-H-parallelizable} Let $\mathbf{G}:\mathbb{R}\to G$ be a differentiable function. Then, there exists a function $\mathbf{A}:\mathbb{R} \to \mathfrak{g}$ such that 
\begin{align*}
    \frac{d}{d\sigma} \mathbf{G}(\sigma) = \mathbf{A}(\sigma) \mathbf{G}(\sigma).
\end{align*}
\end{lemma}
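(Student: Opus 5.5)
The plan is to define $\mathbf{A}$ explicitly by right-multiplying the derivative by the inverse,
\begin{align*}
    \mathbf{A}(\sigma) \triangleq \Big(\frac{d}{d\sigma}\mathbf{G}(\sigma)\Big)\mathbf{G}(\sigma)^{-1}.
\end{align*}
This is well defined because every element of a matrix Lie group is invertible. With this choice the identity $\frac{d}{d\sigma}\mathbf{G}(\sigma)=\mathbf{A}(\sigma)\mathbf{G}(\sigma)$ holds trivially. The only real content of the lemma is therefore that $\mathbf{A}(\sigma)\in\mathfrak{g}$ for every $\sigma$.

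To show this, I will fix $\sigma_0$ and use right translation. Consider the curve $\gamma(\tau)\triangleq\mathbf{G}(\sigma_0+\tau)\mathbf{G}(\sigma_0)^{-1}$. Since $G$ is a group, $\gamma(\tau)\in G$ for all $\tau$. It is differentiable, being a differentiable matrix function multiplied by a constant matrix, and it satisfies $\gamma(0)=\mathbf{I}$. Using the first description of $\mathfrak{g}$ in the paper, as the tangent space of $G$ at the identity \cite[p. 16]{Gallier2020}, the velocity $\gamma'(0)$ belongs to $\mathfrak{g}$. Differentiating gives $\gamma'(0)=\frac{d}{d\sigma}\mathbf{G}(\sigma_0)\,\mathbf{G}(\sigma_0)^{-1}=\mathbf{A}(\sigma_0)$. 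Since $\sigma_0$ was arbitrary, $\mathbf{A}:\mathbb{R}\to\mathfrak{g}$.

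The step that needs care is the claim that the derivative of a differentiable curve in $G$ through $\mathbf{I}$ lies in $\mathfrak{g}$. For a matrix Lie group (a closed subgroup of $\mathrm{GL}(n)$, hence an embedded submanifold), the tangent space at $\mathbf{I}$ viewed inside $\mathbb{R}^{n\times n}$ is exactly the set of such velocities. It coincides with $\{\mathbf{X}:\exp(t\mathbf{X})\in G\ \forall t\}$, which is the standard matrix Lie algebra. I would cite this identification rather than reprove it. If one prefers the exponential definition, one can argue directly: $\exp(t\mathbf{A})$ is obtained as a limit of $\gamma(t/k)^k$ by the usual Lie product argument, and closedness of $G$ then gives membership.

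Alternatively, the same argument shows that the tangent space at $\mathbf{G}(\sigma)$ is $\mathfrak{g}\,\mathbf{G}(\sigma)$, because right multiplication by $\mathbf{G}(\sigma)$ is a diffeomorphism of $G$ mapping $\mathbf{I}$ to $\mathbf{G}(\sigma)$. This gives the lemma immediately.
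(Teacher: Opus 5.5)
Your proof is correct and rests on the same fact as the paper's. Right translation identifies $T_{\mathbf{G}}G$ with $\mathfrak{g}\,\mathbf{G}$; the paper simply quotes this as parallelizability of $G$ by right-invariant vector fields, while you unpack it by moving the curve back to the identity via $\gamma(\tau)=\mathbf{G}(\sigma_0+\tau)\mathbf{G}(\sigma_0)^{-1}$, which also yields the explicit formula $\mathbf{A}(\sigma)=\frac{d\mathbf{G}}{d\sigma}(\sigma)\,\mathbf{G}(\sigma)^{-1}$ that the paper later uses to define the $\Xi$ operator.
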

\begin{proof}
    It is a known fact that Lie groups are parallelizable using right-invariant vector fields as a basis \cite{GRIGORIAN2024804}. That means that any vector on the tangent space at any point $\mathbf{G} \in G$ can be written as $\mathbf{AG}$ in which $\mathbf{A}$ is an element of the Lie algebra $\mathfrak{g}$ (possibly different for each $\mathbf{G}$). This implies the desired result.
\end{proof}

 On the other hand, since the Lie algebra is a vector space, we can define a basis $\{\mathbf{E}_k\}\in\mathfrak{g}, k\in\{1,2,\dots, m\}$. Given a basis, for any $\mathbf{A} \in\mathfrak{g}$, there exist scalars $\{\zeta_k\},\, k\in\{1,2,\dots, m\}$, such that $\mathbf{A} = \sum_{k=1}^{m} \zeta_k\mathbf{E}_k$. Furthermore, for each choice of basis for the Lie algebra, it is possible to define a respective linear operator $\SL:\mathbb{R}^{m}\to\mathfrak{g}$ as follows:
\begin{definition}[$\SL$ map]\label{def:SL-left-isomorphism-act-on-xi}
    Let $\boldsymbol{\zeta}$ be an $m$-dimensional vector with entries $\zeta_k$. Given a chosen basis $\mathbf{E}_1,\dots,\mathbf{E}_m$ for an $m$-dimensional Lie algebra $\mathfrak{g}$,  we define the isomorphism $\SL:\mathbb{R}^{m}\to\mathfrak{g}$ as
    \begin{align*}
        \SL[\boldsymbol{\zeta}] \triangleq \sum_{k=1}^m\zeta_k\mathbf{E}_k.    
    \end{align*}
\hfill$\square$\end{definition}
With this, we refer to choosing a basis as a choice of $\SL$ map. Furthermore, the $\SL$ map is an isomorphism between vector spaces, which implies linearity.

Considering \Cref{lemma:derivative-lie-element-H-parallelizable} and \Cref{def:SL-left-isomorphism-act-on-xi}, we can conclude the following important fact.

\begin{lemma} \label{lemma:very-important-fact}
    Given a differentiable function $\mathbf{\mathbf{G}}:\mathbb{R}\to G$ and a chosen $\mathcal{S}$, there exists a function $\hat{\boldsymbol{\zeta}}:\mathbb{R}\to\mathbb{R}^m$, such that
    \begin{align}
    \label{eq:importantresult}
    \frac{d}{d\sigma} \mathbf{\mathbf{G}}(\sigma)=\SL\bigl(\hat{\boldsymbol{\zeta}}(\sigma)\bigr)\mathbf{\mathbf{G}}(\sigma). 
\end{align}

\end{lemma}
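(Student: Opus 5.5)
The plan is to combine \Cref{lemma:derivative-lie-element-H-parallelizable} with the fact that $\SL$ is a vector-space isomorphism (\Cref{def:SL-left-isomorphism-act-on-xi}). First, for the given differentiable $\mathbf{G}:\mathbb{R}\to G$, I would invoke \Cref{lemma:derivative-lie-element-H-parallelizable} to obtain a function $\mathbf{A}:\mathbb{R}\to\mathfrak{g}$ with $\frac{d}{d\sigma}\mathbf{G}(\sigma)=\mathbf{A}(\sigma)\mathbf{G}(\sigma)$ for every $\sigma$. Since $\mathbf{G}(\sigma)$ is an invertible matrix, $\mathbf{A}(\sigma)$ is in fact determined uniquely as $\mathbf{A}(\sigma)=\frac{d}{d\sigma}\mathbf{G}(\sigma)\,\mathbf{G}(\sigma)^{-1}$, which I would record because it makes the construction explicit and shows the resulting $\hat{\boldsymbol{\zeta}}$ is well defined (and, for instance, continuous whenever $\mathbf{G}$ is continuously differentiable).

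Next, since $\SL$ is a linear bijection between $\mathbb{R}^m$ and $\mathfrak{g}$, its inverse $\invSL:\mathfrak{g}\to\mathbb{R}^m$ exists and is linear. I would define pointwise
\begin{align*}
\hat{\boldsymbol{\zeta}}(\sigma)\triangleq\invSL\bigl(\mathbf{A}(\sigma)\bigr),
\end{align*}
that is, the vector of coordinates of $\mathbf{A}(\sigma)$ in the chosen basis $\mathbf{E}_1,\dots,\mathbf{E}_m$. Then $\SL\bigl(\hat{\boldsymbol{\zeta}}(\sigma)\bigr)=\mathbf{A}(\sigma)$, and substituting into the identity from the first step yields \eqref{eq:importantresult}.

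There is essentially no obstacle. The only point requiring care is to justify that $\SL$ is indeed bijective: this holds because $\{\mathbf{E}_k\}$ is a basis of $\mathfrak{g}$. Hence every element of $\mathfrak{g}$ has unique coordinates, which gives surjectivity and injectivity.
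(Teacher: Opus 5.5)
Your proposal is correct and follows essentially the same route as the paper. The paper's proof also combines \Cref{lemma:derivative-lie-element-H-parallelizable} with the bijectivity of the linear map $\mathcal{S}$, taking $\hat{\boldsymbol{\zeta}}(\sigma)$ to be the coordinates of $\mathbf{A}(\sigma)$ in the chosen basis. Your added remark that $\mathbf{A}(\sigma)=\frac{d\mathbf{G}}{d\sigma}(\sigma)\mathbf{G}(\sigma)^{-1}$ is uniquely determined is a useful clarification, since it is exactly what makes the $\Xi$ operator in \Cref{def:Xioperator} well defined.
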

\begin{proof} This is a direct consequence of \Cref{lemma:derivative-lie-element-H-parallelizable} and \Cref{def:SL-left-isomorphism-act-on-xi}. 
\end{proof}

Since $\SL$ is an isomorphism, the \emph{inverse map} that maps an element of the Lie algebra to a vector can be defined as well:
\begin{definition}[Inverse $\SL$ map]\label{def:inverse-isomorphism-SLinv}
    Let $\mathfrak{g}$ be an $m$-dimensional Lie algebra. The \emph{inverse map} is defined as $\invSL:\mathfrak{g}\to\mathbb{R}^m$, such that $\invSL[\SL[\boldsymbol{\zeta}]] = \boldsymbol{\zeta}$. 
\hfill$\square$\end{definition}

Now, according to \Cref{lemma:very-important-fact}, for each differentiable function $\mathbf{G}: \mathbb{R} \to G$ there exists a function $\hat{\boldsymbol{\zeta}} : \mathbb{R} \to \mathbb{R}^m$ according to equation \eqref{eq:importantresult}. Thus, we will define the following operator that extracts this $\hat{\boldsymbol{\zeta}}(\sigma)$ from $\mathbf{G}(\sigma)$:
\begin{definition} [$\Xi$ operator] \label{def:Xioperator} Let $G$ be an $m$-dimensional Lie group. Given a choice of $\SL$ map (see \Cref{def:SL-left-isomorphism-act-on-xi}) $\SL: \mathbb{R}^m \to \mathfrak{g}$, the respective $\Xi$ operator maps a differentiable function $\mathbf{G}: \mathbb{R} \to G$ into a function $\Xi[\mathbf{G}]: \mathbb{R} \to \mathbb{R}^m$ as $\Xi[\mathbf{G}](\sigma) = \SL^{-1}\bigl(\frac{d\mathbf{G}}{d\sigma}(\sigma)\mathbf{G}(\sigma)^{-1}\bigr)$, where $\mathbf{G}(\sigma)^{-1}$ denotes the matrix inverse. 
\hfill$\square$\end{definition}

In our development, it will be necessary to take derivatives along the manifold $G$.  For this purpose, the following definition will be useful.
\begin{definition} [$\Lop$ operator] \label{def:Loperator} Let $G$ be an $m$-dimensional Lie group. Given a choice of $S$ map and a differentiable function $f: G \to \mathbb{R}$, we define the $\Lop$ operator such that the function $\Lop[f] : G \to \mathbb{R}^{1 \times m}$ satisfies:
\begin{equation}
\label{eq:Leq}
    \lim_{\epsilon \rightarrow 0} \frac{1}{\epsilon} \Biggl( f\Bigl(\exp\bigl(\SL[\boldsymbol{\zeta}]\epsilon\bigr)\mathbf{G}\Bigr) - f\bigl(\mathbf{G}\bigr) \Biggr) = \Lop[f](\mathbf{G}) \boldsymbol{\zeta}
\end{equation}
for all $\boldsymbol{\zeta} \in \mathbb{R}^m$ and $\mathbf{G}\in G$. Explicitly, the $j^{th}$ entry of the row vector $\Lop[f](\mathbf{G})$ can be constructed as the left-hand side of \eqref{eq:Leq} when $\boldsymbol{\zeta} = \mathbf{e}_j$. In addition, if $f: G \times G \to \mathbb{R}$ is a function of two variables, $f(\mathbf{V},\mathbf{W})$, we define the \emph{partial} $\Lop$ operators $\Lop_{\mathbf{V}}$ and $\Lop_{\mathbf{W}}$ analogously as in \eqref{eq:Leq} but making the variation only in the first or in the second variable, respectively. 
\hfill$\square$\end{definition}

Related to this concept, we define the following:

\begin{definition}[Differentiable function acting on a Lie Group] \label{def:groupdiff}
   We say that a function \(f: G \to \mathbb{R}\) is \emph{differentiable} at a point \(\mathbf{G}=\mathbf{G}_0\) if the limits in \eqref{eq:Leq} for \(\boldsymbol{\zeta} = \mathbf{e}_j\), \(j=1,...,m\)  exist for \(\mathbf{G}=\mathbf{G}_0\) and that the resulting function $\Lop[f]: G \to \mathbb{R}^{1 \times m}$ is continuous at \(\mathbf{G}=\mathbf{G}_0\). Similarly, if \(f: G \times G \to \mathbb{R}\), that is, \(f = f(\mathbf{V},\mathbf{W})\), we define the concepts of being (partially) differentiable with respect to the argument \(\mathbf{V}\) and with respect to the argument  \(\mathbf{W}\). \strut
\hfill$\square$\end{definition}

The $\Lop$ operator is  related to the concept of the \emph{differential} of maps between two manifolds \cite[p. 70]{Lee2012}. If \(\mathcal{U}\) and \(\mathcal{V}\) are two manifolds and \(h: \mathcal{U} \to \mathcal{V}\) is differentiable, its differential at a point \(u \in \mathcal{U}\), the function  \(D_{u}[h]: T_u \mathcal{U} \to T_{h(u)} \mathcal{V}\) (where \(T_p \mathcal{M}\) is the tangent space to the manifold \(\mathcal{M}\) at the point \(p\)), is a linear map defined as:
\begin{equation}
\label{eq:defdiffmanif}
    D_u[h](\mu) = \frac{d}{d\epsilon} h(\gamma(\epsilon)) \Big|_{\epsilon=0}
\end{equation}
\noindent in which \(\mu \in T_u \mathcal{U}\), and \(\gamma: \mathbb{R} \to \mathcal{U}\) is any differentiable curve with \(\gamma(0)=u\), \(\gamma'(0)=\mu\). Once we recognize that the left-hand side of \eqref{eq:Leq} and the right-hand side of \eqref{eq:defdiffmanif} are the same with \(h=f\), \(\mathcal{U} = G\), \(\mathcal{V}=\mathbb{R}\), \(\gamma(\epsilon) = \exp(\SL(\boldsymbol{\zeta})\epsilon)\mathbf{G}\), \(u = \mathbf{G}\) and \(\mu = \SL(\boldsymbol{\zeta})\mathbf{G}\) we see that, for all $\boldsymbol{\zeta} \in \mathbb{R}^m$:
\begin{equation}
\label{eq:importantobs}
    D_{\mathbf{G}}[f]\big(\SL(\boldsymbol{\zeta})\mathbf{G}\big) = \Lop[f](\mathbf{G})\boldsymbol{\zeta}.
\end{equation}
This observation will be useful in the following result, which is a version of the chain rule using the $\Lop$ operator.

\begin{lemma}\label{lemma:chainrule}
    Let $G$ be an $m$-dimensional Lie group. Let $\mathbf{G} : \mathbb{R} \to G$ and $f: G \to \mathbb{R}$ be differentiable functions. Then:
    \begin{align*}
       \frac{d}{d\sigma} f\bigl(\mathbf{G}(\sigma)\bigr) = \Lop[f]\bigl(\mathbf{G}(\sigma)\bigr)\Xi[\mathbf{G}](\sigma).
    \end{align*}
\end{lemma}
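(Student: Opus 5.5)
The plan is to reduce the statement to the standard chain rule for differentials of maps between manifolds, and then use the identity \eqref{eq:importantobs}, which links the differential $D_{\mathbf{G}}[f]$ to the $\Lop$ operator.

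\textbf{Step 1 (chain rule).} Fix $\sigma$. Consider the curve $\gamma(\epsilon) = \mathbf{G}(\sigma+\epsilon)$, which satisfies $\gamma(0)=\mathbf{G}(\sigma)$ and $\gamma'(0) = \frac{d\mathbf{G}}{d\sigma}(\sigma) \in T_{\mathbf{G}(\sigma)}G$. Applying definition \eqref{eq:defdiffmanif} with $h=f$ and $u=\mathbf{G}(\sigma)$ gives
\begin{equation*}
\frac{d}{d\sigma} f\bigl(\mathbf{G}(\sigma)\bigr) = D_{\mathbf{G}(\sigma)}[f]\Bigl(\tfrac{d\mathbf{G}}{d\sigma}(\sigma)\Bigr).
\end{equation*}
This step needs the differential to be well defined, meaning independent of the curve representing the tangent vector, and linear. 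That requires $f$ to be $C^1$ as a map on the manifold $G$. I would obtain this from \Cref{def:groupdiff}: the directional derivatives along the $m$ right-invariant vector fields $\mathbf{G}\mapsto \mathbf{E}_j\mathbf{G}$ exist and are continuous. By \Cref{lemma:derivative-lie-element-H-parallelizable}, these fields form a global frame. In local coordinates, continuous partial derivatives along a continuous frame imply $C^1$. This is the classical fact that continuous partials imply differentiability, transported through the frame.

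\textbf{Step 2 (rewrite the velocity).} By \Cref{lemma:very-important-fact} and \Cref{def:Xioperator},
\begin{equation*}
\frac{d\mathbf{G}}{d\sigma}(\sigma) = \SL\bigl(\Xi[\mathbf{G}](\sigma)\bigr)\mathbf{G}(\sigma),
\end{equation*}
because $\frac{d\mathbf{G}}{d\sigma}\mathbf{G}^{-1}\in\mathfrak{g}$ and $\SL(\invSL(\mathbf{A}))=\mathbf{A}$.

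\textbf{Step 3 (conclude).} Substitute this expression into Step 1 and invoke \eqref{eq:importantobs} with $\boldsymbol{\zeta}=\Xi[\mathbf{G}](\sigma)$. This gives $D_{\mathbf{G}(\sigma)}[f]\bigl(\SL(\boldsymbol{\zeta})\mathbf{G}(\sigma)\bigr)=\Lop[f](\mathbf{G}(\sigma))\boldsymbol{\zeta}$, which is the claim.

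The main obstacle is the justification in Step 1. The identity \eqref{eq:importantobs} was derived only for the special curves $\epsilon\mapsto\exp(\SL(\boldsymbol{\zeta})\epsilon)\mathbf{G}$. Replacing such a curve by the arbitrary curve $\mathbf{G}(\sigma+\cdot)$ with the same tangent vector requires that $f$ have a genuine linear differential. If one prefers to avoid manifold language, there is a direct alternative. Write $\mathbf{G}(\sigma+\epsilon)=\exp\bigl(\SL(\boldsymbol{\zeta})\epsilon + o(\epsilon)\bigr)\mathbf{G}(\sigma)$, using that the exponential is a local diffeomorphism near the identity. Then apply the mean value theorem along the frame directions, using continuity of $\Lop[f]$ to control the $o(\epsilon)$ error.
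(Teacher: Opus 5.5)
Your proposal is correct and follows essentially the same route as the paper. You express $\frac{d}{d\sigma}f(\mathbf{G}(\sigma))$ as $D_{\mathbf{G}(\sigma)}[f]$ applied to $\frac{d\mathbf{G}}{d\sigma}(\sigma)$, which the paper obtains via the manifold chain rule and \eqref{eq:diffspc} and you obtain directly from \eqref{eq:defdiffmanif} with the curve $\epsilon\mapsto\mathbf{G}(\sigma+\epsilon)$; you then rewrite that velocity as $\SL\bigl(\Xi[\mathbf{G}](\sigma)\bigr)\mathbf{G}(\sigma)$ and conclude with \eqref{eq:importantobs}, and your extra argument that continuity of $\Lop[f]$ in \Cref{def:groupdiff} makes $f$ genuinely $C^1$ (so the differential is linear and independent of the representing curve) fills in a point the paper takes for granted.
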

\begin{proof}
    This is a consequence of the \emph{chain rule for manifolds \cite[p. 55]{Lee2012}}, which can be stated as follows: let  \(\mathbf{G}: \mathcal{W} \to \mathcal{U}\) and \(f: \mathcal{U} \to \mathcal{V}\), both functions being differentiable. If \(f\circ \mathbf{G} : \mathcal{W} \to \mathcal{V}\) is the composition, for any \(w \in \mathcal{W}\) and any  \(\omega \in \mathcal{T}_w \mathcal{W}\)
    \begin{equation}
    \label{eq:chainrule}
        D_w[f \circ \mathbf{G}](\omega) = D_{\mathbf{G}(w)}[f]\Big(D_w[\mathbf{G}](\omega)\Big).
    \end{equation}
   In this result, the definition of the operator \(D\) is as in \eqref{eq:defdiffmanif}.
    
    Now, it is easy to see from the definition of a differential on manifolds, as given in \eqref{eq:defdiffmanif}, that in the special case where the domain is the manifold \(\mathbb{R}\), that is, \(r: \mathbb{R} \to \mathcal{X}\),
    \begin{equation}
    \label{eq:diffspc}
        \frac{d }{d\sigma} r(\sigma) = \frac{d}{d\epsilon}r(\underbrace{\sigma+\epsilon}_{\gamma(\epsilon)})|_{\epsilon=0} = D_{\sigma}[r](1),
    \end{equation}
    in which we use \eqref{eq:defdiffmanif} and the fact that $\gamma(0)=\sigma$ and $\gamma'(0)=1$.
    
    We finish by taking \(\mathcal{U} = G\), \(\mathcal{V} = \mathcal{W} = \mathbb{R}\), \(w=\sigma\) and \(\omega=1\). Let \(\hat{\boldsymbol{\zeta}}(\sigma) = \Xi[\mathbf{G}](\sigma)\). Applying the chain rule in \eqref{eq:chainrule} and the result in \eqref{eq:diffspc} for both functions \(\mathbf{G}: \mathbb{R} \to  G\) and \( f \circ \mathbf{G} : \mathbb{R} \to \mathbb{R}\):
    \begin{align*}
     \frac{d}{d\sigma} f(\mathbf{G}(\sigma)) {\stackrel{eq. \eqref{eq:diffspc}}{=}} D_{\sigma}[f \circ \mathbf{G}](1) {\stackrel{eq. \eqref{eq:chainrule}}{=}}
         D_{\mathbf{G}(\sigma)}[f]\Big(D_{\sigma}[\mathbf{G}](1) \Big)
         & \\
      \stackrel{eq. \eqref{eq:diffspc}}{=} D_{\mathbf{G}(\sigma)}[f]\Big( \frac{d}{d\sigma }\mathbf{G}(\sigma) \Big) \stackrel{eq. \eqref{eq:importantresult}}{=} D_{\mathbf{G}(\sigma)}[f]\Big(S(\hat{\boldsymbol{\zeta}}(\sigma))\mathbf{G}\Big).
      &
    \end{align*}
    The result finally follows from \eqref{eq:importantobs} once we recall that $\hat{\boldsymbol{\zeta}}(\sigma) = \Xi[\mathbf{G}](\sigma)$. 
\end{proof}

As a corollary of \Cref{lemma:chainrule}:

\begin{corollary} \label{corol:corol1} If we have a function $f: G \times G \to \mathbb{R}$ instead of a function of a single variable, and two differentiable $\mathbf{V}, \mathbf{W} : \mathbb{R} \to G$, then:
\begin{align*}
        \frac{d}{d \sigma} f\bigl(\mathbf{V}(\sigma){,}\mathbf{W}(\sigma)\bigr) {=}
        &
        \Lop_{\mathbf{V}}[f]\bigl(\mathbf{V}(\sigma){,}\mathbf{W}(\sigma)\bigr) \Xi[\mathbf{V}](\sigma) \\
        &
        {+} \Lop_{\mathbf{W}}[f]\bigl(\mathbf{V}(\sigma){,}\mathbf{W}(\sigma)\bigr) \Xi[\mathbf{W}](\sigma).
\end{align*}
\end{corollary}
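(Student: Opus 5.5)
The plan is to treat $f$ as a function on the product manifold $G\times G$ and apply the same chain-rule argument as in the proof of \Cref{lemma:chainrule}. Define the curve $\mathbf{P}:\mathbb{R}\to G\times G$ by $\mathbf{P}(\sigma)=(\mathbf{V}(\sigma),\mathbf{W}(\sigma))$. Then $f\circ\mathbf{P}:\mathbb{R}\to\mathbb{R}$, and by \eqref{eq:diffspc} and the manifold chain rule \eqref{eq:chainrule},
\begin{equation*}
\frac{d}{d\sigma}f\bigl(\mathbf{V}(\sigma),\mathbf{W}(\sigma)\bigr)=D_{\mathbf{P}(\sigma)}[f]\Bigl(\tfrac{d}{d\sigma}\mathbf{V}(\sigma),\tfrac{d}{d\sigma}\mathbf{W}(\sigma)\Bigr).
\end{equation*}
Here the tangent space of $G\times G$ at $(\mathbf{V},\mathbf{W})$ is identified with $T_{\mathbf{V}}G\oplus T_{\mathbf{W}}G$.

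Next, I use linearity of the differential on this direct sum. For the pair $(\dot{\mathbf{V}},\dot{\mathbf{W}})=(\dot{\mathbf{V}},0)+(0,\dot{\mathbf{W}})$, linearity gives
\begin{equation*}
D_{(\mathbf{V},\mathbf{W})}[f](\dot{\mathbf{V}},\dot{\mathbf{W}})=D_{\mathbf{V}}[f(\cdot,\mathbf{W})](\dot{\mathbf{V}})+D_{\mathbf{W}}[f(\mathbf{V},\cdot)](\dot{\mathbf{W}}).
\end{equation*}
The identification of each piece with a partial differential holds because $(\dot{\mathbf{V}},0)$ is realized by the curve $\epsilon\mapsto(\gamma(\epsilon),\mathbf{W})$, which varies only the first argument; the second piece is analogous.

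Now apply \Cref{lemma:very-important-fact} to each curve separately. It gives $\dot{\mathbf{V}}=\SL(\Xi[\mathbf{V}](\sigma))\mathbf{V}$ and $\dot{\mathbf{W}}=\SL(\Xi[\mathbf{W}](\sigma))\mathbf{W}$. Identity \eqref{eq:importantobs}, applied to the single-variable functions $f(\cdot,\mathbf{W})$ and $f(\mathbf{V},\cdot)$, turns each partial differential into $\Lop_{\mathbf{V}}[f]\,\Xi[\mathbf{V}]$ and $\Lop_{\mathbf{W}}[f]\,\Xi[\mathbf{W}]$, respectively. This uses the fact that the partial $\Lop$ operators in \Cref{def:Loperator} are defined by the same limit with the variation made in one argument only.

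The main obstacle is justifying the splitting step: the full differential of $f$ must exist and be linear on the product tangent space. The definition only provides partial $\Lop$ limits plus continuity (\Cref{def:groupdiff}). I would handle this with the classical fact that continuous partial derivatives imply joint differentiability, applied in local exponential coordinates around $(\mathbf{V}(\sigma),\mathbf{W}(\sigma))$. Alternatively, I can avoid the issue with an elementary argument: write the increment $f(\mathbf{V}(\sigma+h),\mathbf{W}(\sigma+h))-f(\mathbf{V}(\sigma),\mathbf{W}(\sigma))$ as the sum of a change in the first argument at fixed $\mathbf{W}(\sigma+h)$ and a change in the second argument at fixed $\mathbf{V}(\sigma)$. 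Applying \Cref{lemma:chainrule} with a mean-value argument to each piece, then using continuity of $\Lop_{\mathbf{V}}[f]$ to pass to the limit, yields the stated sum.
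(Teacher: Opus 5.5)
Your proposal is correct and is essentially the paper's argument. The paper states this as a direct corollary of \Cref{lemma:chainrule}, i.e., the manifold chain rule on $G\times G$, where the differential (equivalently the $\Lop$ operator for the product basis) splits into $\Lop_{\mathbf{V}}[f]$ and $\Lop_{\mathbf{W}}[f]$ and $\Xi$ acts componentwise, which is exactly what you do.

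Your discussion of joint differentiability makes explicit an assumption the paper leaves implicit in \Cref{def:groupdiff}. Note, however, that the mean-value fallback as you phrased it also needs continuity of $\Xi[\mathbf{V}]$; alternatively, run it along the exponential curve from $\mathbf{V}(\sigma)$ to $\mathbf{V}(\sigma+h)$. For that reason the local-coordinates $C^1$ argument is the cleaner justification.
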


\section{Vector Field in Matrix Lie Groups}\label{sec:vf-lie-group}
We will now state our problem formally.
\begin{problemenv}\label{problem:curve-follow}
    Consider an $m$-dimensional connected matrix Lie group $G$ and a fully‑actuated, first‑order system
    \begin{align}
        \dot{\mathbf{H}}(t)=\SL\bigl(\boldsymbol{\xi}(t)\bigr)\mathbf{H}(t),
        \label{eq:derivative-H-SL-considered-system}
    \end{align}
    where $\mathbf{H}(t) \in G$ is the state, $\boldsymbol{\xi}(t) \in \mathbb{R}^m$ is the control input, and $\SL: \mathbb{R}^m \to \mathfrak{g}$ is the linear map induced by a chosen basis of the Lie algebra $\mathfrak{g}$ (\Cref{def:SL-left-isomorphism-act-on-xi}).
    Let $\mathcal{C} \subset G$ be a desired curve, defined as the image of a differentiable, non‑self‑intersecting parametrization $\mathbf{H}_d : [0,1] \to G$, and assume a distance function $D:G \to\mathbb{R}_+$ between an element $\mathbf{H} \in G$ and $\mathcal{C}$ is available (e.g. from \cref{def:distance-D-element-curve}).
    
    Design a continuous state-feedback law $\boldsymbol{\xi} = \Psi(\mathbf{H})$, such that the closed‑loop system satisfies
    \begin{enumerate}
        \item \textbf{Convergence:} the distance $D(\mathbf{H}(t))$ tends to zero as $t\to\infty$ for almost all initial conditions, and all remaining trajectories are attracted to a finite set that can be escaped in finite time.
        \item \textbf{Traversal:} once the state reaches $\mathcal{C}$, it stays on the curve and moves along it in the direction of increasing parameter $s$ with strictly positive speed.
    \end{enumerate}\hfill\qedsymbol
\end{problemenv}
\Cref{problem:curve-follow} formalizes the guidance task considered throughout this paper. The dynamics \eqref{eq:derivative-H-SL-considered-system} describe a fully‑actuated, first‑order system evolving on the Lie group $G$. In the special case $G=\text{SE}(3)$, the state $\mathbf{H}$ encodes the pose (position and orientation) of a rigid body, and, with the proper choice of operator $\SL$, the control input $\boldsymbol{\xi} \in \mathbb{R}^6$ corresponds to the body's twist---the stacked linear and angular velocities expressed in the world frame. We shall refer to $\boldsymbol{\xi}$ generically as the \emph{generalized twist}.

\begin{remark}
The results presented in this paper also apply to systems of the form $\dot{\mathbf{H}} = \mathbf{H} \mathcal{S}'(\boldsymbol{\xi}')$, where $\mathcal{S}'$ is an appropriate linear operator mapping $\mathbb{R}^m$ to $\mathfrak{g}$. For instance, considering the UAV example, this system could model a UAV controlled via the twist in the \emph{body frame} rather than the \emph{fixed frame}. This adaptation can be achieved by rewriting \eqref{eq:derivative-H-SL-considered-system} as $\dot{\mathbf{H}} = \mathbf{H}(\mathbf{H}^{-1} \SL[\boldsymbol{\xi}] \mathbf{H})$. It is well known in the context of Lie groups (see \cite[p. 153]{Lee2012}) that for any $\mathbf{A} \in \mathfrak{g}$ and $\mathbf{H} \in G$, the term $\mathbf{H}^{-1} \mathbf{A} \mathbf{H}$ also belongs to the Lie algebra $\mathfrak{g}$. Therefore, there exists a $\boldsymbol{\xi}' \in \mathbb{R}^m$ such that $\mathcal{S}'(\boldsymbol{\xi}') = \mathbf{H}^{-1} \SL[\boldsymbol{\xi}] \mathbf{H}$, since $\mathcal{S}'$ is a bijection. This correspondence enables the calculation of a controller for the modified system based on the controller designed for the original system.
\end{remark}

To solve \cref{problem:curve-follow}, we propose a vector field of the form
\begin{align}
    \Psi\left(\mathbf{H}\right) \triangleq k_N(\mathbf{H})\boldsymbol{\xi}_N(\mathbf{H}) + k_T(\mathbf{H})\boldsymbol{\xi}_T(\mathbf{H}), \label{eq:vector-field-proposition}
\end{align}
where $k_N: G \to \mathbb{R}$ and $k_T: G \to \mathbb{R}$ are continuous functions, and $k_T(\mathbf{H})$ is positive, $k_N(\mathbf{H})=0$ when $\mathbf{H} \in \mathcal{C}$ and $k_N(\mathbf{H}) > 0$ otherwise. The \emph{normal} component $\boldsymbol{\xi}_N$ ensures convergence, and the \emph{tangent} component $\boldsymbol{\xi}_T$ governs traversal along the curve. These components will be formally defined later.

The vector field presented in \cite{Rezende2022} is a special case of our proposed formulation. Since this particular case is easier to visualize, we will instantiate the definitions and results from this section  to aid in the visualization and understanding of our proposed concept later in \Cref{subs:partcase}.

For the vector field computation, we need to measure the distance between an element and a curve within the Lie group. Thus, we first define a distance function $\widehat{D}$ between arbitrary elements $\mathbf{V}$ and $\mathbf{W}$ in the group, as follows.

\begin{definition}[EE-distance function]\label{def:distance-D-hat-arbitrary-elements}
    Let $G$ be a matrix Lie group. We call $\widehat{D}:G\times G\to \mathbb{R}_+$ an \emph{Element-to-Element} \emph{(EE-)distance function}, a function that measures the distance between elements $\mathbf{V}, \mathbf{W}\in G$ with the following properties: 
    \begin{propertylist}
        \item (Positive Definiteness) $\widehat{D}(\mathbf{V}, \mathbf{W}) \ge 0$ and $\widehat{D}(\mathbf{V}, \mathbf{W})$ $= 0 \iff \mathbf{V}=\mathbf{W}$;\label{prop:Dhat-positive-definite}
        \item (Differentiability) $\widehat{D}$ is at least once differentiable (see \Cref{def:groupdiff}) in both arguments almost everywhere. In addition, there exists $D_{\text{min},\mathcal{C}}>0$ such that the derivative exists for all $\mathbf{V},\mathbf{W}$ such that $0 < \widehat{D}(\mathbf{V},\mathbf{W}) < D_{\text{min},\mathcal{C}}$. Finally, wherever $\Lop_{\mathbf{V}}[\widehat{D}](\mathbf{V},\mathbf{W})$ does not exist at a point $\mathbf{V}=\mathbf{V}_0$, $\mathbf{W}=\mathbf{W}_0$, every directional limit $\mathbf{V} \rightarrow \mathbf{V}_0$, $\mathbf{W} \rightarrow \mathbf{W}_0$ of $\Lop_{\mathbf{V}}[\widehat{D}](\mathbf{V},\mathbf{W})$ exists and is bounded.  The same holds for $\Lop_{\mathbf{W}}[\widehat{D}](\mathbf{V},\mathbf{W})$.
         \label{prop:Dhat-differentiability}
    \end{propertylist}
\hfill$\square$\end{definition}

By allowing the function to be non-differentiable in certain cases, we can incorporate important distance functions. For example, let \(\text{T}(m)\), the \emph{translation group},  be the subgroup of \(\text{SE}(m)\) in which the rotation matrix is the identity. Let \(\mathcal{T}: \text{T}(m) \to \mathbb{R}^m\) be the operator that extracts the translation vector from this matrix. Then, for \(G=\text{T}(m)\), the Euclidean distance $\widehat{D}(\mathbf{V},\mathbf{W}) = \|\mathcal{T}(\mathbf{V}) - \mathcal{T}(\mathbf{W})\|$  is not differentiable at $\mathbf{V}=\mathbf{W}$. However, all directional limits of the derivatives exist and are bounded. Furthermore, although it is not differentiable when $\widehat{D}=0$, it is differentiable everywhere else, so $D_{\text{min},\mathcal{C}}$ can be taken to be $\infty$ . Overall, the (possible) non-differentiability when $\mathbf{V}=\mathbf{W}$  for a generic $\widehat{D}$ will not be an issue, as will be clear soon.

Now, a distance between an element $\mathbf{H}$ to the curve $\mathcal{C}$ is defined as the minimum distance, as measured by $\widehat{D}$, between $\mathbf{H}$ and any $\mathbf{Y}$ in the curve. 
\begin{definition}[EC-distance Function]\label{def:distance-D-element-curve}
    Given an EE-distance function as in \Cref{def:distance-D-hat-arbitrary-elements}, an \emph{Element-to-Curve (EC-)}distance function $D: G\to\mathbb{R}_+$ measures the distance between an element $\mathbf{H}$ and a curve $\mathcal{C}$ parameterized by $\mathbf{H}_d(s)$:
    \begin{align}
        D(\mathbf{H}) \triangleq \min_{\mathbf{Y}\in\mathcal{C}}\widehat{D}(\mathbf{H}, \mathbf{Y}) =
        \min_{s\in[0,1]} \widehat{D}\bigl(\mathbf{H}, \mathbf{H}_d(s)\bigr).\label{eq:optimization-problem-distance-definition-point-curve}
    \end{align}
    Furthermore, let $\mathcal{P}_1\subset G$ be the set of points such that the optimization problem in \eqref{eq:optimization-problem-distance-definition-point-curve} does not have a unique solution. We define $s^*: G \to [0,1]$ so, when $\mathbf{H}\notin \mathcal{P}_1$, $s^*(\mathbf{H})$ is the unique minimizer of the objective function with respect to $s$ for the given $\mathbf{H}$. We also define $\mathcal{P}_2 \subset G \setminus (\mathcal{P}_1 \cup \mathcal{C})$ as the set of elements $\mathbf{H}$ for which the pair $(\mathbf{V},\mathbf{W}) = \bigl(\mathbf{H}, \mathbf{H}_d(s^*(\mathbf{H}))\bigr)$ is a non-differentiability point of $\widehat{D}\bigl(\mathbf{V},\mathbf{W}\bigr)$. Finally, we set $\mathcal{P} \triangleq \mathcal{P}_1 \cup \mathcal{P}_2$.
\hfill$\square$\end{definition}
Note that, from the definition of $\mathcal{P}$, $\mathcal{C} \cap \mathcal{P} = \emptyset$. Furthermore, the fact that $D_{\text{min},\mathcal{C}}>0$ in \Cref{def:distance-D-hat-arbitrary-elements} means that there is a non-zero separation distance between $\mathcal{C}$ and $\mathcal{P}$. For the sake of notational simplicity, we will often omit the dependency of $s^*$ on $\mathbf{H}$ and write simply $s^*$ instead of $s^*(\mathbf{H})$.

Before we define the vector field components, we will state a lemma that will be extensively used to prove many other lemmas and propositions throughout this paper. For that, it will be useful to define $\boldsymbol{\xi}_d$, the \emph{generalized twist} $\boldsymbol{\xi}$ in \eqref{eq:derivative-H-SL-considered-system} to follow the desired curve at a given $\mathbf{H}=\mathbf{H}_d(s)$.
\begin{definition}[Curve twist] \label{def:XId-twist-Hd-for-tangent}
    Let $\boldsymbol{\xi}_d(s) \triangleq \Xi[\mathbf{H}_d](s)$ (see \Cref{def:Xioperator}). Furthermore, we call the parametrization $\mathbf{H}_d(s)$ \emph{proper} if $\boldsymbol{\xi}_d(s) \neq \mathbf{0}$ for all $s \in [0,1]$.
\hfill$\square$\end{definition}
Having a proper parametrization is a purely geometric feature of the curve $\mathcal{C}$. The non-self-intersection property of the curve $\mathcal{C}$ implies that $\mathbf{H}_d: [0,1] \to \mathcal{C}$ is bijective and thus any non-proper parametrization can be transformed into a proper one through reparametrization (e.g., arc length parametrization). With this definition, we can proceed with the following useful lemma.
\begin{lemma}\label{lemma:optimization-problem-part-hdi-vanishers}
    When $\mathbf{H} \notin \mathcal{C} \cup \mathcal{P}$ , the first order optimality condition of \eqref{eq:optimization-problem-distance-definition-point-curve} implies that 
    \begin{align*}
    \Lop_{\mathbf{W}}[\widehat{D}]\bigl(\mathbf{H},\mathbf{H}_d(s^*)\bigr)\boldsymbol{\xi}_d(s^*) = 0.
    \end{align*}
\end{lemma}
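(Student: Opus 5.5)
The plan is to reduce the statement to the scalar first-order optimality condition for the one-dimensional problem in \eqref{eq:optimization-problem-distance-definition-point-curve}. Fix $\mathbf{H} \notin \mathcal{C}\cup\mathcal{P}$ and define $g(s) \triangleq \widehat{D}\bigl(\mathbf{H},\mathbf{H}_d(s)\bigr)$ for $s\in[0,1]$. Since $\mathbf{H}\notin\mathcal{P}_1$, the minimizer $s^*=s^*(\mathbf{H})$ is unique. Since $\mathbf{H}\notin\mathcal{P}_2$, the pair $\bigl(\mathbf{H},\mathbf{H}_d(s^*)\bigr)$ is a differentiability point of $\widehat{D}$, so $\Lop_{\mathbf{W}}[\widehat{D}]$ exists there.

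First I compute $g'(s^*)$ with \Cref{corol:corol1}. I take $\mathbf{V}(\sigma)\equiv\mathbf{H}$ (constant) and $\mathbf{W}(\sigma)=\mathbf{H}_d(\sigma)$. For the constant curve, $\Xi[\mathbf{V}](\sigma)=\invSL[\mathbf{0}\,\mathbf{H}^{-1}]=\mathbf{0}$, so the first term drops out. By \Cref{def:XId-twist-Hd-for-tangent}, $\Xi[\mathbf{W}](s^*)=\boldsymbol{\xi}_d(s^*)$. This gives
\begin{align*}
g'(s^*) = \Lop_{\mathbf{W}}[\widehat{D}]\bigl(\mathbf{H},\mathbf{H}_d(s^*)\bigr)\boldsymbol{\xi}_d(s^*).
\end{align*}
To justify applying the chain rule (\Cref{lemma:chainrule}) in the second argument at this single point, I use two facts: differentiability of $\widehat{D}$ at the pair, in the sense of \Cref{def:groupdiff}, and differentiability of $\mathbf{H}_d$.

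Second, I invoke the standard Fermat condition. If $s^*\in(0,1)$ is an interior minimizer of a function differentiable at $s^*$, then $g'(s^*)=0$, which is exactly the claim.

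The main obstacle is the endpoint case $s^*\in\{0,1\}$. There Fermat only gives a one-sided sign condition ($g'(0)\ge 0$ or $g'(1)\le 0$), not equality. I would handle it in one of two ways. The first is to note that the paper's notion of "first order optimality condition" is meant for interior minimizers. The second, which I would try first, is to observe that the curve is implicitly treated as one where endpoint minimizers do not arise, or are absorbed into the exceptional set. This holds, for example, if $\mathcal{C}$ is closed ($\mathbf{H}_d(0)=\mathbf{H}_d(1)$) and the parameter can be shifted so that the minimizer is interior. If neither interpretation is available, I would state the lemma for interior $s^*$ and add the endpoint case as an assumption, either a periodic/closed curve or a natural extension of $\mathbf{H}_d$ beyond $[0,1]$. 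Such an extension makes $s^*$ a minimizer of $g$ on an open neighbourhood, and then two-sided Fermat applies.
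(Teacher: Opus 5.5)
Your argument is the paper's proof. Fix $\mathbf{H}$, differentiate $s\mapsto\widehat{D}\bigl(\mathbf{H},\mathbf{H}_d(s)\bigr)$ at $s^*$ via \Cref{corol:corol1} (the constant first argument contributes nothing), identify $\Xi[\mathbf{H}_d](s^*)=\boldsymbol{\xi}_d(s^*)$, and set the result to zero by first-order optimality, with $\mathbf{H}\notin\mathcal{C}\cup\mathcal{P}$ supplying differentiability. Your endpoint caveat is legitimate and goes beyond the paper, which simply asserts that the derivative vanishes at $s^*$: for an open curve with $s^*\in\{0,1\}$ only a one-sided condition holds, so the paper's proof implicitly needs an interior-minimizer (or closed-curve) assumption there as well.
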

\begin{proof}
    Let $\mathbf{H}\in G\setminus(\mathcal{C} \cup \mathcal{P})$ be an arbitrary element, $\mathbf{H}_d(s)\in G$ the parametrization of a curve, and $D$ an EC-distance defined as in \Cref{def:distance-D-element-curve}. Since $D$ is the minimum value of an EE-distance function $\widehat{D}$ (see \Cref{def:distance-D-hat-arbitrary-elements}), by the first order optimality condition of \eqref{eq:optimization-problem-distance-definition-point-curve}, we know that the optimal parameter $s^*$ renders $\frac{d}{ds}\widehat{D}(\mathbf{H}, \mathbf{H}_d(s))|_{s=s^*}=0$. Taking the derivative of $\widehat{D}$, using \Cref{corol:corol1}, at $s=s^*$ gives
    \begin{align*}
        \frac{d}{ds}\widehat{D}\bigl(\mathbf{H}, \mathbf{H}_d(s)\bigr)\bigr|_{s=s^*}=\Lop_\mathbf{W}[\widehat{D}]\bigl(\mathbf{H}, \mathbf{H}_d(s^*)\bigr)\Xi[\mathbf{H}_d](s^*),
    \end{align*}
    since $\mathbf{H}$ does not depend on $s$. Using \Cref{def:XId-twist-Hd-for-tangent} and using the fact that this derivative is zero, we obtain the desired result. Furthermore, since $\mathbf{H} \not\in \mathcal{C} \cup \mathcal{P}$, the function $\widehat{D}$ is guaranteed to be differentiable.  
\end{proof}
\subsection{Normal component}
Following the same steps as in \Cref{sec:adriano-review}, after defining our EC-distance function, we state our vector field components. To define our \emph{normal} component, we will be inspired by \Cref{feat:adriano-time-derivative-lyapunov-normal-comp} (on page \pageref{feat:adriano-time-derivative-lyapunov-normal-comp}), which implies that we identify the normal component by analyzing the expression of $\dot{D}$. More precisely, the normal component is defined as the negative transpose of the term that multiplies the control input $\boldsymbol{\xi}$ in this expression. In that regard, we begin with the following lemma.
\begin{lemma}\label{lemma:time-derivative-of-distance-function}
    When $\mathbf{H}(t) \not \in \mathcal{C} \cup \mathcal{P}$, the time derivative of an EC-distance $D(\mathbf{H}(t))$, under the dynamics in \eqref{eq:derivative-H-SL-considered-system}, is given by 
    \begin{align}
        \frac{d}{dt}{D} = \Lop[D](\mathbf{H})\boldsymbol{\xi} =  \Lop_{\mathbf{V}}[\widehat{D}]\bigl(\mathbf{H},\mathbf{H}_d(s^*)\bigr)\boldsymbol{\xi}. \label{eq:final-equation-for-normal-component}
    \end{align}
\end{lemma}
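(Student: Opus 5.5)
The plan is an envelope-theorem argument built from \Cref{corol:corol1} and \Cref{lemma:optimization-problem-part-hdi-vanishers}. Fix $t$ with $\mathbf{H}(t)\notin\mathcal{C}\cup\mathcal{P}$. Then $s^*(\mathbf{H})$ is well defined and unique, and $\widehat{D}$ is differentiable at $\bigl(\mathbf{H},\mathbf{H}_d(s^*)\bigr)$. By definition, $D(\mathbf{H}(t))=\widehat{D}\bigl(\mathbf{H}(t),\mathbf{H}_d(s^*(\mathbf{H}(t)))\bigr)$.

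The first equality, $\frac{d}{dt}D=\Lop[D](\mathbf{H})\boldsymbol{\xi}$, is \Cref{lemma:chainrule} applied to $f=D$ and the curve $\mathbf{H}(t)$. Under \eqref{eq:derivative-H-SL-considered-system} we have $\Xi[\mathbf{H}](t)=\invSL[\dot{\mathbf{H}}\mathbf{H}^{-1}]=\boldsymbol{\xi}(t)$. This requires $D$ to be differentiable at $\mathbf{H}$, which is established in the next step.

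For the second equality, regard $\sigma(t)\triangleq s^*(\mathbf{H}(t))$ as a function of time and apply \Cref{corol:corol1} with $\mathbf{V}(t)=\mathbf{H}(t)$ and $\mathbf{W}(t)=\mathbf{H}_d(\sigma(t))$. This gives
\begin{align*}
\frac{d}{dt}D=\Lop_{\mathbf{V}}[\widehat{D}]\bigl(\mathbf{H},\mathbf{H}_d(s^*)\bigr)\boldsymbol{\xi}+\Lop_{\mathbf{W}}[\widehat{D}]\bigl(\mathbf{H},\mathbf{H}_d(s^*)\bigr)\Xi[\mathbf{H}_d\circ\sigma](t).
\end{align*}
By the chain rule, $\Xi[\mathbf{H}_d\circ\sigma](t)=\dot\sigma(t)\,\boldsymbol{\xi}_d(s^*)$. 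By \Cref{lemma:optimization-problem-part-hdi-vanishers}, the second term is $\dot\sigma\cdot 0=0$, which yields \eqref{eq:final-equation-for-normal-component}. The same computation with $\boldsymbol{\xi}=\mathbf{e}_j$ along the curve $\exp(\SL[\mathbf{e}_j]\epsilon)\mathbf{H}$ identifies $\Lop[D](\mathbf{H})$ with $\Lop_{\mathbf{V}}[\widehat{D}]\bigl(\mathbf{H},\mathbf{H}_d(s^*)\bigr)$. This row vector depends continuously on $\mathbf{H}$ near a point outside $\mathcal{P}$, so $D$ is differentiable in the sense of \Cref{def:groupdiff}.

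The main obstacle is the differentiability of $\sigma(t)=s^*(\mathbf{H}(t))$, which \Cref{corol:corol1} requires and which the implicit function theorem would give only under a second-order nondegeneracy assumption that is not available. The first thing to try is to avoid differentiating $s^*$ altogether, using a Danskin-type sandwich. Since $s^*$ minimizes,
\begin{align*}
D(\mathbf{H}(t+h))-D(\mathbf{H}(t))\le \widehat{D}\bigl(\mathbf{H}(t+h),\mathbf{H}_d(s^*(t))\bigr)-\widehat{D}\bigl(\mathbf{H}(t),\mathbf{H}_d(s^*(t))\bigr),
\end{align*}
and, symmetrically,
\begin{align*}
D(\mathbf{H}(t+h))-D(\mathbf{H}(t))\ge \widehat{D}\bigl(\mathbf{H}(t+h),\mathbf{H}_d(s^*(t+h))\bigr)-\widehat{D}\bigl(\mathbf{H}(t),\mathbf{H}_d(s^*(t+h))\bigr).
\end{align*}
Dividing by $h$, the upper bound tends to $\Lop_{\mathbf{V}}[\widehat{D}]\bigl(\mathbf{H},\mathbf{H}_d(s^*)\bigr)\boldsymbol{\xi}$. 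For the lower bound, the mean value theorem in $\mathbf{V}$ is combined with two facts. First, $s^*$ is continuous at points with a unique minimizer, by compactness of $[0,1]$ and continuity of $\widehat{D}$. Second, $\Lop_{\mathbf{V}}[\widehat{D}]$ is continuous near $\bigl(\mathbf{H},\mathbf{H}_d(s^*)\bigr)$, because $\mathbf{H}\notin\mathcal{P}_2$ and differentiability is an open condition there. Together these give the same limit, so $\sigma$ is never differentiated and the $\Lop_{\mathbf{W}}$ term disappears automatically. The paper's intended proof is probably the shorter calculation via \Cref{corol:corol1}, with the differentiability of $s^*$ taken for granted.
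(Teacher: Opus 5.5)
Your proposal is correct, but its decisive step differs from the paper's. The paper's proof is exactly your first computation. It differentiates $D(\mathbf{H})=\widehat{D}\bigl(\mathbf{H},\mathbf{H}_d(s^*(\mathbf{H}))\bigr)$ with \Cref{corol:corol1}, gets the extra term $\bigl(\Lop_{\mathbf{W}}[\widehat{D}]\,\boldsymbol{\xi}_d\bigr)\frac{ds^*}{dt}$, and removes it with the first-order condition of \Cref{lemma:optimization-problem-part-hdi-vanishers}. In doing so it tacitly assumes that $t\mapsto s^*(\mathbf{H}(t))$ is differentiable. Your Danskin-type sandwich never differentiates $s^*$ and never produces an $\Lop_{\mathbf{W}}$ term. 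It needs only two things. First, continuity of $\widehat{D}$, so that the unique minimizer depends continuously on $\mathbf{H}$ by compactness of $[0,1]$. Second, existence and joint continuity of $\Lop_{\mathbf{V}}[\widehat{D}]$ on a neighbourhood of $\bigl(\mathbf{H},\mathbf{H}_d(s^*)\bigr)$ in $G\times G$. That second condition is how $\mathbf{H}\notin\mathcal{P}_2$ must be read; \Cref{def:distance-D-hat-arbitrary-elements} guarantees it when $D(\mathbf{H})<D_{\text{min},\mathcal{C}}$, and it holds for the log-norm distance on $\text{SE}(3)$, whose non-differentiability set is closed. These hypotheses are weaker than what the paper's route implicitly uses. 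Because you never invoke the first-order condition, your argument also covers a minimizer at an endpoint $s^*\in\{0,1\}$, where the identity in \Cref{lemma:optimization-problem-part-hdi-vanishers} need not hold. What the paper's version buys is brevity and a direct parallel with the Euclidean calculation, reusing an identity it needs anyway for \Cref{propos:left-invariant-metric-induces-orthogonal}. One simplification: for the first equality you need neither \Cref{lemma:chainrule} nor continuity of $\mathbf{H}'\mapsto\Lop_{\mathbf{V}}[\widehat{D}]\bigl(\mathbf{H}',\mathbf{H}_d(s^*(\mathbf{H}'))\bigr)$ near $\mathbf{H}$. That continuity can be delicate, since points of $\mathcal{P}_1$ can accumulate at $\mathbf{H}$, e.g.\ at a focal point of the curve. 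Running the sandwich along $\mathbf{H}(t)$ and along $\exp(\SL[\mathbf{e}_j]\epsilon)\mathbf{H}$ already gives $\frac{d}{dt}D$ and $\Lop[D](\mathbf{H})$, and hence both equalities.
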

\begin{proof}
The first part of the equation comes from the chain rule in \Cref{lemma:chainrule} and \eqref{eq:derivative-H-SL-considered-system}. For the second equality, use the fact that $D(\mathbf{H}) = \widehat{D}\bigl(\mathbf{H},\mathbf{H}_d(s^*(\mathbf{H}))\bigr)$ and differentiate applying the chain rule using \Cref{corol:corol1}:
    \begin{align}
    \dot{D} = \Lop_{\mathbf{V}}[\widehat{D}]\boldsymbol{\xi} + \bigg(\Lop_{\mathbf{W}}[\widehat{D}] \boldsymbol{\xi}_d\bigg) \frac{ds^*}{dt} \label{eq:part-of-proof-use-optimallity-cond}
\end{align}
in which the dependencies of $\Lop_{\mathbf{V}}$ and $\Lop_{\mathbf{W}}$ on $\mathbf{H}$ and $\mathbf{H}_d(s^*)$, as well as $\boldsymbol{\xi}_d$ on $s^*$, were omitted. By \Cref{lemma:optimization-problem-part-hdi-vanishers}, the term within parentheses in \eqref{eq:part-of-proof-use-optimallity-cond} vanishes and we obtain the desired result. Note that \eqref{eq:final-equation-for-normal-component} shows that the derivative exists and is continuous whenever the remaining term is continuous; that is, when $\Lop_{\mathbf{V}}[\widehat{D}](\mathbf{H},\mathbf{H}_d(s^*))$ is differentiable, i.e., when $\mathbf{H} \not \in \mathcal{C} \cup \mathcal{P}$. 
\end{proof}

Now, \eqref{eq:final-equation-for-normal-component} in \Cref{lemma:time-derivative-of-distance-function} allows us to define the normal component. 
\begin{definition} [Normal component] \label{def:normal-vector}
    When $\mathbf{H} \not \in \mathcal{C} \cup \mathcal{P}$, the \emph{normal} component of the vector field $\boldsymbol{\xi}_N: G\to\mathbb{R}^m$ is defined as the negative transpose of the term that multiplies $\boldsymbol{\xi}$ in \eqref{eq:final-equation-for-normal-component}, i.e., $\boldsymbol{\xi}_N(\mathbf{H}) {\triangleq} {-}\Lop_{\mathbf{V}}[\widehat{D}]\Bigl(\mathbf{H}, \mathbf{H}_d(s^*)\Bigr)^{{\top}}{.}$
\hfill$\square$\end{definition}
When $\mathbf{H} \in \mathcal{C} \cup \mathcal{P}$, $\boldsymbol{\xi}_N(\mathbf{H})$ is left undefined. As we will see in \Cref{subs:conv-result}, this lack of definition at $\mathcal{C}$ is not an issue, as it will not be necessary to define it in that context. 

Finally, note that the previous definition allows us to express, as in \Cref{feat:adriano-time-derivative-lyapunov-normal-comp}, that $\dot{D} = -\boldsymbol{\xi}_N(\mathbf{H})^{\top}\boldsymbol{\xi}$, provided that $\mathbf{H} \not \in \mathcal{C} \cup \mathcal{P}$. In \cite{Rezende2022}, it was possible to write $\dot{D} = \nabla D\boldsymbol{\xi}$, where the gradient is taken with respect to the state, as the system state in that case lies in $\mathbb{R}^m$, which lacks nonlinear manifold constraints. However, our case is more general since the state $\mathbf{H}$ lies on an $m$-dimensional manifold embedded in a space with higher dimension $n^2$, in which $n$ is the order of the square matrix $\mathbf{H}$. Thus, by comparing $\dot{D} = \nabla D  \boldsymbol{\xi}$ with $\dot{D} = -\boldsymbol{\xi}_N(\mathbf{H})^{\top}\boldsymbol{\xi}$, we can see that $-\boldsymbol{\xi}_N(\mathbf{H})$ serves as the ``gradient of the distance function'' that respects the Lie group structure.

\subsection{Tangent component}
As in \Cref{sec:adriano-review}, the \emph{tangent} component of our vector field is associated solely with the curve. It can be easily defined using \Cref{def:XId-twist-Hd-for-tangent}.
\begin{definition} [Tangent component]\label{def:tangent-vector}
     For $\mathbf{H} \not \in \mathcal{P}$, the \emph{tangent} component of the vector field, $\boldsymbol{\xi}_T:G\to\mathbb{R}^m$ is defined as $\boldsymbol{\xi}_T(\mathbf{H})\triangleq\boldsymbol{\xi}_d(s^*(\mathbf{H}))$. 
\hfill$\square$\end{definition}

\subsection{Orthogonality of components}
According to \Cref{feat:adriano-orthogonality}, it is necessary that the normal and tangent components of the vector field be orthogonal to each other. This fact is related to the EC-distance function $D$, consequently to the EE-distance function $\widehat{D}$, and is a consequence of the \emph{left-invariance} property. We will first define a \emph{left-invariant} distance function and then provide a proposition for the orthogonality condition.
\begin{definition}[Left-invariant distance]\label{def:distance-left-invariant}
    An EE-distance function $\widehat{D}:G\times G\to\mathbb{R}_+$ is said to be  \emph{left-invariant} if it satisfies $\widehat{D}(\mathbf{A}\mathbf{V}, \mathbf{A}\mathbf{W}) = \widehat{D}(\mathbf{V}, \mathbf{W})$ for all $\mathbf{A}, \mathbf{V}, \mathbf{W}\in G$ .
\hfill$\square$\end{definition}
Given this definition, we can state the following:
\begin{proposition}\label{propos:left-invariant-metric-induces-orthogonal}
    Let $\mathbf{H} \not \in \mathcal{C} \cup \mathcal{P}$. If $\widehat{D}$ is a left-invariant distance function, then the vector field components (\Cref{def:normal-vector,def:tangent-vector}) are orthogonal to each other, i.e., $\boldsymbol{\xi}_N(\mathbf{H})^{\top}\boldsymbol{\xi}_T(\mathbf{H})=0$.
\end{proposition}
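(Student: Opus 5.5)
Since $\boldsymbol{\xi}_T(\mathbf{H})=\boldsymbol{\xi}_d(s^*)$ and $\boldsymbol{\xi}_N(\mathbf{H})^\top=-\Lop_{\mathbf{V}}[\widehat{D}](\mathbf{H},\mathbf{H}_d(s^*))$, the claim is equivalent to
\begin{align*}
\Lop_{\mathbf{V}}[\widehat{D}]\bigl(\mathbf{H},\mathbf{H}_d(s^*)\bigr)\boldsymbol{\xi}_d(s^*)=0 .
\end{align*}
\Cref{lemma:optimization-problem-part-hdi-vanishers} already gives the analogous identity with $\Lop_{\mathbf{W}}$ in place of $\Lop_{\mathbf{V}}$. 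The plan is to use left-invariance to prove the general relation
\begin{align*}
\Lop_{\mathbf{V}}[\widehat{D}](\mathbf{V},\mathbf{W})\boldsymbol{\zeta}+\Lop_{\mathbf{W}}[\widehat{D}](\mathbf{V},\mathbf{W})\boldsymbol{\zeta}=0
\quad\text{for all }\boldsymbol{\zeta}\in\mathbb{R}^m,
\end{align*}
at points where both partial derivatives exist. Applying it with $\boldsymbol{\zeta}=\boldsymbol{\xi}_d(s^*)$ then finishes the proof.

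To prove the relation, fix $\boldsymbol{\zeta}$ and $(\mathbf{V},\mathbf{W})$, and consider the curves $\mathbf{V}(\sigma)=\exp(\SL[\boldsymbol{\zeta}]\sigma)\mathbf{V}$ and $\mathbf{W}(\sigma)=\exp(\SL[\boldsymbol{\zeta}]\sigma)\mathbf{W}$. Left-invariance (\Cref{def:distance-left-invariant}) with $\mathbf{A}=\exp(\SL[\boldsymbol{\zeta}]\sigma)\in G$ shows that $\widehat{D}(\mathbf{V}(\sigma),\mathbf{W}(\sigma))$ is constant in $\sigma$, so its derivative is zero. Next, compute $\Xi[\mathbf{V}](\sigma)$ from \Cref{def:Xioperator}: we have $\frac{d}{d\sigma}\mathbf{V}(\sigma)\,\mathbf{V}(\sigma)^{-1}=\SL[\boldsymbol{\zeta}]$, hence $\Xi[\mathbf{V}]\equiv\boldsymbol{\zeta}$, and likewise $\Xi[\mathbf{W}]\equiv\boldsymbol{\zeta}$. \Cref{corol:corol1} then gives $0=\Lop_{\mathbf{V}}[\widehat{D}]\boldsymbol{\zeta}+\Lop_{\mathbf{W}}[\widehat{D}]\boldsymbol{\zeta}$ at $\sigma=0$.

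The main obstacle is justifying the use of \Cref{corol:corol1}, which needs $\widehat{D}$ to be differentiable at $(\mathbf{H},\mathbf{H}_d(s^*))$. This is exactly where the hypothesis $\mathbf{H}\notin\mathcal{C}\cup\mathcal{P}$ enters. Because $\mathbf{H}\notin\mathcal{P}_1$, the minimizer $s^*$ is unique and well defined. Because $\mathbf{H}\notin\mathcal{P}_2\cup\mathcal{C}$, the pair $(\mathbf{H},\mathbf{H}_d(s^*))$ is a differentiability point of $\widehat{D}$ in both arguments, in the sense of \Cref{def:groupdiff}. If one wants to avoid the joint chain rule, the same relation can be obtained directly from the limit definition \eqref{eq:Leq}. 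Write
\begin{align*}
\widehat{D}(e^{\epsilon A}\mathbf{V},e^{\epsilon A}\mathbf{W})-\widehat{D}(\mathbf{V},\mathbf{W})
=\Bigl[\widehat{D}(e^{\epsilon A}\mathbf{V},e^{\epsilon A}\mathbf{W})-\widehat{D}(\mathbf{V},e^{\epsilon A}\mathbf{W})\Bigr]
+\Bigl[\widehat{D}(\mathbf{V},e^{\epsilon A}\mathbf{W})-\widehat{D}(\mathbf{V},\mathbf{W})\Bigr],
\end{align*}
with $A=\SL[\boldsymbol{\zeta}]$, so that the left side is zero by left-invariance. Dividing by $\epsilon$ and letting $\epsilon\to0$, the second bracket tends to $\Lop_{\mathbf{W}}[\widehat{D}](\mathbf{V},\mathbf{W})\boldsymbol{\zeta}$ by \eqref{eq:Leq}. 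The first bracket tends to $\Lop_{\mathbf{V}}[\widehat{D}](\mathbf{V},\mathbf{W})\boldsymbol{\zeta}$ by a mean-value argument in the first argument, using continuity of $\Lop_{\mathbf{V}}[\widehat{D}]$ near the point as required by \Cref{def:groupdiff}.

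Finally, combine the relation with \Cref{lemma:optimization-problem-part-hdi-vanishers}:
\begin{align*}
\boldsymbol{\xi}_N^\top\boldsymbol{\xi}_T
=-\Lop_{\mathbf{V}}[\widehat{D}]\,\boldsymbol{\xi}_d(s^*)
=\Lop_{\mathbf{W}}[\widehat{D}]\,\boldsymbol{\xi}_d(s^*)
=0 .
\end{align*}
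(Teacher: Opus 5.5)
Your proof is correct and follows essentially the same route as the paper. You apply left-invariance along $\exp(\tau\,\mathcal{S}(\boldsymbol{\zeta}))$, use \Cref{corol:corol1} with both $\Xi$ terms equal to $\boldsymbol{\zeta}$ to get $\Lop_{\mathbf{V}}[\widehat{D}]\boldsymbol{\zeta}+\Lop_{\mathbf{W}}[\widehat{D}]\boldsymbol{\zeta}=0$, and then remove the $\Lop_{\mathbf{W}}$ term with \Cref{lemma:optimization-problem-part-hdi-vanishers}. The only differences are minor: the paper sets $\boldsymbol{\zeta}=\boldsymbol{\xi}_T$ from the start, and your limit-based mean-value argument is an extra justification of the chain-rule step that the paper does not spell out.
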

\begin{proof}
    Since $\widehat{D}$ is left-invariant, $\widehat{D}(\mathbf{A}\mathbf{B}, \mathbf{A}\mathbf{C})=\widehat{D}(\mathbf{B}, \mathbf{C})\;\forall\;\mathbf{A}, \mathbf{B}, \mathbf{C}\in G$. Since this holds for any value, take $\mathbf{B}=\mathbf{H}$, $\mathbf{C}=\mathbf{H}_d(s^*)$, and $\mathbf{A}=\exp\left(\tau\SL[\boldsymbol{\xi}_T]\right)$. Now, due to the left-invariance,
    \begin{align}
    \begin{split}
        \widehat{D}\Bigl(\exp\bigl(\tau\SL[\boldsymbol{\xi}_T]\bigr)\mathbf{H},\, \exp\bigl(\tau\SL[\boldsymbol{\xi}_T]\bigr)\mathbf{H}_d(s^*)\Bigr)\\
        =\widehat{D}\bigl(\mathbf{H}, \mathbf{H}_d(s^*)\bigr) \;\forall\;\mathbf{H}\in G,\,\tau\in\mathbb{R}. \label{eq:prop-orthogonal-first-eq}
    \end{split}
    \end{align}
    Differentiating both sides of this equation with respect to $\tau$, and applying \Cref{corol:corol1} with $\mathbf{A}(\tau) =\exp\left(\tau\SL[\boldsymbol{\xi}_T]\right)$ yields:
    \begin{align*}
        \Lop_{\mathbf{V}}[\widehat{D}]\big(\mathbf{A}(\tau)\mathbf{H},\mathbf{A}(\tau)\mathbf{H}_d(s^*)\big)\Xi[\mathbf{A}(\tau)\mathbf{H}](\tau) {+} \\
        \Lop_{\mathbf{W}}[\widehat{D}]\bigl(\mathbf{A}(\tau)\mathbf{H},\mathbf{A}(\tau)\mathbf{H}_d(s^*)\bigr)\Xi[\mathbf{A}(\tau)\mathbf{H}_d](\tau)= 0, 
    \end{align*}
    since the right-hand side of \eqref{eq:prop-orthogonal-first-eq} does not depend on $\tau$. Note that here the operator $\Xi$ acts with respect to the parameter $\tau$ (i.e., $\sigma = \tau$, see \Cref{def:Xioperator}), since differentiation is carried out with respect to this variable. Nevertheless, computing both $\Xi$ operators (see \Cref{def:Xioperator}) and evaluating both sides at $\tau=0$ results in:
    \begin{align*}
        \Lop_{\mathbf{V}}[\widehat{D}](\mathbf{H},\mathbf{H}_d(s^*))\boldsymbol{\xi}_T {+}\Lop_{\mathbf{W}}[\widehat{D}]\bigl(\mathbf{H},\mathbf{H}_d(s^*)\bigr)\boldsymbol{\xi}_T= 0. 
    \end{align*}
    Noting that, by \Cref{def:tangent-vector}, $\boldsymbol{\xi}_T = \boldsymbol{\xi}_d(s^*)$, and invoking \Cref{lemma:optimization-problem-part-hdi-vanishers}, we have $\Lop_{\mathbf{W}}[\widehat{D}]\boldsymbol{\xi}_T = 0$.
    Finally, using \Cref{def:normal-vector}, we prove the orthogonality property: $\Lop_{\mathbf{V}}[\widehat{D}]\boldsymbol{\xi}_T = -\boldsymbol{\xi}_N^{\top}\boldsymbol{\xi}_T=0$.
\end{proof}

\subsection{Local minima and gradients in distance function}
\Cref{feat:adriano-no-local-minima} is the only feature remaining to be present in our formulation. It consists of two parts: the absence of local minima outside the curve, and the fact that the gradient, here represented by its general form $-\boldsymbol{\xi}_N(\mathbf{H})$, never vanishes (whenever it exists).

To ensure that the EC-distance function has no local minima outside the target curve, we introduce the notion of a \emph{chainable} distance function and show that \emph{chainability} is sufficient to guarantee this property. This property requires defining a \emph{path} between elements in a Lie group.
\begin{definition}[Path generating function]\label{def:PHI-path-parameterizer}
    In a Lie group $G$, a \emph{path generating function} $\Phi:[0, 1] \times G \times G \to G$ is a function that generates a path connecting an element $\mathbf{V}$ to an element $\mathbf{W}$ that satisfies the following properties for all $\mathbf{V}, \mathbf{W}\in G,$ and $\sigma\in[0,1]$:
    \begin{propertylist}
        \item $\Phi(\sigma, \mathbf{V}, \mathbf{W})$ is differentiable in $\sigma$;\label{prop:path-continuous}
        \item $\Phi(0, \mathbf{V}, \mathbf{W}) = \mathbf{V},\,\Phi(1, \mathbf{V},\, \mathbf{W}) = \mathbf{W}$.\label{prop:path-initUfinalV}
    \end{propertylist}
\hfill$\square$\end{definition}

Then:
\begin{definition}[Chainable distance]\label{def:chainable-distance}
    A function $\widehat{D}: G \times G \to \mathbb{R}_+$ is called a \emph{chainable} distance if is an EE-distance function and satisfies the following property: there exists a path generating function $\Phi$ (\Cref{def:PHI-path-parameterizer}), such that for any points $\mathbf{V}, \mathbf{W} \in G$ and any $\sigma \in [0,1]$:
    \begin{align*}
        \widehat{D}(\mathbf{V}{,} \mathbf{W}) {=} \widehat{D}\bigl(\mathbf{V}{,} \Phi(\sigma{,} \mathbf{V}{,} \mathbf{W})\bigr) {+} \widehat{D}\bigl(\Phi(\sigma{,} \mathbf{V}{,} \mathbf{W}){,} \mathbf{W} \bigr).
    \end{align*}%
\hfill$\square$\end{definition}

A chainable distance between two elements can be thought of as a chain, such that it can be broken into pieces within the path generated by the function $\Phi$ and render the same result.

It will now be proved that the chainability property in \Cref{def:chainable-distance} implies the absence of local minima outside the curve in the EC-distance function. This fact implies that all minima of $D$ lie on the curve, and it will become useful when we prove the convergence to the curve.
\begin{figure}[t]
    \centering
    \includeinkscape[pretex=\tiny,width=.8\columnwidth]{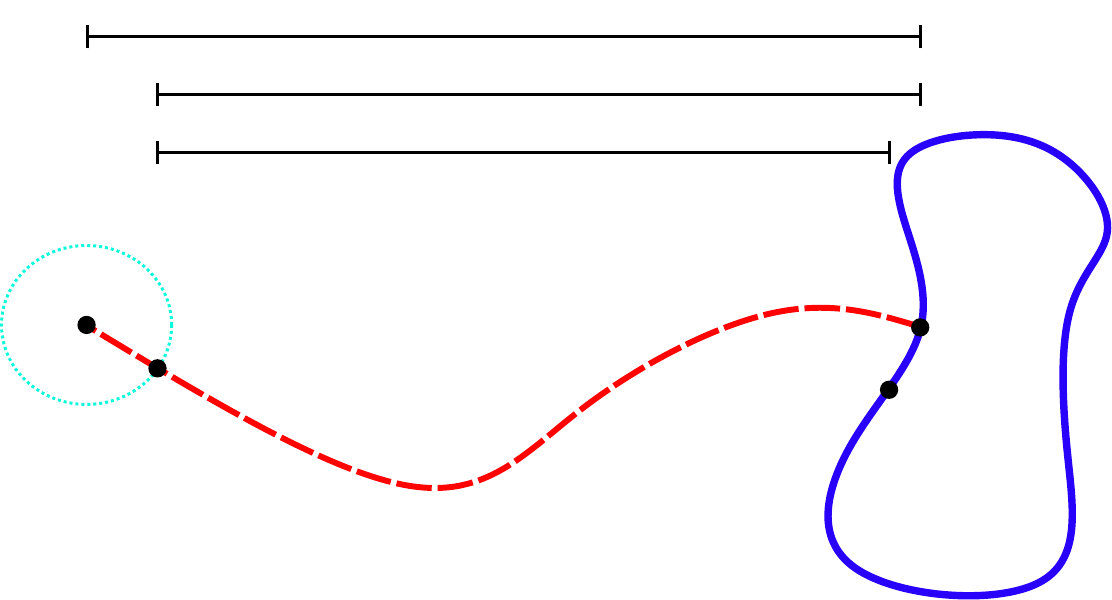}
    \caption{Depiction of the proof in \Cref{propos:D-NO-local-minima}.}
    \label{fig:distance-without-local-minima}
\end{figure}
\begin{proposition}\label{propos:D-NO-local-minima}
    If $\widehat{D}$ is a chainable distance function (\Cref{def:chainable-distance}), then the EC-distance $D$ only has minima on the curve $\mathcal{C}$. Furthermore, all these minima are global.
\end{proposition}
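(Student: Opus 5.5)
The plan is to argue by showing that every point off the curve has points arbitrarily close to it where $D$ is strictly smaller. That rules out local minima outside $\mathcal{C}$. Separately, $D=0$ exactly on $\mathcal{C}$, so every minimum on $\mathcal{C}$ is global.

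First, the global part. By positive definiteness (\Cref{prop:Dhat-positive-definite}), $D(\mathbf{H})\ge 0$ for every $\mathbf{H}$. Moreover, $D(\mathbf{H})=0$ if and only if $\widehat{D}(\mathbf{H},\mathbf{H}_d(s))=0$ for some $s$, that is, if and only if $\mathbf{H}\in\mathcal{C}$. The minimum in \eqref{eq:optimization-problem-distance-definition-point-curve} is attained because $[0,1]$ is compact and $s\mapsto\widehat{D}(\mathbf{H},\mathbf{H}_d(s))$ is continuous, which I take to be implied by the differentiability assumptions. Hence every point of $\mathcal{C}$ attains the value $0$, the smallest possible value, and is therefore a global minimum.

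Second, the main step. Take $\mathbf{H}\notin\mathcal{C}$ and choose any minimizer $s^*$; uniqueness is not needed, so points of $\mathcal{P}$ are allowed. Set $\mathbf{Y}=\mathbf{H}_d(s^*)$, so that $D(\mathbf{H})=\widehat{D}(\mathbf{H},\mathbf{Y})>0$. Let $\Phi$ be the path generating function from \Cref{def:chainable-distance} and put $\mathbf{H}_\sigma=\Phi(\sigma,\mathbf{H},\mathbf{Y})$. Chainability gives
\begin{align*}
D(\mathbf{H}_\sigma)\le \widehat{D}(\mathbf{H}_\sigma,\mathbf{Y}) = \widehat{D}(\mathbf{H},\mathbf{Y}) - \widehat{D}(\mathbf{H},\mathbf{H}_\sigma).
\end{align*}
It is therefore enough to show that $\widehat{D}(\mathbf{H},\mathbf{H}_\sigma)>0$, which by positive definiteness is the same as $\mathbf{H}_\sigma\neq\mathbf{H}$, for $\sigma$ arbitrarily close to $0$. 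For such $\sigma$ we get $D(\mathbf{H}_\sigma)<D(\mathbf{H})$. Since $\mathbf{H}_\sigma\to\mathbf{H}$ as $\sigma\to 0$ by continuity of $\Phi$ in $\sigma$ (\Cref{prop:path-continuous}) and \Cref{prop:path-initUfinalV}, every neighborhood of $\mathbf{H}$ contains a point with a strictly smaller value of $D$, so $\mathbf{H}$ is not a local minimum.

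The main obstacle is excluding the case where $\mathbf{H}_\sigma=\mathbf{H}$ for all small $\sigma$, i.e.\ the path stays at $\mathbf{H}$ on some interval $[0,\sigma_0]$. I will handle it with a reparametrization argument. Let $\bar\sigma=\sup\{\sigma:\Phi(\sigma',\mathbf{H},\mathbf{Y})=\mathbf{H}\ \forall\sigma'\in[0,\sigma]\}$. By continuity, $\Phi(\bar\sigma,\mathbf{H},\mathbf{Y})=\mathbf{H}$, and $\bar\sigma<1$ because $\mathbf{Y}\neq\mathbf{H}$. By the definition of the supremum, there are $\sigma>\bar\sigma$ arbitrarily close to $\bar\sigma$ with $\mathbf{H}_\sigma\neq\mathbf{H}$, and by continuity these $\mathbf{H}_\sigma$ lie arbitrarily close to $\mathbf{H}$. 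The chainability identity holds at every $\sigma\in[0,1]$, not only near $0$, so the inequality above applies at these $\sigma$ and gives the strict decrease. This completes the argument.
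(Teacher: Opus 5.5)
Your proof is correct and follows essentially the same route as the paper. Both arguments rest on the chainability estimate $D(\Phi_\sigma)\le \widehat{D}(\Phi_\sigma,\mathbf{Y}) = D(\mathbf{H})-\widehat{D}(\mathbf{H},\Phi_\sigma)$ along the path to a nearest curve point, combined with positive definiteness, and on $D\ge 0$ with $D=0$ on $\mathcal{C}$ for the global part. The only differences are in framing: the paper argues by contradiction and uses the point where the path crosses the boundary of a small neighborhood, which is automatically distinct from the center. You argue directly and use the supremum $\bar\sigma$ to make sure the path actually leaves $\mathbf{H}$. Both devices handle the same issue correctly.
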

\begin{proof}
    The proof proceeds by contradiction and is depicted in \Cref{fig:distance-without-local-minima}. Take $D$ as an EC-distance function (\Cref{def:distance-D-element-curve}) with $\widehat{D}$ being a chainable EE-distance function. Assume that $\mathbf{B}\notin\mathcal{C}$ is a minimum of $D$ outside the curve, and let $\mathbf{C} \in G$ be (one of) the nearest point(s) on $\mathcal{C}$ to $\mathbf{B}$, i.e., $\mathbf{C}\in\arg\min_{\mathbf{Y}\in\mathcal{C}}\widehat{D}(\mathbf{B}, \mathbf{Y})$.

     Since $\mathbf{B}$ $\not \in \mathcal{C}$ is a minimum of $D$, there exists a set $\mathcal{B}$ in the manifold, small enough to not intersect with the curve but large enough to contain points around \(\mathbf{B}\)\footnote{For example, consider the intersection between a Euclidean ball—centered at \(\mathbf{B}\), with a sufficiently small radius, and defined in the space where the manifold of the Lie group \(G\) resides—and the manifold of \(G\) itself.}, such that $D(\mathbf{Y}) \ge D(\mathbf{B})\; \forall\; \mathbf{Y} \in \mathcal{B}$. Given that the path generated by $\Phi$ between $\mathbf{B}$ and $\mathbf{C}$ is continuous (see \Cref{def:PHI-path-parameterizer}), there must exist $\sigma_0 \in (0,1)$ such that a point $\mathbf{A}=\Phi(\sigma_0, \mathbf{B}, \mathbf{C})$ lies in the intersection of the boundary of the set $\mathcal{B}$ and the path. According to our assumption, $D(\mathbf{B}) \le D(\mathbf{A})$, where $D(\mathbf{A}) = \min_{\mathbf{Y}\in\mathcal{C}} \widehat{D}(\mathbf{A}, \mathbf{Y})$. Let $\mathbf{A}^*$ be the nearest point on the curve to $\mathbf{A}$ such that $D(\mathbf{A})=\widehat{D}(\mathbf{A}, \mathbf{A}^*)$. Now, since $D(\mathbf{A})$ is the minimum EE-distance between $\mathbf{A}$ and the curve, it must be true that this distance is less than or equal to the EE-distance between $\mathbf{A}$ and $\mathbf{C}$, i.e., $D(\mathbf{A}) =\widehat{D}(\mathbf{A}, \mathbf{A}^*)\le \widehat{D}(\mathbf{A}, \mathbf{C})$. The obtained results allow us to obtain the following:
     \begin{align}
         D(\mathbf{B}) \le D(\mathbf{A}) \le \widehat{D}(\mathbf{A}, \mathbf{C}). \label{eq:lemma-local-minima-contradiction-eq1}
     \end{align}
    In addition, by the chainability property (see \Cref{def:chainable-distance}), we also have $D(\mathbf{B}) \ge \widehat{D}(\Phi(\sigma, \mathbf{B}, \mathbf{C}), \mathbf{C})\;\forall\;\sigma\in[0,1]$. Since this holds for any value of $\sigma$, take $\sigma=\sigma_0$, which results in 
    \begin{align}
        D(\mathbf{B}) \ge \widehat{D}(\mathbf{A}, \mathbf{C}). \label{eq:lemma-local-minima-contradiction-eq2}
    \end{align}
    We now have a contradiction. Since conditions \eqref{eq:lemma-local-minima-contradiction-eq1} and \eqref{eq:lemma-local-minima-contradiction-eq2} must hold simultaneously, it follows that $D(\mathbf{B}) = \widehat{D}(\mathbf{A}, \mathbf{C})$. But, from the chainability property (see \Cref{def:chainable-distance}), we know that $D(\mathbf{B}) = \widehat{D}(\mathbf{B}, \mathbf{C}) = \widehat{D}(\mathbf{B}, \mathbf{A}) + \widehat{D}(\mathbf{A}, \mathbf{C})$. This implies that $\widehat{D}(\mathbf{B}, \mathbf{A})=0$. Therefore, since $\widehat{D}$ is positive definite, it follows that $\mathbf{B}=\mathbf{A}$, contradicting the existence of such a set $\mathcal{B}$. Thus, there are no minima outside $\mathcal{C}$.
    
    Now, since $D$ is a non-negative function (see \Cref{def:distance-D-element-curve}), and all the points $\mathbf{H}$ on the curve $\mathcal{C}$ have $D(\mathbf{H})=0$, this implies that all of them are global minima.
\end{proof}

We now establish that, under certain mild conditions, $-\boldsymbol{\xi}_N(\mathbf{H})$, which plays the role of the gradient of the distance in our case, never vanishes wherever it is defined. The necessary ``mild'' condition for establishing this is as follows.

\begin{definition} [Locally linear] \label{def:locallylinear}
    A chainable EE-distance $\widehat{D}$ is said to be \emph{locally linear} if, for any $\mathbf{A}, \mathbf{B} \in G$, $\mathbf{A} \not = \mathbf{B}$:
    \begin{align*}
        \lim_{\sigma \rightarrow 0^+} \frac{1}{\sigma} \widehat{D}\bigl(\mathbf{A},\Phi(\sigma,\mathbf{A},\mathbf{B})\bigr) > 0
    \end{align*}
    \noindent and is finite.
\hfill$\square$\end{definition}
The name arises from the fact that, for any $\mathbf{A} \not= \mathbf{B}$, $\widehat{D}\bigl(\mathbf{A},\Phi(\sigma,\mathbf{A},\mathbf{B})\bigr) \approx K\sigma$ for a constant $K>0$ and $\sigma \approx 0$ (i.e., it is approximately linear near $\sigma=0$).

\begin{lemma} \label{lemma:no-zero-xiN} If $\widehat{D}$ is chainable (\Cref{def:chainable-distance}) and locally linear (\Cref{def:locallylinear}), for any $\mathbf{H} \notin \mathcal{C} \cup \mathcal{P}$, $\boldsymbol{\xi}_N(\mathbf{H}) \not= \mathbf{0}$.
\end{lemma}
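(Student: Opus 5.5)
The plan is to show that $D$ strictly decreases, at a linear rate, along the path generated by $\Phi$ from $\mathbf{H}$ toward its nearest curve point. Then we translate this decrease into a nonzero directional derivative of $\widehat{D}$ in its first argument. That directional derivative equals $-\boldsymbol{\xi}_N^\top$ applied to a suitable twist, so $\boldsymbol{\xi}_N$ cannot vanish.

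\textbf{Setting up the path.} Fix $\mathbf{H}\notin\mathcal{C}\cup\mathcal{P}$ and let $\mathbf{C}=\mathbf{H}_d(s^*)$ be its unique closest point, so $\mathbf{H}\neq\mathbf{C}$. Define the path $\gamma(\sigma)\triangleq\Phi(\sigma,\mathbf{H},\mathbf{C})$. By \Cref{def:PHI-path-parameterizer} it is differentiable in $\sigma$ and satisfies $\gamma(0)=\mathbf{H}$. By \Cref{lemma:very-important-fact} it has a generalized twist $\boldsymbol{\zeta}(\sigma)=\Xi[\gamma](\sigma)$. Consider
\begin{align*}
g(\sigma)\triangleq\widehat{D}\bigl(\gamma(\sigma),\mathbf{C}\bigr).
\end{align*}
Chainability (\Cref{def:chainable-distance}) gives
\begin{align*}
g(\sigma)=\widehat{D}(\mathbf{H},\mathbf{C})-\widehat{D}\bigl(\mathbf{H},\gamma(\sigma)\bigr).
\end{align*}
Local linearity (\Cref{def:locallylinear}) then yields
\begin{align*}
\lim_{\sigma\to0^+}\frac{g(\sigma)-g(0)}{\sigma}=-K<0,
\end{align*}
with $K$ finite.

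\textbf{Computing the one-sided derivative.} Because $\mathbf{H}\notin\mathcal{C}\cup\mathcal{P}$, the pair $(\mathbf{H},\mathbf{C})$ is a differentiability point of $\widehat{D}$, and $\Lop_{\mathbf{V}}[\widehat{D}]$ is continuous there (\Cref{def:groupdiff}). The second argument of $g$ is held fixed, so \Cref{corol:corol1} (or \Cref{lemma:chainrule} applied to $\widehat{D}(\cdot,\mathbf{C})$) gives, as a one-sided derivative at $\sigma=0$,
\begin{align*}
g'(0^+)=\Lop_{\mathbf{V}}[\widehat{D}](\mathbf{H},\mathbf{C})\,\boldsymbol{\zeta}(0)=-\boldsymbol{\xi}_N(\mathbf{H})^\top\boldsymbol{\zeta}(0),
\end{align*}
where the last equality is \Cref{def:normal-vector}. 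Equating this with the previous step gives $\boldsymbol{\xi}_N(\mathbf{H})^\top\boldsymbol{\zeta}(0)=K>0$, and therefore $\boldsymbol{\xi}_N(\mathbf{H})\neq\mathbf{0}$.

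\textbf{Main obstacle.} The delicate step is justifying the chain rule at $\sigma=0$. The path is only assumed differentiable on $[0,1]$, so the derivative at the endpoint is one-sided. The chain rule argument of \Cref{lemma:chainrule} works equally for one-sided derivatives, since it only uses first-order expansions at the point $\mathbf{H}$, where $\widehat{D}(\cdot,\mathbf{C})$ is differentiable. A further concern is whether $\boldsymbol{\zeta}(0)$ could be zero, since $\Phi$ need not be regular. This turns out to be harmless: if $\boldsymbol{\zeta}(0)=\mathbf{0}$, the chain rule would force $g'(0^+)=0$, contradicting $g'(0^+)=-K<0$. So local linearity automatically rules out a degenerate path start, and the conclusion $\boldsymbol{\xi}_N(\mathbf{H})^\top\boldsymbol{\zeta}(0)=K>0$ already encodes this.
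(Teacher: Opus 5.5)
Your proof is correct, and it takes a slightly different and somewhat cleaner route than the paper.

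The paper differentiates the EC-distance $D$ itself along the $\Phi$-path. It combines chainability with the inequality $\widehat{D}(\Phi_\sigma,\mathbf{H}^*)\ge D(\Phi_\sigma)$ to get $D(\mathbf{H})-D(\Phi_\sigma)\ge\widehat{D}(\mathbf{H},\Phi_\sigma)$. It then identifies the limit of the left side divided by $\sigma$ with $-\Lop[D](\mathbf{H})\boldsymbol{\xi}_\Phi$. To turn this into a statement about $\boldsymbol{\xi}_N$, it has to invoke \eqref{eq:final-equation-for-normal-component} from \Cref{lemma:time-derivative-of-distance-function}, i.e.\ the envelope-type identity $\Lop[D](\mathbf{H})=\Lop_{\mathbf{V}}[\widehat{D}](\mathbf{H},\mathbf{H}_d(s^*))$.

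You instead freeze the second argument at $\mathbf{C}=\mathbf{H}_d(s^*)$. Chainability then gives an exact identity rather than an inequality, and $g'(0^+)=-\boldsymbol{\xi}_N(\mathbf{H})^\top\boldsymbol{\zeta}(0)$ follows directly from \Cref{def:normal-vector}. This buys you several things:
\begin{itemize}
\item You never need differentiability of $D$ or of $s^*$; the latter is used tacitly in \Cref{lemma:time-derivative-of-distance-function} through $ds^*/dt$.
\item You only need $\widehat{D}(\cdot,\mathbf{C})$ to be differentiable at $\mathbf{H}$, which $\mathbf{H}\notin\mathcal{P}_2$ guarantees.
\item You actually prove more: $\Lop_{\mathbf{V}}[\widehat{D}](\mathbf{H},\mathbf{C})\neq\mathbf{0}$ for every $\mathbf{C}\neq\mathbf{H}$ at which it exists. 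Minimality of $\mathbf{C}$ serves only to make $\boldsymbol{\xi}_N$ well defined.
\item Your explicit one-sided chain rule at $\sigma=0$ replaces the paper's informal step $\Phi_\sigma\approx\exp(\mathcal{S}(\boldsymbol{\xi}_\Phi)\sigma)\mathbf{H}$.
\end{itemize}

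What the paper's route gives in return is a direct statement that moving along $\Phi$ strictly decreases the Lyapunov function $D$ itself. That is the intuition reused in the escape argument of \Cref{thm:convergence-vector-field}(ii).
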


\begin{proof}
    Let $\mathbf{H} \notin \mathcal{C} \cup \mathcal{P}$. Choosing $\mathbf{V} = \mathbf{H}$, $\mathbf{W} = \mathbf{H}_d(s^*) = \mathbf{H}^*$, and using the property of chainable functions:
    \begin{align*}        
        \begin{split}
             &D(\mathbf{H}) = \widehat{D}(\mathbf{H},\mathbf{H}^*) =  \\
             &\widehat{D}\bigl(\mathbf{H},\Phi(\sigma,\mathbf{H},\mathbf{H}^*)\bigr){+} \widehat{D}\bigl(\Phi(\sigma,\mathbf{H},\mathbf{H}^*),\mathbf{H}^*\bigr).        
        \end{split}
    \end{align*}
Now, $\widehat{D}\bigl(\Phi(\sigma,\mathbf{H},\mathbf{H}^*),\mathbf{H}^*\bigr) \geq D\bigl(\Phi(\sigma,\mathbf{H},\mathbf{H}^*)\bigr)$. Thus:
\begin{align*}
    D(\mathbf{H})-D\bigl(\Phi(\sigma,\mathbf{H},\mathbf{H}^*)\bigr) \geq \widehat{D}\bigl(\mathbf{H},\Phi(\sigma,\mathbf{H},\mathbf{H}^*)\bigr).
\end{align*}
Divide both sides by $\sigma >0$ and take the limit as $\sigma \rightarrow 0^+$. Since $\mathbf{H} \not \in \mathcal{C}$, $\mathbf{H} \not = \mathbf{H}^*$, and, using \Cref{def:locallylinear}:
\begin{align}
\label{eq:impineq}
 \lim_{\sigma \rightarrow 0^+} \frac{D(\mathbf{H})-D\bigl(\Phi(\sigma,\mathbf{H},\mathbf{H}^*)\bigr)}{\sigma} > 0.   
\end{align}
Let $\boldsymbol{\xi}_\Phi(\sigma,\mathbf{H}) \triangleq \Xi[\Phi(\cdot,\mathbf{H},\mathbf{H}^*(\mathbf{H}))](\sigma)$. Then, from \Cref{lemma:very-important-fact}, it is true that $\Phi(\sigma,\mathbf{H},\mathbf{H}^*) \approx \exp(\SL[\boldsymbol{\xi}_\Phi]\sigma)\mathbf{H}$ for $\sigma \approx 0$. Consequently, the left-hand side of \eqref{eq:impineq} can be written as:
\begin{equation}
\label{eq:impineq2}
  -\lim_{\sigma \rightarrow 0^+} \left( \frac{D\bigl(\exp(\SL[\boldsymbol{\xi}_\Phi]\sigma)\mathbf{H}\bigr) - D(\mathbf{H})}{\sigma} \right).
\end{equation}
Using \Cref{def:Loperator}, this last limit, whenever it exists, is exactly $-\Lop[D](\mathbf{H}) \boldsymbol{\xi}_\Phi$. This limit exists when $\mathbf{H} \not \in \mathcal{C} \cup \mathcal{P}$. But, from \eqref{eq:final-equation-for-normal-component} and \Cref{def:normal-vector}, it can be seen that this limit is also $\boldsymbol{\xi}_N(\mathbf{H})^\top \boldsymbol{\xi}_\Phi$. Thus, from  \eqref{eq:impineq}, $\boldsymbol{\xi}_N(\mathbf{H})^\top \boldsymbol{\xi}_\Phi > 0$,  which implies that $\boldsymbol{\xi}_N(\mathbf{H}) \not= \mathbf{0}$. 
\end{proof}
Intuitively, this lemma means that from any point $\mathbf{H} \notin \mathcal{C} \cup \mathcal{P}$ we can always move in the direction of the closest point $\mathbf{H}^*$ along the path generated by $\Phi$ by applying the twist $\boldsymbol{\xi}_\Phi$ in \eqref{eq:derivative-H-SL-considered-system}. This motion decreases $D$ sufficiently to ensure that the derivative does not vanish, which means that $\boldsymbol{\xi}_N(\mathbf{H})$ cannot be zero, since $\dot{D} = -\boldsymbol{\xi}_N(\mathbf{H})^{\top} \boldsymbol{\xi}$ for any arbitrary twist $\boldsymbol{\xi}$ (\Cref{lemma:time-derivative-of-distance-function}).
\subsection{Convergence results}
\label{subs:conv-result}
With the established definitions, lemmas, and propositions, we can now prove the main result of this text. To do so, we first require the following lemma.

\begin{lemma} \label{lemma:xiNvanishing} If $\widehat{D}$ is an EE-distance function and $k_N$ is defined as in \eqref{eq:vector-field-proposition}, then $\lim_{\mathbf{H} \rightarrow \mathcal{C}} k_N(\mathbf{H}) \boldsymbol{\xi}_N(\mathbf{H}) = \mathbf{0}$.
\end{lemma}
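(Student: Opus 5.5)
The plan is to show that $k_N(\mathbf{H})\boldsymbol{\xi}_N(\mathbf{H})$ is a product of a factor tending to zero and a factor that stays bounded as $\mathbf{H}\to\mathcal{C}$. The first factor is $k_N$. By construction in \eqref{eq:vector-field-proposition}, $k_N$ is continuous on $G$ and vanishes on $\mathcal{C}$. Take any sequence (or any approach) $\mathbf{H}\to\mathbf{H}_0\in\mathcal{C}$; continuity then gives $k_N(\mathbf{H})\to k_N(\mathbf{H}_0)=0$. It therefore suffices to prove that $\|\boldsymbol{\xi}_N(\mathbf{H})\|$ is bounded in a neighbourhood of $\mathcal{C}$, at points where $\boldsymbol{\xi}_N$ is defined, i.e.\ $\mathbf{H}\notin\mathcal{C}\cup\mathcal{P}$.

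For the boundedness, recall from \Cref{def:normal-vector} that $\boldsymbol{\xi}_N(\mathbf{H})=-\Lop_{\mathbf{V}}[\widehat{D}](\mathbf{H},\mathbf{H}_d(s^*))^\top$. As $\mathbf{H}\to\mathbf{H}_0\in\mathcal{C}$, we have $D(\mathbf{H})=\widehat{D}(\mathbf{H},\mathbf{H}_d(s^*))\to 0$, so eventually $0<\widehat{D}(\mathbf{H},\mathbf{H}_d(s^*))<D_{\text{min},\mathcal{C}}$. On that set, property \ref{prop:Dhat-differentiability} guarantees that $\Lop_{\mathbf{V}}[\widehat{D}]$ exists and is continuous. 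The danger lies at the diagonal points $(\mathbf{V},\mathbf{W})=(\mathbf{H}_0,\mathbf{H}_0)$, where $\widehat{D}$ may fail to be differentiable, as in the Euclidean example. There, property \ref{prop:Dhat-differentiability} states that every directional limit of $\Lop_{\mathbf{V}}[\widehat{D}]$ exists and is bounded. The pairs $(\mathbf{H},\mathbf{H}_d(s^*(\mathbf{H})))$ converge to $(\mathbf{H}_0,\mathbf{H}_0)$: the second entry also tends to $\mathbf{H}_0$, because $\widehat{D}(\mathbf{H},\mathbf{H}_d(s^*))\to0$, positive definiteness holds, and the curve is compact since $[0,1]$ is compact. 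Hence the values of $\Lop_{\mathbf{V}}[\widehat{D}]$ along the approach stay bounded. If $\Lop_{\mathbf{V}}[\widehat{D}]$ happens to be differentiable at $(\mathbf{H}_0,\mathbf{H}_0)$, boundedness follows directly from continuity.

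The main obstacle is turning ``every directional limit exists and is bounded'' into a uniform bound on a neighbourhood. I would argue by contradiction. Suppose there is a sequence $\mathbf{H}_k\to\mathcal{C}$ with $\|\boldsymbol{\xi}_N(\mathbf{H}_k)\|\to\infty$. Passing to a subsequence using compactness of $\mathcal{C}$, we get $\mathbf{H}_k\to\mathbf{H}_0\in\mathcal{C}$, and a further subsequence makes the approach directions $(\mathbf{H}_k,\mathbf{H}_d(s^*_k))\to(\mathbf{H}_0,\mathbf{H}_0)$ converge. This subsequence then realises a directional limit of $\Lop_{\mathbf{V}}[\widehat{D}]$ which, by property \ref{prop:Dhat-differentiability}, exists and is bounded. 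That contradicts the divergence.

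With boundedness $\|\boldsymbol{\xi}_N\|\le M$ near $\mathcal{C}$ established, we get $\|k_N(\mathbf{H})\boldsymbol{\xi}_N(\mathbf{H})\|\le M|k_N(\mathbf{H})|\to0$, which proves the lemma. This also justifies leaving $\boldsymbol{\xi}_N$ undefined on $\mathcal{C}$, since the product extends continuously by $\mathbf{0}$ there.
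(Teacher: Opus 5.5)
Your strategy is the same as the paper's: $k_N\to 0$ by continuity, and $\boldsymbol{\xi}_N$ stays bounded by the bounded-directional-limits clause of property \ref{prop:Dhat-differentiability}. The paper simply asserts that boundedness from the clause and stops. The extra step you add to turn it into a uniform bound, however, does not work. Suppose a sequence $(\mathbf{H}_k,\mathbf{H}_d(s_k^*))\to(\mathbf{H}_0,\mathbf{H}_0)$ has approach directions that converge. You cannot conclude that $\Lop_{\mathbf{V}}[\widehat{D}]$ along that sequence tends to the directional limit in the limiting direction. A directional limit controls only approach along that fixed direction, not along sequences that become tangent to it. For example, on $\mathbb{R}^2$ take $g(x,y)=x^2y/\bigl((x^4+y^2)\sqrt{|y|}\bigr)$ with $g(x,0)=0$. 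Its limit along every ray into the origin is $0$. Yet along $(x,x^2)$ with $x\to 0^+$, whose direction converges to $(1,0)$, it equals $1/(2|x|)\to\infty$. So ``every directional limit exists and is bounded'', read literally, gives no bound on a neighbourhood, and the contradiction you aim for never arises.

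What the argument actually needs is that $\Lop_{\mathbf{V}}[\widehat{D}]$ is bounded on a punctured neighbourhood of each diagonal point $(\mathbf{H}_0,\mathbf{H}_0)$ with $\mathbf{H}_0\in\mathcal{C}$. The paper tacitly reads property \ref{prop:Dhat-differentiability} this way. The condition holds, for instance, if $\widehat{D}$ is locally Lipschitz near the diagonal. That is true for the Euclidean distance, where $\Lop_{\mathbf{V}}[\widehat{D}]$ is a unit row vector. It is also true for $\|\log(\mathbf{V}^{-1}\mathbf{W})\|_F$, since $\log=\Log$ is smooth near $\mathbf{I}$. Under that hypothesis, two pieces of your argument then give $\|\boldsymbol{\xi}_N\|\le M$ near $\mathcal{C}$: compactness of $\mathcal{C}$, and the convergence $\mathbf{H}_d(s^*(\mathbf{H}))\to\mathbf{H}_0$. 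The latter also uses continuity of $\widehat{D}$, which is not listed in \Cref{def:distance-D-hat-arbitrary-elements}. With that bound, the rest of your proof goes through.
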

\begin{proof}
    From \Cref{def:normal-vector}, when $\mathbf{H} \rightarrow \mathcal{C}$, the quantity $\Lop_{\mathbf{V}}[\widehat{D}](\mathbf{V},\mathbf{W})$ is evaluated when $\mathbf{V} = \mathbf{W}$ (i.e., $\mathbf{H} = \mathbf{H}_d(s^*)$). According to \Cref{def:distance-D-hat-arbitrary-elements}, this derivative does not necessarily exist, but all directional derivatives exist and are bounded. Since $k_N(\mathbf{H}) = 0$ when $\mathbf{H} \rightarrow \mathcal{C}$, this concludes the proof. 
\end{proof}
This lemma shows that the vector field in \eqref{eq:vector-field-proposition} remains well-defined when $\mathbf{H} \in \mathcal{C}$, even though $\boldsymbol{\xi}_N(\mathbf{H})$ is undefined at these points. Now, we present our theorem:
\begin{theorem}\label{thm:convergence-vector-field}
    Let $\widehat{D}$ be a \emph{left-invariant}, \emph{chainable} and \emph{locally linear} EE-distance function, and $\mathbf{H}_d(s)$ a proper parametrization for $\mathcal{C}$. Then, the closed loop autonomous dynamical system in  \eqref{eq:derivative-H-SL-considered-system} with the input given by  \eqref{eq:vector-field-proposition}, is such that 
    \begin{enumerate}[label=(\roman*)]
        \item the system's state converges either to $\mathcal{C}$ or $\mathcal{P}$;
        \item the set $\mathcal{P}$ is ``escapable'': there exists a policy of choosing arbitrarily small $\boldsymbol{\xi}$ every time $\mathbf{H} \in \mathcal{P}$ such that there exists a finite $t'$ for which $\mathbf{H}(t) \not \in \mathcal{P}$ for all $t \geq t'$;
        \item if the system's state converges to  $\mathcal{C}$, $\mathbf{H}$ traverses the target curve, i.e., $k_T\boldsymbol{\xi}_T$ is never zero, in which $k_T$ is defined as in \eqref{eq:vector-field-proposition}.
    \end{enumerate}
\end{theorem}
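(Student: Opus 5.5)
The plan is to use the EC-distance $D$ as a Lyapunov-like function and to analyse the closed loop separately on the three regions $G\setminus(\mathcal{C}\cup\mathcal{P})$, $\mathcal{P}$ and $\mathcal{C}$.

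\textbf{Part (i).} For $\mathbf{H}\notin\mathcal{C}\cup\mathcal{P}$, I combine \Cref{lemma:time-derivative-of-distance-function} with \Cref{def:normal-vector} to get $\dot D=-\boldsymbol{\xi}_N^\top\Psi$. Substituting \eqref{eq:vector-field-proposition} and using \Cref{propos:left-invariant-metric-induces-orthogonal} (left-invariance gives $\boldsymbol{\xi}_N^\top\boldsymbol{\xi}_T=0$) yields
\begin{align*}
\dot D = -k_N(\mathbf{H})\,\|\boldsymbol{\xi}_N(\mathbf{H})\|^2 .
\end{align*}
This is strictly negative off $\mathcal{C}\cup\mathcal{P}$, by $k_N>0$ there and \Cref{lemma:no-zero-xiN} (which uses chainability and local linearity). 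Hence $D$ is nonincreasing along trajectories while they avoid $\mathcal{P}$, and since $D\ge 0$, $D(\mathbf{H}(t))$ has a limit. To conclude that the state approaches $\mathcal{C}\cup\mathcal{P}$, I would use a LaSalle/Barbalat-type argument. Since $\Psi$ is bounded on sublevel sets (directional limits of $\Lop_{\mathbf{V}}[\widehat{D}]$ are bounded by \Cref{def:distance-D-hat-arbitrary-elements}), $\dot D$ is uniformly continuous away from $\mathcal{P}$. Therefore $k_N\|\boldsymbol{\xi}_N\|^2\to0$, which forces either $k_N\to0$ (approaching $\mathcal{C}$) or proximity to points where $\boldsymbol{\xi}_N$ is undefined or degenerate, i.e.\ $\mathcal{P}$. \Cref{propos:D-NO-local-minima} rules out spurious limit points off $\mathcal{C}$ where $D$ stalls, since such points would be local minima or critical points, which \Cref{lemma:no-zero-xiN} excludes outside $\mathcal{P}$. \Cref{lemma:xiNvanishing} guarantees that $\Psi$ is continuous up to $\mathcal{C}$, so the closed loop is well posed near the curve.

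\textbf{Part (ii).} When $\mathbf{H}\in\mathcal{P}$, I would exploit that $D$ is still a minimum of differentiable-in-$\mathbf{V}$ functions. Pick any minimizer $s^*_i$ and apply the twist $\boldsymbol{\xi}=\epsilon\,\boldsymbol{\xi}_\Phi$ along the path $\Phi(\cdot,\mathbf{H},\mathbf{H}_d(s_i^*))$. By chainability and local linearity, exactly as in the proof of \Cref{lemma:no-zero-xiN}, this strictly decreases $D$ at a first-order rate, for arbitrarily small $\epsilon$. Each excursion into $\mathcal{P}$ therefore produces a strict decrease of $D$. After escape, part (i) gives $\dot D<0$, so $D$ never increases. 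To argue that only finitely many visits occur, I would use the separation $D_{\text{min},\mathcal{C}}>0$ between $\mathcal{C}$ and $\mathcal{P}$ together with monotonic decrease of $D$. Once $D<D_{\text{min},\mathcal{C}}$, the trajectory can never reenter $\mathcal{P}$, because $D$ is nonincreasing and points of $\mathcal{P}$ have $D\ge D_{\text{min},\mathcal{C}}$. This yields a finite $t'$.

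\textbf{Part (iii).} On $\mathcal{C}$ we have $k_N=0$ by construction, so $\Psi=k_T\boldsymbol{\xi}_T=k_T\boldsymbol{\xi}_d(s^*)$. This is nonzero because $k_T>0$ and the parametrization is proper (\Cref{def:XId-twist-Hd-for-tangent}). Moreover, at $\mathbf{H}=\mathbf{H}_d(s)$ the dynamics $\dot{\mathbf{H}}=\SL(k_T\boldsymbol{\xi}_d)\mathbf{H}$ coincide with moving along $\mathbf{H}_d$ with $\dot s=k_T>0$. Hence $\mathcal{C}$ is invariant and traversed with positive speed.

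\textbf{Main obstacle.} I expect the hardest step to be the rigorous limit argument in (i), since $D$ is only almost-everywhere differentiable and $\Psi$ may be discontinuous at $\mathcal{P}$. To handle it, I would first treat trajectories staying a positive distance from $\mathcal{P}$ via Barbalat, and then fold the remaining cases into (ii).
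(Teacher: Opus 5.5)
Your parts (i) and (iii) follow the paper's argument; the Barbalat sketch and the invariance remark only add detail the paper leaves implicit. Part (ii), however, has a genuine gap where you conclude that $\mathcal{P}$ is visited only finitely often.

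First, the threshold you use is not valid. $D_{\text{min},\mathcal{C}}$ bounds $D$ from below only on $\mathcal{P}_2$: if $\mathbf{H}\in\mathcal{P}_2$, then $(\mathbf{H},\mathbf{H}_d(s^*))$ is a non-differentiability point of $\widehat{D}$ with $\widehat{D}>0$, hence $D(\mathbf{H})\ge D_{\text{min},\mathcal{C}}$. It says nothing about $\mathcal{P}_1$, the points with several closest points on the curve. In the Euclidean case the paper notes that $D_{\text{min},\mathcal{C}}$ may be taken to be $\infty$. Yet the center of a circular arc lies in $\mathcal{P}_1$ and has finite $D$, so the claim that points of $\mathcal{P}$ have $D\ge D_{\text{min},\mathcal{C}}$ is false. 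The quantity you need is $D_{\text{min},\mathcal{P}}=\min_{\mathbf{H}\in\mathcal{P}}D(\mathbf{H})$, which is what the paper uses. Its positivity, not that of $D_{\text{min},\mathcal{C}}$, is what separates $\mathcal{P}$ from $\mathcal{C}$.

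Second, and more importantly, a strict decrease of $D$ at each visit does not force $D$ below the correct threshold either. The entry values $D(\mathbf{H}(t_k))$ could decrease strictly to a limit $D_\infty\ge D_{\text{min},\mathcal{P}}$ while $\mathcal{P}$ is re-entered infinitely often. Neither part (i) nor your perturbation argument rules this out, because local linearity gives only a pointwise positive first-order rate whose constant depends on the point, with no uniformity over visits.

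The paper closes this step by asserting a uniform minimum decrease $\delta>0$ per visit. Then $D(\mathbf{H}(t_k))\le D(\mathbf{H}(t_0))-k\delta$ drops below $D_{\text{min},\mathcal{P}}$ after finitely many visits, after which $\dot D\le 0$ prevents re-entry. You need an ingredient of this kind. One way is to let each escape maneuver carry $\mathbf{H}$ a fixed small fraction $\sigma$ of the way along $\Phi(\cdot,\mathbf{H},\mathbf{H}_d(s_i^*))$. Writing $\Phi_\sigma$ for the resulting point, chainability and the minimality of $D$ give $D(\Phi_\sigma)\le\widehat{D}(\Phi_\sigma,\mathbf{H}_d(s_i^*))=D(\mathbf{H})-\widehat{D}(\mathbf{H},\Phi_\sigma)$. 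You must then bound $\widehat{D}(\mathbf{H},\Phi_\sigma)$ from below uniformly over $\mathcal{P}$. For the log-norm distance of \Cref{subs:explconst} it equals $\sigma D(\mathbf{H})\ge\sigma D_{\text{min},\mathcal{P}}$, which even gives geometric decay of the entry values.
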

\begin{proof}
    \textbf{Statement (i):} Consider $D$ as in \Cref{def:distance-D-element-curve} as a Lyapunov function candidate. According to \Cref{lemma:time-derivative-of-distance-function} and \Cref{def:normal-vector}, its derivative is given by $\dot{D} = -\boldsymbol{\xi}_N^{\top}\boldsymbol{\xi} $ for any $\mathbf{H} \notin \mathcal{C} \cup \mathcal{P}$. Substituting $\boldsymbol{\xi}=\Psi=k_N\boldsymbol{\xi}_N+k_T\boldsymbol{\xi}_T$ and using \Cref{propos:left-invariant-metric-induces-orthogonal}, we obtain 
\begin{align}
\label{eq:Ddotnegative}
    \frac{d}{dt}D(\mathbf{H})= -k_N(\mathbf{H})\|\boldsymbol{\xi}_N(\mathbf{H})\|^2 \;\forall\;\mathbf{H} \notin \mathcal{C} \cup \mathcal{P}
\end{align}
Note, however, that \Cref{lemma:xiNvanishing} guarantees that $\dot{D}$ will also vanish when $\mathbf{H} \in \mathcal{C}$. This fact, along with \eqref{eq:Ddotnegative} and \Cref{lemma:no-zero-xiN}, shows that $\dot{D} < 0$ for all $\mathbf{H} \not \in \mathcal{C} \cup \mathcal{P}$, $\dot{D} = 0$ for $\mathbf{H} \in \mathcal{C}$, and $D$ is non-differentiable when $\mathbf{H} \in \mathcal{P}$. This shows that the system either converges to $\mathcal{C}$ or $\mathcal{P}$.
\begin{figure}[t]
    \centering
    \includeinkscape[pretex=\tiny,width=.8\columnwidth]{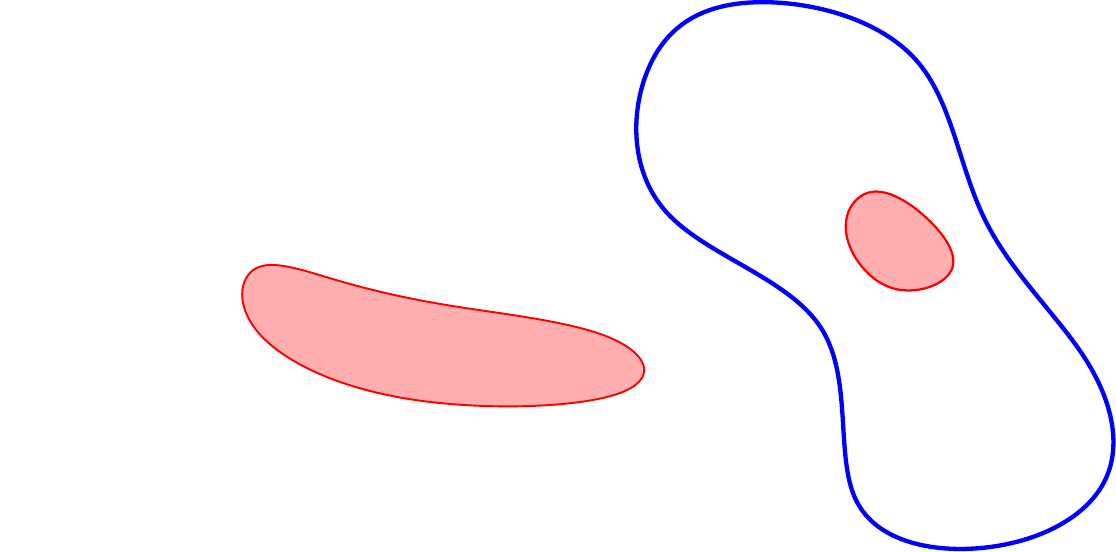}
    \caption{Depiction of Statement (ii) in the proof of \Cref{thm:convergence-vector-field}.}
    \label{fig:proof-escapable}
\end{figure}

\textbf{Statement (ii):} \Cref{propos:D-NO-local-minima} shows that $D$ will not have any local minima outside $\mathcal{C}$. Thus, for each $\mathbf{H} \in \mathcal{P}$, there exists an arbitrarily small perturbation generalized twist $\boldsymbol{\xi}_P(\mathbf{H})$\footnote{To be more precise, \Cref{propos:D-NO-local-minima} shows that one such twist is one that moves $\mathbf{H}$ in the direction of any of the possible $\mathbf{H}_d(s^*)$ through the path induced by $\Phi$.} such that the perturbed state $\mathbf{H}'$ satisfies $D(\mathbf{H}') < D(\mathbf{H})$. Furthermore, there is a non-zero minimum decrease $\delta$ that can be obtained at all steps. Let $t_k$ be the time at which $\mathbf{H}(t)$ enters $\mathcal{P}$ for the $k^{th}$ time, under the application of the controller and the corresponding perturbation policy. Then, we have $D(\mathbf{H}(t_{k+1})) < D(\mathbf{H}(t_k))$, with a decrease of at least $\delta$ at each step. Let $D_{\text{min}, \mathcal{P}} \triangleq \min_{\mathbf{H} \in \mathcal{P}} D(\mathbf{H})$, which is positive since $\mathcal{C} \cap \mathcal{P} = \emptyset$ and $D_{\text{min},\mathcal{C}}$ in \Cref{def:distance-D-hat-arbitrary-elements} is strictly positive. The decreasing sequence $D(t_k)$ must eventually fall below $D_{\text{min}, \mathcal{P}}$ for some finite $k$. From that point onward, since $\dot{D} \leq 0$, $\mathcal{P}$ will not be re-entered. This is illustrated in \Cref{fig:proof-escapable}.

\textbf{Statement (iii):} Traversal comes from the fact that, once in $\mathcal{C}$, the term $k_N \boldsymbol{\xi}_N$ vanishes (\Cref{lemma:xiNvanishing}), while $k_T \boldsymbol{\xi}_T$---the necessary twist to track the curve in a given sense (clockwise or counter-clockwise)---remains non-zero. This non-zero value, due to $k_T$ being positive and that $\mathbf{H}_d$ being a proper parametrization (see \Cref{def:XId-twist-Hd-for-tangent}), enforces the traversal of the curve.
\end{proof} 
Note that the convergence of our field is ``almost global'', since if initialized in the set \(\mathcal{P}\) and left undisturbed, the system will not reach \(\mathcal{C}\). However, as explained in the proof of \Cref{thm:convergence-vector-field}, there always exists a finite sequence of maneuvers (arbitrarily small control inputs $\boldsymbol{\xi}$) that allows the system to escape $\mathcal{P}$ and converge to the curve $\mathcal{C}$.

Note that the direction of traversal (clockwise or counterclockwise) is determined by the choice of parametrization $\mathbf{H}_d(s)$. For instance, using the reparametrization $\mathbf{H}_{d,\text{new}}(s) = \mathbf{H}_d(1-s)$ results in traversal in the opposite direction.

Result (ii) in \Cref{thm:convergence-vector-field} implies that if the system enters the ``problematic set'' $\mathcal{P}$, there always exists an arbitrarily small sequence of maneuvers that enables the system to eventually escape this set in finite time and never return. Furthermore, the following corollary provides the conditions 
under which we have asymptotic stability to the desired curve.
\begin{corollary}\label{cor:local-asymptotic-stability}
Let the assumptions of \Cref{thm:convergence-vector-field} hold and define $D_{\min,\mathcal{P}} = \min_{\mathbf{H}\in\mathcal{P}} D(\mathbf{H})$. If the initial state satisfies $D(\mathbf{H}(0)) < D_{\min,\mathcal{P}}$, then the closed-loop system converges asymptotically to the curve $\mathcal{C}$ and the problematic set $\mathcal{P}$ is never reached. In particular, $\mathcal{C}$ is locally asymptotically stable with a region of attraction that contains the sublevel set $\{\mathbf{H}\in G \mid D(\mathbf{H}) < D_{\min,\mathcal{P}}\}$.
\end{corollary}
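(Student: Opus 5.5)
The plan is a sublevel-set invariance argument, with $D$ as the Lyapunov function, followed by the same Lyapunov/LaSalle reasoning as in Statement (i) of \Cref{thm:convergence-vector-field}. Write $D_0 \triangleq D(\mathbf{H}(0)) < D_{\min,\mathcal{P}}$ and let $\Omega \triangleq \{\mathbf{H}\in G \mid D(\mathbf{H}) \le D_0\}$. By construction $\Omega\cap\mathcal{P}=\emptyset$.

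\textbf{Step 1: invariance of $\Omega$.} I will show that $t\mapsto D(\mathbf{H}(t))$ is non-increasing as long as $\mathbf{H}(t)\in\Omega$. On $\Omega\setminus\mathcal{C}$, \Cref{lemma:time-derivative-of-distance-function}, \Cref{propos:left-invariant-metric-induces-orthogonal} and \eqref{eq:Ddotnegative} give $\dot D = -k_N\|\boldsymbol{\xi}_N\|^2 \le 0$. On $\mathcal{C}$, $D=0$ is already the global minimum (\Cref{propos:D-NO-local-minima}). There, \Cref{lemma:xiNvanishing} shows that the vector field reduces to $k_T\boldsymbol{\xi}_T$, and by left-invariance this twist moves $\mathbf{H}$ along $\mathcal{C}$, so $D$ stays zero.

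To rule out leaving $\Omega$, suppose there is a first time $t_1$ with $D(\mathbf{H}(t))>D_0$ just after $t_1$. Then $\mathbf{H}(t_1)\in\Omega$, so $\mathbf{H}(t_1)\notin\mathcal{P}$. If $\mathbf{H}(t_1)\notin\mathcal{C}$, then $D$ is differentiable near $\mathbf{H}(t_1)$. This holds because $\mathcal{P}$ is closed, or at least separated from $\Omega$ by the strict inequality and continuity of $D$. Hence $\dot D\le 0$ near $t_1$, a contradiction. If $\mathbf{H}(t_1)\in\mathcal{C}$, then $D_0\ge 0 = D(\mathbf{H}(t_1))$ and nearby points are off $\mathcal{C}\cup\mathcal{P}$, where $\dot D\le0$ again, giving the same contradiction. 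Hence $\mathbf{H}(t)\in\Omega$ for all $t\ge0$, and $\mathcal{P}$ is never reached.

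\textbf{Step 2: convergence.} $D(\mathbf{H}(t))$ is non-increasing and bounded below by $0$, so it converges to some $D_\infty\ge0$. If $D_\infty>0$, the trajectory stays in the set $\{D_\infty\le D\le D_0\}$, which avoids both $\mathcal{C}$ and $\mathcal{P}$. On this set $k_N>0$ and $\boldsymbol{\xi}_N\ne\mathbf{0}$ by \Cref{lemma:no-zero-xiN}. I then need a uniform bound $\dot D\le -c<0$ there, which would force $D\to-\infty$, a contradiction. Therefore $D_\infty=0$, which is convergence to $\mathcal{C}$.

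\textbf{Step 3: local asymptotic stability.} Stability in the Lyapunov sense follows from the invariance of the sublevel sets $\{D\le \varepsilon\}$ for every $\varepsilon< D_{\min,\mathcal{P}}$, established in Step 1 with $D_0$ replaced by $\varepsilon$. Together with attractivity from Step 2, this gives local asymptotic stability with the stated region of attraction.

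\textbf{Main obstacle.} The hardest point is the uniform negativity bound in Step 2. On a non-compact $G$ such as $\text{SE}(3)$, the band $\{D_\infty\le D\le D_0\}$ is compact only if sublevel sets of $D$ are bounded. I would argue this from compactness of $\mathcal{C}$ (the image of $[0,1]$) and left-invariance of $\widehat{D}$, and then use continuity of $k_N\|\boldsymbol{\xi}_N\|^2$ on this compact set away from $\mathcal{P}$, which is guaranteed by the differentiability in \Cref{def:groupdiff}. If boundedness of the sublevel sets fails, I would instead invoke LaSalle's invariance principle on the $\omega$-limit set, assuming bounded trajectories.
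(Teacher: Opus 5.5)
Your Step 1 is the paper's argument. The paper's whole proof is that invariance observation ($\dot D\le 0$ off $\mathcal{P}$, so the state never reaches $\mathcal{P}$, where $D\ge D_{\min,\mathcal{P}}$), followed by a one-line appeal to statement (i) of \Cref{thm:convergence-vector-field}. You diverge in Step 2 by re-deriving convergence directly, and as written that step is not closed. The uniform bound $\dot D\le -c$ needs the band $\{D_\infty\le D\le D_0\}$ to be compact. Compactness of $\mathcal{C}$ plus left-invariance does not give this. Left-invariance yields $\widehat{D}(\mathbf{H},\mathbf{H}_d(s))=\widehat{D}(\mathbf{H}_d(s)^{-1}\mathbf{H},\mathbf{I})$, hence $\{D\le r\}=\mathcal{C}\cdot B_r$ with $B_r\triangleq\{\mathbf{Y}\in G\mid \widehat{D}(\mathbf{Y},\mathbf{I})\le r\}$. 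That only reduces the question to compactness of the ball $B_r$. This properness of $\widehat{D}$ is not among the assumed properties (EE-distance, left-invariance, chainability, local linearity), so you would have to derive it or add it as a hypothesis. The LaSalle fallback with bounded trajectories is likewise an extra assumption. For the explicit distance $\|\log(\mathbf{V}^{-1}\mathbf{W})\|_F$ it does hold. By the third property of special exponential groups, $B_r=\exp\bigl(\{\mathbf{A}\in\bar{\mathfrak{g}}^\circ \mid \|\mathbf{A}\|_F\le r\}\bigr)$, a continuous image of a compact set. Your Step 3 tacitly uses the same property to pass from invariance of the sets $\{D\le\varepsilon\}$ to stability in the topology of $G$: you need $\{D\le\varepsilon\}$ to shrink into any neighborhood of $\mathcal{C}$.

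The repair that matches the paper is to stop after Step 1 and invoke \Cref{thm:convergence-vector-field}(i). The state converges to $\mathcal{C}$ or to $\mathcal{P}$. The latter is excluded because the trajectory stays in $\{D\le D_0\}$, while $D\ge D_{\min,\mathcal{P}}>D_0$ on $\mathcal{P}$ and, by continuity of $D$, on its closure. Be aware that this relocates your obstacle rather than removing it. The proof of (i) goes from ``$\dot D<0$ off $\mathcal{C}\cup\mathcal{P}$'' to convergence without addressing compactness either. Your direct route has the merit of isolating exactly the missing ingredient, compact sublevel sets of $D$. The paper's route buys brevity by treating (i) as a black box.
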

\begin{proof}
Because $\dot{D} \le 0$ for all $\mathbf{H} \notin \mathcal{P}$ and $D$ decreases strictly outside $\mathcal{C}\cup\mathcal{P}$, the condition $D(\mathbf{H}(0)) < D_{\min,\mathcal{P}}$ guarantees that the state can never enter $\mathcal{P}$ (where the distance would be at least $D_{\min,\mathcal{P}}$). From \Cref{thm:convergence-vector-field}(i), the state must therefore converge to $\mathcal{C}$.
\end{proof}

\subsection{An explicit construction for the EE-distance}
\label{subs:explconst}
The proposed vector field strategy was developed for any connected matrix Lie group. In this section, we define a path generating function \(\Phi\) and EE-distance function specifically for a class of groups that we call here \emph{special exponential} Lie groups, which are among the most common in engineering applications.

A left-invariant metric $\mu$ on the tangent space of a Lie group induces a \emph{geodesic distance} $\widehat{D}_{\mu}(\mathbf{V},\mathbf{W})$ between points $\mathbf{V},\mathbf{W}\in G$ that possesses all the properties required in \Cref{def:distance-D-hat-arbitrary-elements} \cite[p. 650]{Gallier2020}. In this setting, one can define the path function $\Phi(\sigma,\mathbf{V},\mathbf{W})$ as the point obtained after traveling a fraction $\sigma$ of the geodesic from $\mathbf{V}$ to $\mathbf{W}$ (e.g., $\Phi(0.7,\mathbf{V},\mathbf{W})$ is the point $70\%$ along the geodesic). The geodesic distance itself could then serve as our function $\widehat{D}$. However, computing geodesic distances is often computationally expensive and, in general, lacks a closed-form expression.

For certain groups and appropriate choices of metric, this geodesic distance can be computed analytically. For example, when $\text{SO}(3)$ is endowed with the bi-invariant metric induced by the Frobenius norm, one obtains the simple formula
\begin{align}
    \widehat{D}_\mu(\mathbf{V},\mathbf{W}) = \|\log(\mathbf{V}^{-1}\mathbf{W})\|_F. \label{eq:geodesic-length-distance}
\end{align}

For more general groups $G$, the same expression $\|\log(\mathbf{V}^{-1}\mathbf{W})\|_F$ may no longer correspond to a geodesic distance---as is the case for $\text{SE}(3)$---yet it can still serve as a valid function $\widehat{D}$ under suitable conditions. This section establishes when such an extension is possible.

Let $G^\circ$ denote the subset of $G$ comprising matrices with no real negative eigenvalues, and let $\mathfrak{g}^\circ \subset \mathfrak{g}$ be the set of matrices whose eigenvalues lie in the \emph{open} strip $\{\lambda : -\pi < \text{Im}(\lambda) < \pi\}$. Let $\bar{\mathfrak{g}}^\circ \subset \mathfrak{g}$ be the set of matrices whose eigenvalues lie in the \emph{closed} strip $\{\lambda : -\pi \le \text{Im}(\lambda) \le \pi\}$. We now formalize the class of groups for which the metric in \eqref{eq:geodesic-length-distance} can be adopted.

\begin{definition}[Special Exponential Group]
\label{def:special-exponential-group}
    For $\mathbf{Z} \in G$, let
    \begin{align*}
        \exp^{-1}(\mathbf{Z}) \triangleq \{ \mathbf{Y} \in \mathfrak{g} \ | \ \exp(\mathbf{Y}) = \mathbf{Z} \} .
    \end{align*}
     A Lie group $G$ is called \emph{special exponential} if it satisfies:
     \begin{propertylist}
         \item $G$ is \emph{exponential}\footnote{Here we adopt the surjectivity definition from \cite{djokovic1995exponential}. The term is also used elsewhere to mean that the exponential map is a diffeomorphism \cite[p. 63]{Onishchik1994}.}, i.e., for every $\mathbf{Z}\in G$, 
         $\exp^{-1}(\mathbf{Z}) \neq \emptyset$;
         \item Every $\mathbf{Z} \in G$ has a preimage in the closed strip, i.e., $\exp^{-1}(\mathbf{Z}) \cap \bar{\mathfrak{g}}^\circ \neq \emptyset$;
         \item For any two elements $\mathbf{Y}_1, \mathbf{Y}_2 \in \exp^{-1}(\mathbf{Z}) \cap \bar{\mathfrak{g}}^\circ$, their Frobenius norms coincide: $\|\mathbf{Y}_1\|_F = \|\mathbf{Y}_2\|_F$.
     \end{propertylist}
    \hfill$\square$
\end{definition}
Not all groups are special exponential, but important groups such as $\text{SO}(m)$, $\text{SE}(m)$ and $\text{SGal}(m)$ are.

For a special exponential group $G$, consider the restriction of the exponential map to the open strip: $\exp: \mathfrak{g}^\circ \to G^\circ$. The special exponential property guarantees that this restricted map is \emph{surjective}. Moreover, a classical result in matrix theory establishes that it is also \emph{injective}. Both facts ensure \emph{bijectivity}, and thus there exists the inverse function $\Log: G^\circ \to \mathfrak{g}^\circ$, referred to as the \emph{principal logarithm} \cite[p. 320]{Gallier2020}.

Having established the principal logarithm $\Log$, we now seek to extend the notion of a logarithm to the entire group $G$. For elements outside $G^\circ$, the principal logarithm is not defined, and multiple choices of preimage in $\bar{\mathfrak{g}}^\circ$ may exist. A \emph{special branch} of the logarithm formalizes one such consistent choice.

\begin{definition}[Special Branch of the Logarithm]
\label{def:special-branch-logarithm}
Let $G$ be a special exponential matrix Lie group. A \emph{special branch} of the matrix logarithm is a function $\log: G \to \bar{\mathfrak{g}}^{\circ}$ satisfying:
\begin{propertylist}
    \item $\log(\mathbf{Z}) = \Log(\mathbf{Z})$ for all $\mathbf{Z} \in G^{\circ}$;
    \item Otherwise, $\log(\mathbf{Z})$ is a chosen element $\mathbf{Y} \in \exp^{-1}(\mathbf{Z}) \cap \bar{\mathfrak{g}}^{\circ}$
\end{propertylist}
    \hfill$\square$
\end{definition}
By fixing, according to a chosen rule, a specific $\mathbf{Y}(\mathbf{Z})$ for each $\mathbf{Z} \notin G^{\circ}$, we obtain a well-defined function $\log: G \to \bar{\mathfrak{g}}^{\circ}$. For example, when $G = \text{SO}(2)$, the only element not in $G^{\circ}$ is $-\mathbf{I}_{2 \times 2}$. There are two possible choices of $\mathbf{Y} \in \bar{\mathfrak{g}}^{\circ}$ such that $\exp(\mathbf{Y}) = -\mathbf{I}_{2 \times 2}$:
\begin{align*}
    \begin{bmatrix} 0 & -\pi \\ \pi & 0 \end{bmatrix} \quad \text{or} \quad \begin{bmatrix} 0 & \pi \\ -\pi & 0 \end{bmatrix}.
\end{align*}
Hence there are two possible special branches, $\log_1$ and $\log_2$, depending on which preimage is chosen for $\log(-\mathbf{I}_{2 \times 2})$. However, regardless of the chosen branch, the Frobenius norm of $\log(-\mathbf{I}_{2 \times 2})$ is the same, reflecting the third property in \Cref{def:special-exponential-group}. Indeed, because of this property and the fact that we will only be concerned with the norm of the logarithm, the choice of special branch can be made arbitrarily.

This extension of the logarithm leads to a key result that will be essential for establishing the properties of our distance.
\begin{lemma}\label{lemma:log-exp-log-equals-log}
    Let $G$ be a special exponential Lie Group. For a chosen special branch $\log$, the following holds: for all $\mathbf{X} \in G$ and all $r \in [0,1]$,
    \begin{align*}
        \Big\|\log\Bigl(\exp\bigl(r\log(\mathbf{X})\bigr)\Bigr)\Big\|_F = \big\|r\log(\mathbf{X})\big\|_F.
    \end{align*}
\end{lemma}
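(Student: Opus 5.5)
Set $\mathbf{Y} \triangleq \log(\mathbf{X}) \in \bar{\mathfrak{g}}^\circ$ and $\mathbf{Z} \triangleq \exp(r\mathbf{Y})$. The idea is to show that $r\mathbf{Y}$ is itself an admissible preimage of $\mathbf{Z}$ in the closed strip. Once that is done, the third property of \Cref{def:special-exponential-group} forces $\|\log(\mathbf{Z})\|_F = \|r\mathbf{Y}\|_F$, whichever special branch is used.

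The first step is to locate the eigenvalues of $r\mathbf{Y}$. If $\lambda$ is an eigenvalue of $\mathbf{Y}$, then $|\text{Im}(\lambda)| \le \pi$. The eigenvalues of $r\mathbf{Y}$ are $r\lambda$, and for $r \in [0,1]$ we get $|\text{Im}(r\lambda)| = r|\text{Im}(\lambda)| \le \pi$. Since $\mathfrak{g}$ is a vector space, $r\mathbf{Y} \in \mathfrak{g}$, and therefore $r\mathbf{Y} \in \bar{\mathfrak{g}}^\circ$. Trivially $\exp(r\mathbf{Y}) = \mathbf{Z}$, so $r\mathbf{Y} \in \exp^{-1}(\mathbf{Z}) \cap \bar{\mathfrak{g}}^\circ$.

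The second step shows that $\log(\mathbf{Z})$ also lies in $\exp^{-1}(\mathbf{Z}) \cap \bar{\mathfrak{g}}^\circ$, by splitting on \Cref{def:special-branch-logarithm}.
\begin{enumerate}[label=(\alph*)]
\item If $\mathbf{Z} \in G^\circ$, then $\log(\mathbf{Z}) = \Log(\mathbf{Z}) \in \mathfrak{g}^\circ \subset \bar{\mathfrak{g}}^\circ$, and $\exp(\Log(\mathbf{Z})) = \mathbf{Z}$ because $\Log$ is the inverse of $\exp$ restricted to $\mathfrak{g}^\circ$.
\item Otherwise, $\log(\mathbf{Z})$ is by definition a chosen element of $\exp^{-1}(\mathbf{Z}) \cap \bar{\mathfrak{g}}^\circ$.
\end{enumerate}
In both cases $\log(\mathbf{Z})$ and $r\mathbf{Y}$ are two elements of $\exp^{-1}(\mathbf{Z}) \cap \bar{\mathfrak{g}}^\circ$. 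The third property of \Cref{def:special-exponential-group} then gives $\|\log(\mathbf{Z})\|_F = \|r\mathbf{Y}\|_F$, which is the claim.

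The main point of care is the first step. Scaling by $r \in [0,1]$ must keep $r\mathbf{Y}$ inside the closed strip, which is why the range $r \le 1$ matters. The proof must also not claim the stronger identity $\log(\exp(r\mathbf{Y})) = r\mathbf{Y}$. That identity can fail when $r = 1$ and $\mathbf{Y}$ has eigenvalues with imaginary part exactly $\pm\pi$, where the branch choice may differ. The Frobenius-norm uniqueness property is exactly what handles this boundary case, so the argument deliberately compares norms only.
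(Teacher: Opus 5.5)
Your proof is correct and follows the paper's own argument. Both show that $r\log(\mathbf{X})$ and $\log\bigl(\exp(r\log(\mathbf{X}))\bigr)$ lie in $\exp^{-1}(\mathbf{Z})\cap\bar{\mathfrak{g}}^\circ$, and then invoke the third property of \Cref{def:special-exponential-group}. You simply spell out two steps the paper states briefly: the eigenvalue scaling $|\mathrm{Im}(r\lambda)|\le\pi$, and the case split showing that $\log(\mathbf{Z})$ is a preimage of $\mathbf{Z}$.
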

\begin{proof}
    Let $\mathbf{Y}_1 = \log\Bigl(\exp\bigl(r\log(\mathbf{X})\bigr)\Bigr)$ and $\mathbf{Y}_2 = r\log(\mathbf{X})$. By construction, $\exp(\mathbf{Y}_1) = \exp(\mathbf{Y}_2)$, since $\exp(\log(\mathbf{W})) = \mathbf{W}$ for any $\mathbf{W}$ in the domain of $\log$ (the logarithm is a right inverse of the exponential).

    Since $\log$ is a special branch, we have $\mathbf{Y}_1\in \bar{\mathfrak{g}}^{\circ}$. Furthermore, $\log(\mathbf{X}) \in \bar{\mathfrak{g}}^{\circ}$ by definition, and because $0 \leq r \leq 1$, it follows that $\mathbf{Y}_2 =  r\log(\mathbf{X}) \in \bar{\mathfrak{g}}^{\circ}$ as well. Thus, both $\mathbf{Y}_1$ and $\mathbf{Y}_2$ lie in $\exp^{-1}(\mathbf{Z})\cap\bar{\mathfrak{g}}^\circ$, in which  $\mathbf{Z} = \exp(\mathbf{Y}_1) = \exp(\mathbf{Y}_2)$. By the third property of a special exponential group (\Cref{def:special-exponential-group}), they must have the same Frobenius norm, which completes the proof.
\end{proof}

For the special exponential Lie groups and a choice of special branch, a path generating function $\Phi_\sigma\triangleq\Phi(\sigma, \mathbf{V}, \mathbf{W})$ can be defined as:
\begin{align}
    \Phi(\sigma, \mathbf{V}, \mathbf{W}) = \mathbf{V}\exp{\left(\log{\left(\mathbf{V}^{-1}\mathbf{W}\right)}\sigma\right)}, \label{eq:PHI-path-parameterizer-utilized-exp-of-log}
\end{align}
which is in accordance with \Cref{def:PHI-path-parameterizer}. Then, an EE-distance function is defined as:
\begin{align}
    \widehat{D}(\mathbf{V}, \mathbf{W}) = \|\log{(\mathbf{V}^{-1}\mathbf{W})}\|_F.\label{eq:distance-D-hat-utilized-log-norm}
\end{align}

Note that, due to the third property of a special exponential group in \Cref{def:special-exponential-group} , the choice of the (special) branch of logarithm in \Cref{def:special-branch-logarithm} is inconsequential for $\widehat{D}$, since regardless of the choice of logarithm for $\mathbf{V}^{-1}\mathbf{W}$, the Frobenius norm is the same. 

In order to invoke \Cref{thm:convergence-vector-field}, function $\widehat{D}$ in \eqref{eq:distance-D-hat-utilized-log-norm} needs to be an EE-distance (see \Cref{def:distance-D-hat-arbitrary-elements}) that is left-invariant (see \Cref{def:distance-left-invariant}), chainable (see \Cref{def:chainable-distance}) and locally linear (see \Cref{def:locallylinear}). Thus, we prove all of these properties in the following proposition.

\begin{proposition}\label{prop:EE-dist-prop-exponential-group-D-hat}
    Adopting the path $\Phi$ in \eqref{eq:PHI-path-parameterizer-utilized-exp-of-log}, the function $\widehat{D}$ in \eqref{eq:distance-D-hat-utilized-log-norm} is a left-invariant, chainable, and locally linear EE-distance.
\end{proposition}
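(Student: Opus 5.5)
I will verify the properties one at a time, writing $\mathbf{X} \triangleq \mathbf{V}^{-1}\mathbf{W}$ throughout.

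\textbf{Positive definiteness and left-invariance.} Since $\widehat{D}$ is a Frobenius norm, it is nonnegative. If $\mathbf{V}=\mathbf{W}$, then $\mathbf{X}=\mathbf{I}\in G^\circ$, so $\log(\mathbf{X})=\Log(\mathbf{I})=\mathbf{0}$. Conversely, if $\|\log(\mathbf{X})\|_F=0$, then $\log(\mathbf{X})=\mathbf{0}$ and $\mathbf{X}=\exp(\mathbf{0})=\mathbf{I}$, hence $\mathbf{V}=\mathbf{W}$. Left-invariance is immediate, because $(\mathbf{A}\mathbf{V})^{-1}(\mathbf{A}\mathbf{W})=\mathbf{V}^{-1}\mathbf{W}$.

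\textbf{Path generating function and chainability.} First, $\Phi$ in \eqref{eq:PHI-path-parameterizer-utilized-exp-of-log} is smooth in $\sigma$ and satisfies $\Phi(0)=\mathbf{V}$ and $\Phi(1)=\mathbf{V}\exp(\log\mathbf{X})=\mathbf{W}$, so it is a path generating function in the sense of \Cref{def:PHI-path-parameterizer}. For chainability, compute the two pieces separately:
\begin{align*}
\widehat{D}(\mathbf{V},\Phi_\sigma) &= \bigl\|\log\bigl(\exp(\sigma\log\mathbf{X})\bigr)\bigr\|_F = \sigma\|\log\mathbf{X}\|_F,\\
\widehat{D}(\Phi_\sigma,\mathbf{W}) &= \bigl\|\log\bigl(\exp(-\sigma\log\mathbf{X})\,\mathbf{X}\bigr)\bigr\|_F .
\end{align*}
The first equality uses \Cref{lemma:log-exp-log-equals-log} with $r=\sigma$. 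For the second, I will use the fact that $\log\mathbf{X}$ commutes with itself, which gives $\exp(-\sigma\log\mathbf{X})\mathbf{X}=\exp\bigl((1-\sigma)\log\mathbf{X}\bigr)$. Applying \Cref{lemma:log-exp-log-equals-log} again with $r=1-\sigma$ yields $(1-\sigma)\|\log\mathbf{X}\|_F$. The two pieces sum to $\widehat{D}(\mathbf{V},\mathbf{W})$, which is chainability.

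\textbf{Local linearity.} By the first computation, $\frac{1}{\sigma}\widehat{D}(\mathbf{A},\Phi(\sigma,\mathbf{A},\mathbf{B}))=\|\log(\mathbf{A}^{-1}\mathbf{B})\|_F$ exactly, for every $\sigma$. This value is finite, and it is positive whenever $\mathbf{A}\neq\mathbf{B}$ by positive definiteness.

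\textbf{Differentiability (the main obstacle).} This is the step I expect to require the most care. The plan is as follows.
\begin{enumerate}
\item On $G^\circ$, the principal logarithm $\Log$ is the inverse of the bijection $\exp:\mathfrak{g}^\circ\to G^\circ$. It is smooth because $\exp$ has invertible differential on $\mathfrak{g}^\circ$: the eigenvalues of $\mathrm{ad}_\mathbf{Y}$ avoid $2\pi i\mathbb{Z}\setminus\{0\}$ when the spectrum of $\mathbf{Y}$ lies in the open strip.
\item Consequently, $\|\Log(\mathbf{V}^{-1}\mathbf{W})\|_F$ is smooth wherever $\mathbf{V}^{-1}\mathbf{W}\in G^\circ$ and $\mathbf{V}\neq\mathbf{W}$, since the norm is smooth away from zero.
\item The complement, where $\mathbf{V}^{-1}\mathbf{W}$ has a real negative eigenvalue, is a closed, measure-zero (lower-dimensional) set. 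This gives differentiability almost everywhere.
\item Near $\mathbf{V}=\mathbf{W}$ we have $\mathbf{X}$ close to $\mathbf{I}$, so $\mathbf{X}\in G^\circ$. Hence there is a neighborhood $0<\widehat{D}<D_{\min,\mathcal{C}}$ on which differentiability holds; I would take $D_{\min,\mathcal{C}}=\pi$, since any $\log$ with norm below $\pi$ has spectrum in the open strip.
\item At the diagonal $\mathbf{V}=\mathbf{W}$, $\widehat{D}$ behaves like a norm of $\Log\mathbf{X}\approx\mathbf{X}-\mathbf{I}$. Its $\Lop$-derivatives are therefore of the form $\langle\log\mathbf{X},\cdot\rangle/\|\log\mathbf{X}\|_F$ composed with a bounded linear map, so every directional limit exists and is bounded.
\item At the negative-eigenvalue set, I expect the one-sided limits of $\Lop$ to remain bounded because the norm of $\log$ is Lipschitz-like there.
\end{enumerate}
That last claim is the least rigorous step. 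There I would argue case-wise, using the explicit structure (block eigenvalue $\pm i\pi$) and the third special-exponential property, which ensures the norm is branch-independent.
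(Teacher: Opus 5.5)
Your verification of positive definiteness, left-invariance, chainability and local linearity is essentially the paper's proof. Like the paper, you apply \Cref{lemma:log-exp-log-equals-log} with $r=\sigma$, rewrite $\mathbf{V}^{-1}\mathbf{W}=\exp(\log(\mathbf{V}^{-1}\mathbf{W}))$, merge the commuting exponentials, and apply the lemma again with $r=1-\sigma$.

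Your differentiability sketch goes beyond the paper, whose proof never checks requirement (ii) of \Cref{def:distance-D-hat-arbitrary-elements}; that requirement is examined only for $\mathrm{SE}(3)$, in \Cref{subs:se3-dist-EE}. Your steps 1--5 hold up. For example, $\|\log\mathbf{X}\|_F<\pi$ bounds the spectral radius of $\log\mathbf{X}$, which forces $\mathbf{X}\in G^\circ$. Step 6 remains a heuristic, as you acknowledge.
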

\begin{proof}
    We prove each property separately. 
    
    \textbf{Positive definiteness}: Positive definiteness can be easily observed.
    
    \textbf{Left-invariant}: The distance function is left-invariant since, for all $\mathbf{A} \in G$, $\widehat{D}(\mathbf{A}\mathbf{V}, \mathbf{A}\mathbf{W}) =  \|\log{(\mathbf{V}^{-1}\mathbf{A}^{-1}\mathbf{A}\mathbf{W})}\|_F$, which is clearly equal to $\widehat{D}(\mathbf{V}, \mathbf{W})$.

    \textbf{Chainable}: To prove the chainability property, we first substitute $\Phi$ by its expression \eqref{eq:PHI-path-parameterizer-utilized-exp-of-log} in $\widehat{D}(\mathbf{V}, \Phi_\sigma)$, which results in
    \begin{align*}
        \begin{split}
             \widehat{D}(\mathbf{V}, \Phi_\sigma) &= \|\log\bigl(\mathbf{V}^{-1}\mathbf{V}\exp(\log(\mathbf{V}^{-1}\mathbf{W})\sigma)\bigr)\|_F\\
             &=\sigma\|\log{\left(\mathbf{V}^{-1}\mathbf{W}\right)}\|_F,
        \end{split}
    \end{align*}
    using \Cref{lemma:log-exp-log-equals-log} with $\mathbf{Z}=\mathbf{V}^{-1}\mathbf{W}$ and $r=\sigma$, and the fact that $\sigma\ge0$. Now, using the fact that, by definition, $\mathbf{V}^{-1}\mathbf{W}=\exp(\log(\mathbf{V}^{-1}\mathbf{W}))$, we can express the following:
    \begin{align}
             \widehat{D}(\Phi_\sigma, \mathbf{W}) = \Bigl\|\log\Bigl(\exp\bigl(-\log(\mathbf{V}^{-1}\mathbf{W})\sigma\bigr)\!\!\!\!\!\! \underbrace{\mathbf{V}^{-1}\mathbf{W}}_{\exp(\log(\mathbf{V}^{-1}\mathbf{W}))}\!\!\!\!\!\!\Bigr)\Bigr\|_F.\nonumber
    \end{align}
    Note that $\log(\mathbf{V}^{-1}\mathbf{W})$ commutes with $-\log(\mathbf{V}^{-1}\mathbf{W})\sigma$, and thus we can express the product of exponentials as the exponential of the sum of the arguments:
    \begin{align}
        \widehat{D}(\Phi_\sigma, \mathbf{W}) {=} \Bigl\|\log\Bigl(\exp\bigl((1-\sigma)\log(\mathbf{V}^{-1}\mathbf{W})\bigr)\Bigr)\Bigr\|_F. \label{eq:proof-chainable-eedist-exp}
    \end{align}
    Invoking \Cref{lemma:log-exp-log-equals-log} with $\mathbf{Z}=\mathbf{V}^{-1}\mathbf{W}$ and $r=1-\sigma$, and using the fact that $0\le\sigma\le1$, expression \eqref{eq:proof-chainable-eedist-exp} reduces to 
    \begin{align*}
       \widehat{D}(\Phi_\sigma, \mathbf{W}) =(1-\sigma)\|\log{\left(\mathbf{V}^{-1}\mathbf{W}\right)}\|_F.
    \end{align*}
    Clearly, $\widehat{D}(\mathbf{V}, \Phi_\sigma) + \widehat{D}(\Phi_\sigma, \mathbf{W}) = \|\log(\mathbf{V}^{-1}\mathbf{W})\|_F = \widehat{D}(\mathbf{V}, \mathbf{W})$. 
    
    \textbf{Locally linear}: to prove that $\widehat{D}$ is locally linear, first note that, using \Cref{lemma:log-exp-log-equals-log} and the fact that $\sigma$ is non-negative, $\widehat{D}(\mathbf{V}, \Phi_\sigma) = \sigma\|\log{\left(\mathbf{V}^{-1}\mathbf{W}\right)}\|_F$, thus we have
    \begin{align*}
        \begin{split}
            \lim_{\sigma\to0^+}\frac{1}{\sigma}\widehat{D}(\mathbf{V}, \Phi_\sigma) &= \lim_{\sigma\to0^+}\frac{\sigma}{\sigma}\|\log{\left(\mathbf{V}^{-1}\mathbf{W}\right)}\|_F\\
            &
            = \|\log{\left(\mathbf{V}^{-1}\mathbf{W}\right)}\|_F > 0
        \end{split}
    \end{align*}
    as long as $\mathbf{V} \not= \mathbf{W}$.
\end{proof}
With the results in \Cref{prop:EE-dist-prop-exponential-group-D-hat}, we can invoke \Cref{thm:convergence-vector-field} and guarantee convergence and traversal along a curve in special exponential Lie groups. These results show that the EE-distance \eqref{eq:distance-D-hat-utilized-log-norm} is a general EE-distance function for our approach in scenarios involving special exponential Lie groups. Furthermore, this EE-distance can be specialized to particular Lie groups, as shown in \Cref{subs:se3-dist-EE} for the case of $\text{SE}(3)$.

\subsection{Particular case of Euclidean space}
\label{subs:partcase}

In this section, we present our results to the application in Euclidean space, which reduces our strategy to the one in \cite{Rezende2022}. Since our work relies on matrix Lie groups, we must first represent $\mathbb{R}^m$ by the Lie group $\text{T}(m)$, the \emph{$m$-dimensional translation group}. This group is a matrix Lie subgroup of $\text{SE}(m)$ in which the rotation part of the $n$-dimensional homogeneous transformation matrix (with $n=m+1$) is the identity matrix. To relate matrices in $\text{T}(m)$ with vectors in $\mathbb{R}^m$, we use the isomorphism $\mathcal{T}: \text{T}(m) \to \mathbb{R}^m$, where $\mathcal{T}(\mathbf{H})$ is obtained by extracting the first $m$ elements of the last column of $\mathbf{H}\in\text{T}(m)$. Matrices in this group have the following structure:
\begin{align*}
    \mathbf{H} = \begin{bmatrix}
        \mathbf{I} & \mathbf{p}\\ \mathbf{0} & 1
    \end{bmatrix} \in \text{T}(m),\ \mathbf{p}\in\mathbb{R}^m.
\end{align*}

Henceforth, we take as a basis of the Lie algebra of $\text{T}(m)$, $\mathfrak{t}(m)$, the matrices $\mathbf{E}_k$, $k \in \{1,2,\dots,m\}$, where $\mathbf{E}_k$ has all entries $0$ except for the $k^{th}$ entry of the last column, which is $1$. With this choice, the system \eqref{eq:derivative-H-SL-considered-system} reduces to the single integrator model used in \cite{Rezende2022}, as $\frac{d}{dt} \mathcal{T}(\mathbf{H}) = \boldsymbol{\xi}$.

We begin by showing that using the path generating function defined in \eqref{eq:PHI-path-parameterizer-utilized-exp-of-log} and the EE-distance defined in \eqref{eq:distance-D-hat-utilized-log-norm} renders the Euclidean distance between vectors as shown in \Cref{sec:adriano-review}. First, note that the exponential map in this case is given by $\exp(\mathbf{A})=\mathbf{A}+\mathbf{I},\, \forall\,\mathbf{A}\in\mathfrak{t}(m)$, and the logarithm is given by $\log(\mathbf{X})=\mathbf{X}-\mathbf{I},\, \forall\,\mathbf{X}\in\text{T}(m)$.

Let $\mathbf{V},\mathbf{W}\in\text{T}(m)$ such that $\mathcal{T}(\mathbf{V}) = \mathbf{v}\in\mathbb{R}^m$, $\mathcal{T}(\mathbf{W}) = \mathbf{w}\in\mathbb{R}^m$. The path generating function in \eqref{eq:PHI-path-parameterizer-utilized-exp-of-log} is given by
\begin{align}
    \begin{split}
        \mathbf{V}\exp{\bigl(\log{(\mathbf{V}^{-1}\mathbf{W})}\sigma\bigr)} 
        = \begin{bmatrix}
            \mathbf{I} & (1 - \sigma)\mathbf{v} + \sigma \mathbf{w}\\ \mathbf{0} & 1
        \end{bmatrix}. \label{eq:euclidean-adriano-path-matrix}
        \end{split}
\end{align}
Applying \eqref{eq:distance-D-hat-utilized-log-norm}, $\|\log{(\mathbf{V}^{-1}\mathbf{W})}\|_F$$= \|\mathbf{V}^{-1}\mathbf{W} - \mathbf{I}\|_F$. Note that $\mathbf{V}^{-1}\mathbf{W} - \mathbf{I}$ is a matrix whose only non-zero column is the last one, equal to $[\,(\mathbf{w} - \mathbf{v})^\top\quad 0\,]^\top$. This implies that $\|\mathbf{V}^{-1}\mathbf{W} - \mathbf{I}\|_F$$=\|\mathbf{w}-\mathbf{v}\|$, which shows that the EE-distance in \eqref{eq:distance-D-hat-utilized-log-norm} reduces to the Euclidean distance, as expected, i.e., $\widehat{D}(\mathbf{V},\mathbf{W})=\|\mathcal{T}(\mathbf{V}) - \mathcal{T}(\mathbf{W})\|$. Furthermore, the EC-distance is the same as defined in \Cref{sec:adriano-review}.

The normal component is given by the gradient of the EC-distance, thus $\boldsymbol{\xi}_{N}(\mathbf{H})= \Bigl(\mathcal{T}\bigl(\mathbf{H}_d(s^*)\bigr) - \mathcal{T}(\mathbf{H})\Bigr)/\|\mathcal{T}\bigl(\mathbf{H}_d(s^*)\bigr) - \mathcal{T}(\mathbf{H})\|$, i.e., the normalized vector that points from the current point to the nearest point on the curve.

In this scenario, the tangent component is precisely the tangent vector of the curve at the nearest point, so $\boldsymbol{\xi}_{T}=\left.\frac{d}{ds}\mathcal{T}(\mathbf{H}_d(s))\right|_{s=s^*}$.

The normal and tangent components are orthogonal, since the EE-distance is left-invariant, which we now show. Note that $\mathcal{T}(\mathbf{X}\mathbf{V}) = \mathcal{T}(\mathbf{X}) + \mathcal{T}(\mathbf{V})\;\forall\;\mathbf{V},\mathbf{X}\in\text{T}(m)$. Let $\mathbf{V}, \mathbf{W},\mathbf{X} \in \text{T}(m)$, thus, $\widehat{D}(\mathbf{X}\mathbf{V}, \mathbf{X}\mathbf{W}) = \|\mathcal{T}(\mathbf{X}\mathbf{V}) - \mathcal{T}(\mathbf{X}\mathbf{W})\| = \|\mathcal{T}(\mathbf{X}) + \mathcal{T}(\mathbf{V}) - \bigl(\mathcal{T}(\mathbf{X}) + \mathcal{T}(\mathbf{W})\bigr)\| = \widehat{D}(\mathbf{V}, \mathbf{W})$.

We now show that the EC-distance has no local minima outside the curve by showing the chainability property. Let $\Phi_{\sigma} = \Phi(\sigma,\mathbf{V},\mathbf{W})$. From \eqref{eq:euclidean-adriano-path-matrix}, we know that $\mathcal{T}(\Phi_{\sigma})$ = $(1 - \sigma)\mathcal{T}(\mathbf{V}) + \sigma \mathcal{T}(\mathbf{W})\;\forall\;\mathbf{V},\mathbf{W}\in \text{T}(m)$. Using the fact that $0\le \sigma \le 1$, we have
\begin{align}
        \begin{split}
            \widehat{D}(\mathbf{V}, \Phi_\sigma) &= \|\mathcal{T}(\mathbf{V}) - (1 - \sigma)\mathcal{T}(\mathbf{V}) - \sigma \mathcal{T}(\mathbf{W})\|\\
            &
            =\sigma\|\mathcal{T}(\mathbf{V})-\mathcal{T}(\mathbf{W})\|,  \label{eq:example-adriano-chainable-DvPhi}
        \end{split}
        \\
        \begin{split}
            \widehat{D}(\Phi_\sigma, \mathbf{W}) &= \|(1 - \sigma)\mathcal{T}(\mathbf{V}) + \sigma \mathcal{T}(\mathbf{W}) - \mathcal{T}(\mathbf{W})\|\\
            &
            =(1-\sigma)\|\mathcal{T}(\mathbf{V})-\mathcal{T}(\mathbf{W})\|.
        \end{split}
    \end{align}
    Summing both terms, results in $\widehat{D}(\mathbf{V}, \Phi_\sigma)+\widehat{D}(\Phi_\sigma, \mathbf{W})=\|\mathcal{T}(\mathbf{V})-\mathcal{T}(\mathbf{W})\|=\widehat{D}(\mathbf{V},\mathbf{W})$.

The EE-distance $\widehat{D}(\mathbf{V},\mathbf{W})=\|\mathcal{T}(\mathbf{V}) - \mathcal{T}(\mathbf{W})\|$ is locally linear. From \eqref{eq:example-adriano-chainable-DvPhi}, we know that $\widehat{D}(\mathbf{V}, \Phi_\sigma) = \sigma\|\mathcal{T}(\mathbf{V}) - \mathcal{T}(\mathbf{W})\|$, thus $\lim_{\sigma \to 0^+} \frac{1}{\sigma} \widehat{D}\bigl(\mathbf{V},\Phi_\sigma\bigr) = \lim_{\sigma \to 0^+} \frac{1}{\sigma}\sigma\|\mathcal{T}(\mathbf{V}) - \mathcal{T}(\mathbf{W})\| = \|\mathcal{T}(\mathbf{V}) - \mathcal{T}(\mathbf{W})\| > 0$ for $\mathbf{V} \not= \mathbf{W}$, and thus $\widehat{D}$ is locally linear.

These results allow us to invoke \Cref{thm:convergence-vector-field} and reduce our results to the ones in \cite{Rezende2022}. Furthermore, this implies that we can use the vector field to guide positions or $m$-dimensional configurations to converge to and traverse along a predefined curve.

An interesting aspect of the Euclidean space setting is the choice of distance functions. While \cite{Rezende2022} fixes the EE-distance as the Euclidean distance, the properties we require for the EE-distance function in this paper show that it does not necessarily have to be Euclidean. As long as the distance function satisfies left-invariance, chainability, and local linearity, it is suitable for our framework. For instance, for any norm \(\|\cdot\|_P\) on \(\mathbb{R}^n\) that is differentiable except at the origin, we can take \(\widehat{D}(\mathbf{V}, \mathbf{W}) = \|\mathcal{T}(\mathbf{V}) - \mathcal{T}(\mathbf{W})\|_P\).

\section{Experimental Validation}\label{sec:simulation}
To validate our control strategy, we conducted a real-world experiment using a Kinova\textsuperscript{\tiny\textregistered}~Gen3 robotic manipulator with 7 degrees of freedom. The goal was for the manipulator's end-effector to track a target curve in $\text{SE}(3)$. We used Kinova\textsuperscript{\tiny\textregistered}~Kortex\textsuperscript{\texttrademark}~API\footnote{\url{https://github.com/Kinovarobotics/Kinova-kortex2_Gen3_G3L}} to control the robot via joint velocities. Since, our control strategy provides linear and angular velocities, the following steps were necessary to perform the experiment:
\begin{itemize}
    \item Design a curve in $\text{SE}(3)$ that avoids both self-collisions and joint limit violations;
    \item Compute the twist $\boldsymbol{\xi}=[\mathbf{v}^\top\quad \boldsymbol{\omega}^\top]^\top$ generated by the vector field strategy;
    \item Compute the end-effector's linear velocity $\dot{\mathbf{t}}=\boldsymbol{\omega} \times \mathbf{t} + \mathbf{v}$ and let $\boldsymbol{\xi}'=[\dot{\mathbf{t}}^\top\quad \boldsymbol{\omega}^\top]^\top$, where $\mathbf{t}$ is the position of the end-effector;
    \item Compute the joint velocities $\dot{\mathbf{q}} = \bigl(\mathbf{J}(\mathbf{q})^\top\mathbf{J}(\mathbf{q})+\varepsilon\mathbf{I}\bigr)^{-1}\mathbf{J}(\mathbf{q})^\top\boldsymbol{\xi}'$, where $\mathbf{J}(\mathbf{q})$ is the geometric Jacobian of the end-effector, and $\varepsilon=\num{1E-4}$;
    \item Send the joint velocities to the manipulator and receive the current configuration data.
\end{itemize}
More details on these steps are provided in the subsequent sections.

\subsection{The EE-distance function}\label{subs:se3-dist-EE}
As mentioned, the group \text{SE}(3) is exponential, allowing us to utilize the construction from \Cref{subs:explconst}. However, instead of computing $\widehat{D}(\mathbf{V},\mathbf{W}) = \|\log(\mathbf{V}^{-1}\mathbf{W})\|_F$ through a generic algorithm to compute the matrix logarithm followed by applying the matrix norm, the structure of the group $\text{SE}(3)$ allows a more efficient approach. The algorithm for computing $\widehat{D}(\mathbf{V},\mathbf{W})$ is as follows:

\begin{itemize}
    \item Extract the rotation part $\mathbf{Q} \in \text{SO}(3)$ and the translation part $\mathbf{t} \in \mathbb{R}^3$ out of $\mathbf{V}^{-1}\mathbf{W}$.

    \item Compute\footnote{This comes from Rodrigues' rotation formula.} $u \triangleq \frac{1}{2}\bigl(\tr(\mathbf{Q})-1\bigr)$ and $v \triangleq \frac{1}{2\sqrt{2}}\|\mathbf{Q}{-}\mathbf{Q}^{\top}\|_F$. One can see that $u = \cos(\theta)$ and $v=\sin(\theta)$, in which $\theta \in [0, \pi]$ is the rotation angle related to $\mathbf{Q}$. Compute $\theta = \text{atan2}(v,u)$.

    \item Compute $\alpha \triangleq \frac{2-2u-\theta^2}{4(1-u)^2}$. Compute
    $\mathbf{M} \triangleq  \mathbf{I}(1-2\alpha) +  (\mathbf{Q}+\mathbf{Q}^{\top})\alpha$.

    \item Finally, $\widehat{D}= \sqrt{2\theta^2 + \mathbf{t}^\top \mathbf{M}\mathbf{t}}$.

\end{itemize}

As mentioned before, the third property of a special exponential group (\Cref{def:special-exponential-group}) ensures that $\widehat{D}$ does not depend on the particular choice of special branch of the logarithm. This is reflected in the algorithm above by the fact that no such choice appears in its steps.

Note that $\alpha$ is well-defined for all $\theta \in (0,\pi]$. When $\theta=0$, we just need to take the limit to obtain $\alpha=-1/12$. To identify the points of non-differentiability of $\widehat{D}$, it suffices to analyze the derivatives with the respect to the variables $\mathbf{Q}$ and $\mathbf{t}$. The analysis reveals that the only sources of non-differentiability occur when (type i) $\mathbf{Q}=\mathbf{Q}^\top$, $\mathbf{Q} \not= \mathbf{I}$ (i.e., at rotations of $\pi$ radians) or when (type ii) $\widehat{D}=0$. However, in both cases, the directional derivatives exist. Furthermore, $D_{\text{min},\mathcal{C}}$, as defined in \Cref{def:distance-D-hat-arbitrary-elements}, can be taken as $\widehat{D}$ when $\theta = \pi$  and $\mathbf{t} = \mathbf{0}$, which gives $D_{\text{min},\mathcal{C}} = \sqrt{2}\pi$. Thus, when $\widehat{D} < D_{\text{min},\mathcal{C}}$, it follows that $\theta < \pi$, avoiding the non-differentiable points of type i. Additionally, when $\widehat{D} > 0$, the non-differentiable points of type ii are also avoided. Therefore, the condition $0 < \widehat{D} < \sqrt{2}\pi$ guarantees that $\widehat{D}$ is differentiable, as required in \Cref{def:distance-D-hat-arbitrary-elements}.

To compute the terms of the vector field, we need to compute the derivative $\Lop_{\mathbf{V}}[\widehat{D}](\mathbf{H},\mathbf{H}_d(s^*))$. While this can be done analytically, we believe it is simpler to implement a numerical approach by evaluating the left-hand side of \eqref{eq:Leq} for $\boldsymbol{\zeta} = \mathbf{e}_i$ with a small $\epsilon=0.001$.

\subsection{The target curve}
To avoid joint limit violations, the considered curve $\mathcal{C}$, already discretized, was first designed in configuration space by:
\begin{align*}
    \mathbf{u} =& [1\ 0\ 1\ 0\ 1\ 0\ 1]^\top,\quad \mathbf{v} = [0\ 1\ 0\ 1\ 0\ 1\ 0]^\top\\
    \mathbf{q}_d(s) =& 
      \frac{\pi}{36}(\mathbf{u}+\mathbf{v}) + \pi\cos(2\pi s\slash N)\frac{\mathbf{u}}{\|\mathbf{u}\|} \\
    &
      + \frac{5\pi}{18}\bigl(\sin(2\pi s\slash N) + 1\bigr)\frac{\mathbf{v}}{2\|\mathbf{v}\|},
\end{align*}
where $N=5000$ is the number of sampled points. To compute the parameterized curve $\mathbf{H}_d(s)$ in $\text{SE}(3)$, we evaluated the forward kinematics of $\mathbf{q}_d(s)$ at each point. We then verified through simulation whether the resulting motion caused self-collisions, which led to the choice of the parameters shown above.

\subsection{The choice of S map}
Let $\boldsymbol{\xi} = [\xi_1 \ \xi_2 \ \xi_3 \ \xi_4 \ \xi_5 \ \xi_6]^\top$. The $\SL$ map used is:
\begin{align*}
    \SL[\boldsymbol{\xi}] = \begin{bmatrix}
    0 & -\xi_6 & \xi_5 & \xi_1 \\
    \xi_6 & 0 & -\xi_4 & \xi_2 \\
    -\xi_5 & \xi_4 &  0 & \xi_3 \\
     0 & 0 & 0 & 0
    \end{bmatrix}.
\end{align*}
The basis $\mathbf{E}_1, ... ,\mathbf{E}_6$ of the Lie algebra $\mathfrak{se}(3)$ can be obtained by $\mathbf{E}_k = \SL[\mathbf{e}_k]$, where $\mathbf{e}_k$ are the columns of the $6 \times 6$ identity matrix. Geometrically, the interpretation for this choice of basis is that $\boldsymbol{\xi}$ is the classical \emph{twist} in mechanics. More precisely, $\boldsymbol{\omega} \triangleq [\xi_4 \  \xi_5 \  \xi_6]^\top$ represents the $x$, $y$ and $z$ components of the angular  velocity  measured in the fixed frame, whereas $\mathbf{v} \triangleq [\xi_1 \  \xi_2 \  \xi_3]^\top$ represents the $x, y$ and $z$ velocities of the virtual point at the origin of the fixed frame, measured in this fixed frame. This is related to the object's linear velocity $\dot{\mathbf{t}}$  by $\dot{\mathbf{t}} = \boldsymbol{\omega} \times \mathbf{t} + \mathbf{v}$.

\subsection{Parameters}
\begin{figure*}[t]
    \centering
    \begin{subfigure}[b]{0.32\textwidth}
        \centering
        \includegraphics[width=\textwidth]{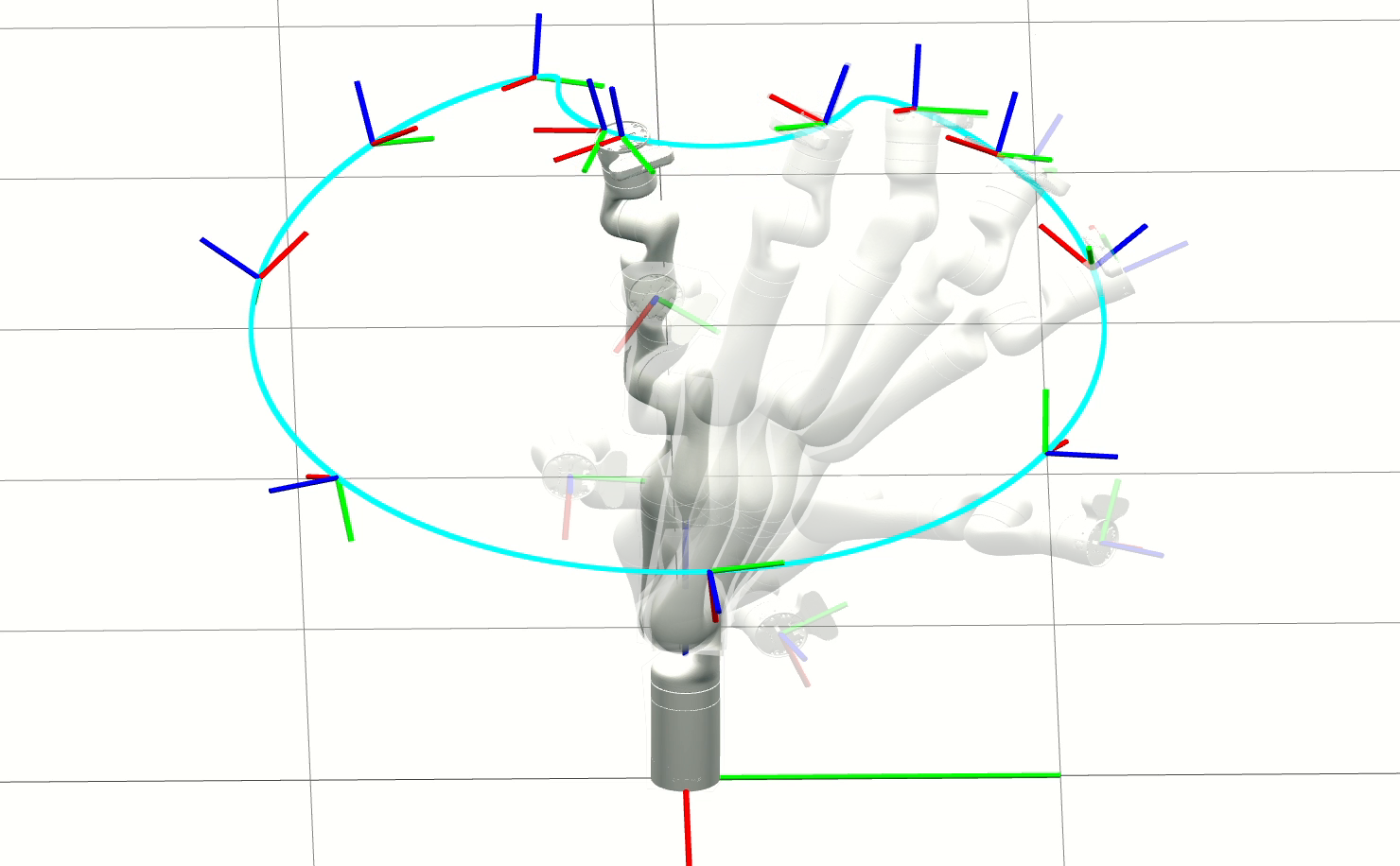} %
        \caption{$t\in[0, 50]s$}
        \label{fig:vfplot-first}
    \end{subfigure}
    \hfill
    \begin{subfigure}[b]{0.32\textwidth}
        \centering
        \includegraphics[width=\textwidth]{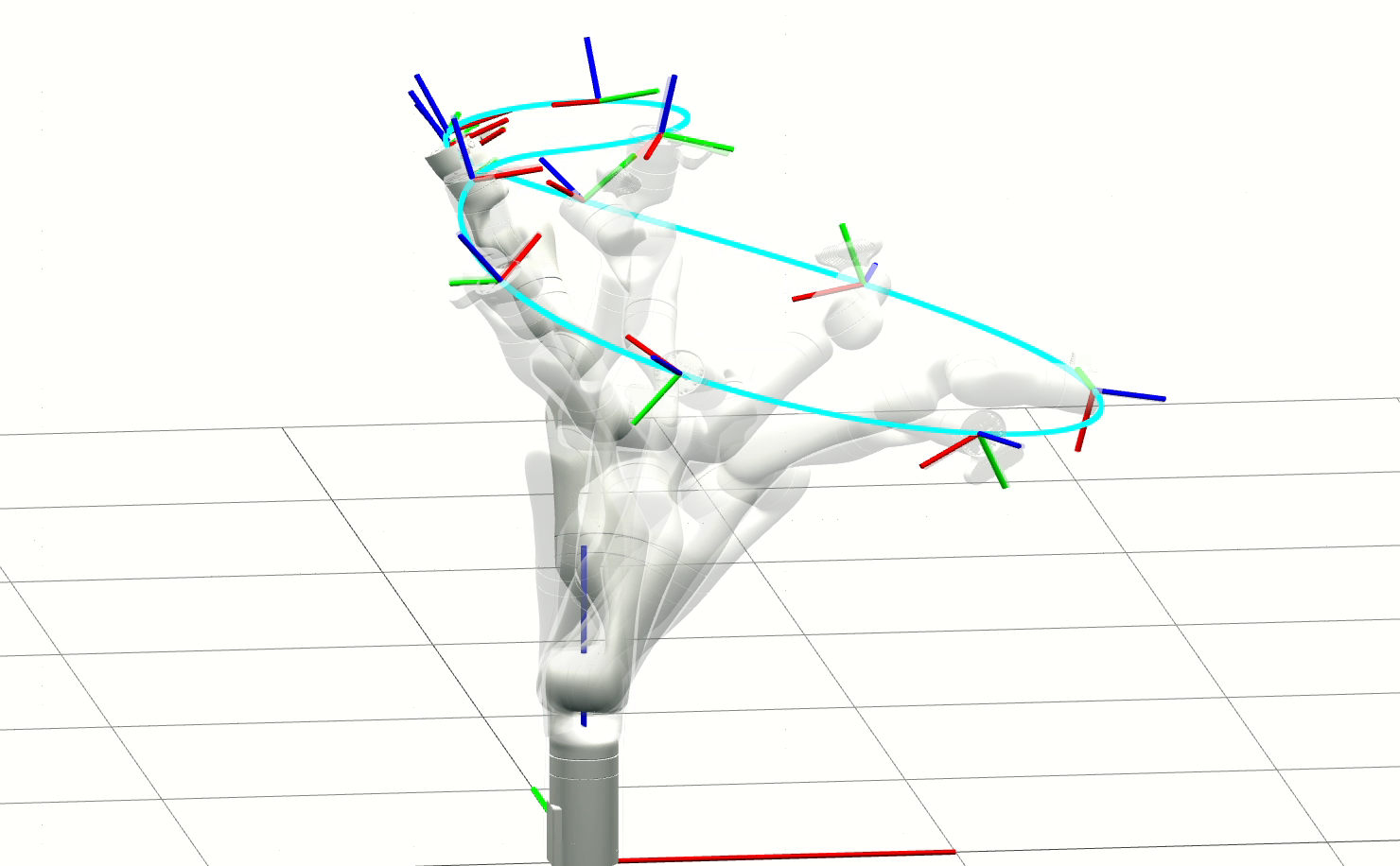} %
        \caption{$t\in[50, 100]s$}
        \label{fig:vfplot-second}
    \end{subfigure}
    \hfill
    \begin{subfigure}[b]{0.32\textwidth}
        \centering
        \includegraphics[width=\textwidth]{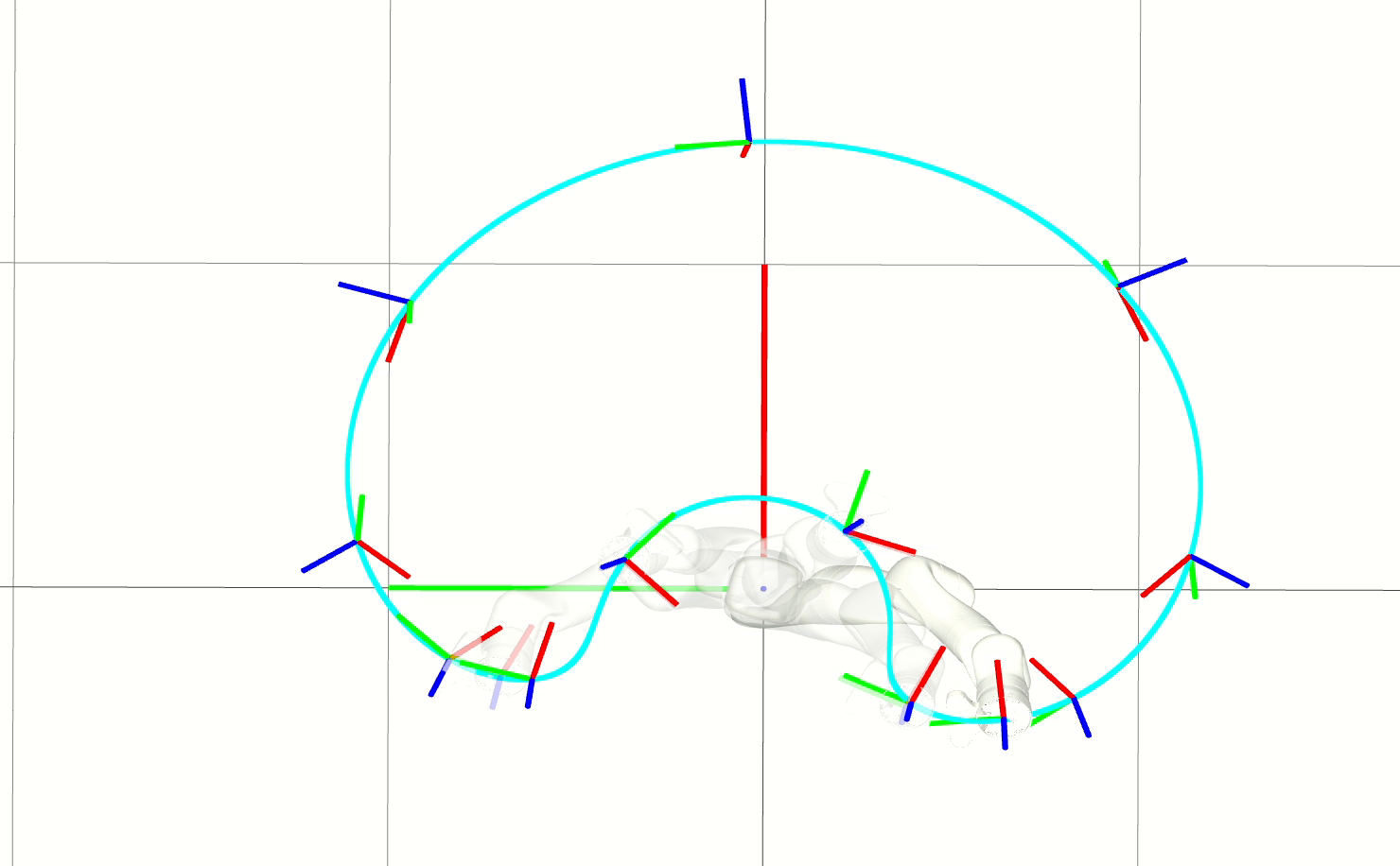} %
        \caption{$t\in[100, 150]s$}
        \label{fig:vfplot-third}
    \end{subfigure}
    \caption{The solid cyan line depicts the target curve, with target orientation frames shown by RGB axes. Intermediary configurations are shown as translucent representations, while the final configuration for each time period is opaque. The end effector orientation frame is shown by RGB axes.}
    \label{fig:vfplot-trajectory}
\end{figure*}
The optimization problem in \eqref{eq:optimization-problem-distance-definition-point-curve} was solved by determining the optimal $s=s^*$ through a brute-force approach. The curve discretization was also used to compute $\frac{d}{ds}\mathbf{H}_d(s)$ using finite differences, which is necessary for implementing $\boldsymbol{\xi}_T = \SL^{-1}(\frac{d\mathbf{H}_d}{ds}(s^*)\mathbf{H}_d(s^*)^{-1})$. The chosen gains were $k_N(D) = 0.1\tanh(0.75\sqrt{D})$ and $k_T(D) = 0.03\bigl(1 - \tanh(0.75\sqrt{D})\bigr)$. The experiment was conducted for $\qty{150}{\second}$. The initial condition $\mathbf{H}(0)$ was set by computing the forward kinematics of the initial configuration $\mathbf{q}(0) = [0 \ \frac{\pi}{18} \ 0 \ \frac{\pi}{12} \ 0 \ \frac{2\pi}{9} \ \frac{\pi}{6}]^\top$.
\subsection{Results}
\begin{figure}[t]
    \centering
    \includeinkscape[pretex=\tiny,width=\columnwidth]{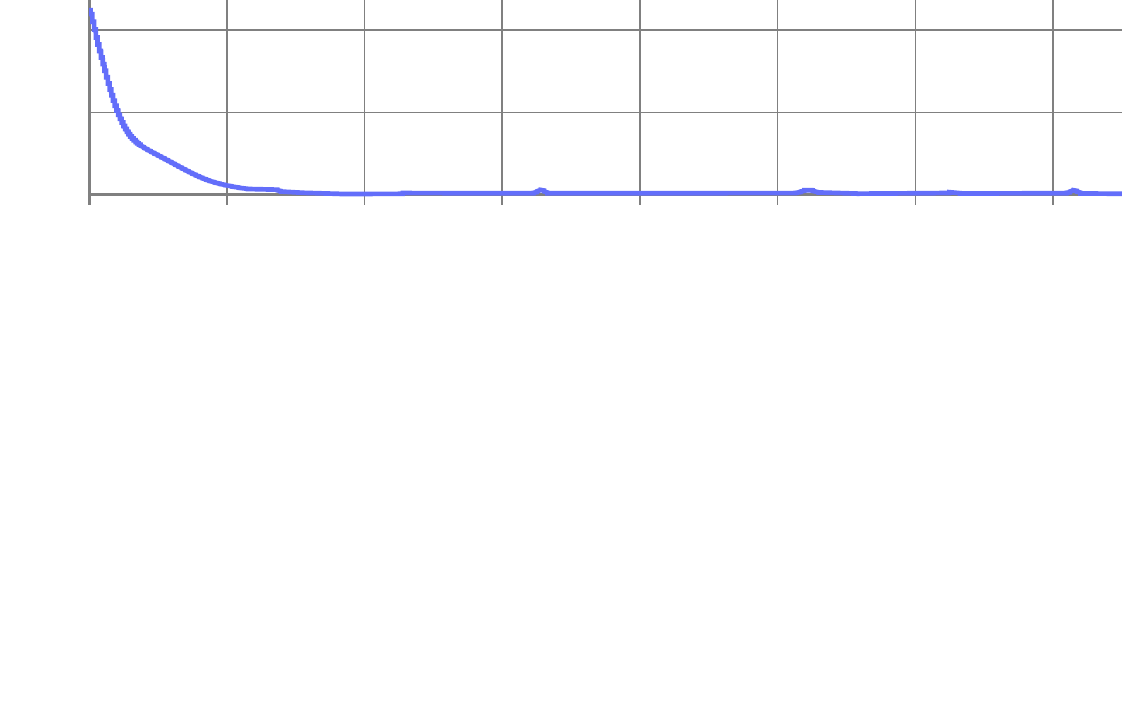}
    \caption{Value of EC-distance $D$, position error in centimeters and orientation error in degrees along time.}
    \label{fig:position-orientation-errors}
\end{figure}
We implemented the code in C++. On average, the computation of the vector field took $8.9\pm1.7$ milliseconds per iteration on a single core of an Intel i5-10300H @ 4.5 GHz CPU, with 8 GB of RAM. On average, $99.5\%$ of this time was spent solving the optimization problem in \eqref{eq:optimization-problem-distance-definition-point-curve}. Since the optimization process is highly parallelizable---allowing for the simultaneous computation of $\widehat{D}(\mathbf{H},\mathbf{H}_d(s))$ across different discretized $s$---the computational time could be significantly reduced by leveraging parallel architectures such as GPUs, SIMD, or multi-threading. 

The system's trajectory is illustrated in \Cref{fig:vfplot-trajectory}, where a 3D visualization was reconstructed from experimental data using the UAIbot
simulator\footnote{\url{https://github.com/UAIbot/UAIbotPy}}.
The values of the EC-distance function $D$ are shown in \Cref{fig:position-orientation-errors}. Additionally, \Cref{fig:position-orientation-errors} provides a more intuitive representation of the position error (in centimeters) and the orientation error (in degrees). The results confirm that the system successfully converges to the desired curve and traverses along it as expected. Once the system reaches the curve, minor deviations arise in the distance and orientation error data---evident as bumps at $65$, $105$, and $145$ seconds. This behavior stems from the challenging motion required by the curve, where the robot operates near a singular configuration and must execute frequent orientation adjustments. The experiment video is available at \href{https://youtu.be/fwRF_9K1_w0}{https://youtu.be/fwRF\_9K1\_w0}.

Although our implementation was carried out in C++, we provide a less efficient sample code in Python for clarity and accessibility, available at \url{https://github.com/fbartelt/SE3-example}.

\section{Conclusion}\label{sec:conclusion}
In this work, we presented a strategy for generating vector fields on connected matrix Lie groups that enforces convergence to and traversal along curves defined within the same group. This approach generalizes a recent work \cite{Rezende2022}, which was limited to parametric curve representations in Euclidean space. To achieve this broader applicability, we examined key properties of distance functions that maintain the desired features of the vector field approach. The three essential properties---left-invariance, chainability, and local linearity---not only support this generalization to matrix Lie groups but also permit flexibility in metric choice for Euclidean space applications. We validated our strategy through real-world robotic manipulator experiments, demonstrating successful end-effector curve tracking in $\text{SE}(3)$. Additionally, we developed an efficient algorithm for computing the vector field in this context. Finally, through the real experiment with the manipulator, we have demonstrated its applicability in real-world practical situations.

Future work will explore extensions to time-varying curves and investigate simpler distance functions with the required properties as in \Cref{thm:convergence-vector-field}. Investigating more general distance functions is also valuable, since the one presented is valid for special exponential Lie groups only. Another venue is considering underactuated systems, allowing for broader applicability of the strategy. Validation of this strategy in a real omnidirectional UAV is also of interest, along with the consideration of self-intersecting curves, which would enable more complex motions, such as lemniscate-like trajectories.

\bibliographystyle{ieeetr}
\bibliography{ref}

\vfill

\end{document}